\documentclass[nonatbib]{elsarticle}
\usepackage[OT4]{fontenc}
\usepackage[square,numbers]{natbib}
\bibliographystyle{unsrtnat}
\usepackage{graphicx}
\usepackage{verbatim}
\usepackage{todonotes}
\usepackage{amsmath}
\usepackage{amssymb}
\usepackage[ruled]{algorithm}
\usepackage{algorithmic}
\usepackage{amsthm}
 \usepackage{lineno}
\usepackage[utf8]{inputenc}
  \usetikzlibrary{snakes}
\usepackage{subcaption}
\usepackage{appendix}
  \usepackage{mathtools}
  \usepackage{multirow}

  \usepackage{thmtools}
  \usepackage{
    hyperref,        cleveref,      }
  \declaretheorem[name=Theorem,
  refname={Theorem,Theorems},
Refname={Theorem,Theorems}]{theorem}
  \declaretheorem[name=Lemma,
  refname={Lemma,Lemmas},
Refname={Lemma,Lemmas}]{lemma}
  \declaretheorem[name=Auxiliary Lemma,
  refname={Aux. Lemma,Aux. Lemmas},
Refname={Aux. Lemma,Aux. Lemmas}]{auxlemma}
  \declaretheorem[name=Example,
  refname={Example,Examples},
Refname={Example,Examples}]{example}

  \declaretheorem[name=Definition,
  refname={Definition,Definitions},
Refname={Definition,Definitions}]{definition}

\newcommand{\tabitem}{~~\llap{\textbullet\,}~~}




\newcommand{\vvarprod}{\ensuremath{\bigtimes}}

\definecolor{hanpurple}{rgb}{0.32, 0.09, 0.98}

\renewcommand{\algorithmiccomment}[1]{\textit{/* #1 */}}

\newcommand{\graph}[2]{\ensuremath{\mathcal{G}_{#2}({#1})}}
\newcommand{\context}[1]{\ensuremath{#1}}

\newcommand{\secuencia}{\ensuremath{S_{ij}}}
\newcommand{\z}{\ensuremath{x_Z}}

\newcommand{\uu}{\ensuremath{x_U}}

\newcommand{\PV}{\ensuremath{ij}}
\newcommand{\p}[1]{\ensuremath{{#1}^{\PV}}}
\newcommand{\DPij}[1]{\ensuremath{\mathcal{D}^{ij}(#1)}}

\newcommand{\DE}{\ensuremath{D^E_{fg}}}
\newcommand{\DEf}{\ensuremath{D^E_{f}}}

\newcommand{\DB}{\ensuremath{D_{fg}}}

\newcommand{\Xijof}[1]{\ensuremath{\mathcal{X}^{\PV}({#1})}}
\newcommand{\bigXijof}[1]{\ensuremath{\mathcal{X}^{\PV}\left({#1}\right)}}
\newcommand{\Xzoof}[1]{\ensuremath{\mathcal{X}^{01}({#1})}}
\newcommand{\Xij}{\ensuremath{\mathcal{X}^{\PV}}}
\newcommand{\Xkl}{\ensuremath{\mathcal{X}^{kl}}}
\newcommand{\val}[1]{\ensuremath{val(#1)}}

\newcommand{\scope}[1]{\ensuremath{S_{#1}}}

\newcommand{\MP}{\ensuremath{F}}
\newcommand{\MQ}{\ensuremath{G}}

\newcommand{\MA}{A}
\newcommand{\MB}{B}
\newcommand{\MC}{C}

\newcommand{\modelB}{{M}_2}
\newcommand{\modelA}{{M}_1}

\newcommand{\crossproductDE}[1]{\ensuremath{\Delta(#1)}}

\newcommand{\distcomplete}[2]{\ensuremath{d_C({#1},{#2})}}
\newcommand{\dist}[2]{\ensuremath{d({#1},{#2})}}
\newcommand{\disti}[2]{\ensuremath{\tilde{d}_\Ind({#1},{#2})}}

\usepackage{enumerate}

\newcommand{\PERP}{\mbox{\ensuremath{\perp\!\!\!\!\perp}}}
\newcommand{\NPERP}{\mbox{\ensuremath{\perp\!\!\!\!\perp\!\!\!\!\!\not~~~}}}
\newcommand{\indep}[2]{\ensuremath{#1 \PERP #2}}

\newcommand{\dep}[2]{\ensuremath{#1 \NPERP #2}}
\newcommand{\ci}[3]{\ensuremath{(\indep{#1}{#2} \mid #3)}}
\newcommand{\mi}[2]{\ensuremath{(\indep{#1}{#2})}}
\newcommand{\md}[2]{\ensuremath{(\dep{#1}{#2})}}

\newcommand{\cd}[3]{\ensuremath{(\dep{#1}{#2} \mid #3)}}
\newcommand{\assertion}[3]{\ensuremath{I({#1},{#2} \mid #3)}}
\newcommand{\assertionM}[2]{\ensuremath{I({#1},{#2})}}

\newcommand{\X}{\ensuremath{\mathcal{X}}}

\newcommand{\Fone}{\ensuremath{F^{1}}}
\newcommand{\Ftwo}{\ensuremath{F^{2}}}
\newcommand{\Gone}{\ensuremath{G^{1}(f)}}
\newcommand{\Gtwo}{\ensuremath{G^{2}(f)}}
\newcommand{\Gcomp}{\ensuremath{\Gone \cup \Gtwo}}

\newcommand{\tri}{\ensuremath{ \assertion{X_i}{X_j}{x_U,X_W} }}
\newcommand{\trip}{\ensuremath{ \assertion{X_i}{X_j}{x_Z} }}

\newcommand{\Tripletes}{\ensuremath{\mathcal{T}}}

\newcommand{\TripletesP}{\ensuremath{\mathcal{T}_{FC}}}
\newcommand{\IP}{\TripletesP}

\newcommand{\Ind}{\ensuremath{t}}

\newcommand{\DP}[1]{\ensuremath{\mathcal{D}(#1)}}
\newcommand{\DC}[1]{\ensuremath{\mathcal{D}_C(#1)}}

\newcommand{\TripletesUGM}{\ensuremath{\mathcal{T}^{UGM}}}

\newcommand{\cupij}{\cup^{ij}}
\newcommand{\bigcupij}{\bigcup^{ij}}

\DeclareRobustCommand{\thicklines}[1]{\tikz[baseline]{\draw[thick,double] (0,.5ex)--++(.5,0) ;}}
\DeclareRobustCommand{\wavylines}[1]{\tikz[baseline]{\draw[snake,decoration={snake,amplitude=.4mm,segment length=2mm,
post length=0mm,pre length=0mm}] (0,.5ex)--++(.5,0) ;}}

\usepackage{todonotes}

\begin{document}

\begin{frontmatter}

\title{Efficient comparison of independence structures of log-linear models}

 
\author[1,2]{Jan Strappa\corref{cor1}}
\ead{jstrappa@frm.utn.edu.ar}

\author[3,2]{Facundo Bromberg}
\ead{fbromberg@frm.utn.edu.ar}

\cortext[cor1]{Corresponding author}

\affiliation[1]{organization={Laboratorio de Investigación en Cómputo Paralelo/Distribuido (LICPaD) -- Universidad Tecnol\'ogica Nacional, Facultad Regional Mendoza},
addressline={Rodríguez 273},
postcode={CP 5500},
city={Ciudad de Mendoza, Mendoza},
country={Argentina}}

\affiliation[2]{organization={Consejo Nacional de Investigaciones Científicas y Técnicas (CONICET)},
addressline={Av. Ruiz Leal s/n Parque General San Martín},
postcode={CP 5500},
city={Ciudad de Mendoza, Mendoza},
country={Argentina}}

\affiliation[3]{organization={Laboratorio DHARMa -- Universidad Tecnol\'ogica Nacional, Facultad Regional Mendoza},
addressline={Rodríguez 273},
postcode={CP 5500},
city={Ciudad de Mendoza, Mendoza},
country={Argentina}}


\begin{abstract}
Log-linear models are a family of probability distributions which capture relationships between variables. 
They have been proven useful in a wide variety of fields such as epidemiology, economics and sociology.
The interest in using these models is that they are able to capture context-specific independencies, relationships that provide richer structure to the model.
Many approaches exist for automatic learning of the independence structure of log-linear models from data.
The methods for evaluating these approaches, however, are limited, and are mostly based on indirect measures of the complete density of the probability distribution.
Such computation requires additional learning of the numerical parameters of the distribution, which introduces distortions when used for comparing structures.
This work addresses this issue by presenting the first measure for the direct and efficient comparison of independence structures of log-linear models.
Our method relies only on the independence structure of the models, which is useful when the interest lies in obtaining knowledge from said structure, or when comparing the performance of structure learning algorithms, among other possible uses. 
We present proof that the measure is a metric, and a method for its computation that is efficient in the number of variables of the domain.
\end{abstract}

\begin{keyword}
Context-specific independence \sep
  Log-linear model \sep
  Markov networks \sep
Knowledge discovery \sep
Model selection \sep
Metric
\end{keyword}


\end{frontmatter}

\section{Introduction and Motivation}


This paper presents a metric for efficiently comparing the independence structures of two \emph{log-linear models}, a well-known representation of probability distributions over the assignments of discrete domains \cite{christensen2006log,AGRESTI02,haberman1973log,koller09,LAURITZEN96}.
These models are widely used for representing real-world distributions in many different disciplines, such as epidemiology \cite{Shah_2018,Yuan_2021,Panagiotakos_2021}, economics \cite{LUNDTOFTE20134256,Zio_o_2022} and sociology \cite{Raftery_2001,Schwartz_2016,Bucca_2019}.
Due to the complexity of data analysis and the rapidly growing availability of large quantities of data, interest in the automatic learning of these models from data has increased, becoming a subfield of machine learning \cite{DELLAPIETRA97,MCCALLUM03,LeealNIPS06,davis2010bottom,lowd14a,van2012markov}.
In this area, the problem of automatically learning a structure from data is often divided into structure learning and parameter learning.
The former involves finding a set of dependencies that best represent the relationships among the variables in the domain, while the latter consists in estimating the parameters that quantify the structure.
 The contribution of this work is inspired by the structure learning problem, which, in general, has two main objectives.
 Structures are either used as an intermediate step toward the construction of complete density models for inference tasks, such as the estimation of marginal and conditional probabilities (which is known as \emph{density estimation}) \cite{lowd14a,van2012markov,davis2010bottom}; or they are used as an interpretable model that shows the most significant interactions of a domain (known as \emph{knowledge discovery}) \cite{LeealNIPS06,haaren2013,claeskens2015,nyman2014context,pensar2017marginal}.
To date, there are no direct methods for the evaluation of the quality of structures of log-linear models, despite their importance for obtaining accurate predictions and, perhaps more importantly, their crucial role when trying to understand patterns present in data and to draw reliable conclusions from those patterns.
Our proposal offers a metric that can be used for several purposes, including the assessment of learning algorithms, knowledge discovery, and the design of new algorithms.
To the best of our knowledge, our proposal is the first of its kind, with no other structural distance metric in the literature for evaluating log-linear models in terms of their context-specific independencies.

In this work we make references to and draw inspiration from Markov networks, an interesting subset of the family of log-linear models, whose independence structure is an undirected graph, with the nodes representing the random variables of the domain, and the edges encoding direct probabilistic influence between the variables.
There are several methods for learning Markov network structures from data \cite{brombergmargaritis09b,schluter2014ibmap,pensar2017marginal,schluter2018blankets}. In undirected graphical models (UGMs), the absence of an edge indicates that the dependence could be mediated by some other subset of variables, corresponding to conditional independence between these variables.
The use of graphs as representation, however, has an important disadvantage: since it simply uses the basic concept of conditional and marginal independence, this representation may hide the occurrence of fine-grain structure such as \emph{context-specific independencies} \cite{boutilier1996context,hojsgaard2004statistical,koller09}, which are independencies that hold only in a subspace of the configurations of the conditioning set.
Log-linear models are more flexible than graphical models, since they are capable of encoding not only conditional independencies, but also context-specific independencies.
The log-linear representation of the structure is defined as a set of feature functions, each consisting of an assignment to some subset of the variables in the domain.
Given a set of features, the joint probability distribution is completely specified by the feature weights, one real number per feature, which are the numerical parameters of the log-linear model.

For the structure learning problem, there has been a surge of interest towards methods that construct a log-linear model by selecting features from a dataset, usually by performing a local search that incrementally adds or deletes features \cite{DELLAPIETRA97,LeealNIPS06,davis2010bottom,van2012markov,lowd2013learning,lowd14a}.
This approach defines structure learning as a feature selection problem, where the features represent dependencies between subsets of random variables.
All these contributions have only been assessed for the density estimation goal of learning, that is, the selection of models for inference tasks, by measuring the quality of learned models in terms of prediction performance.
The reason for this is that, historically, this family of models has rarely been used with the goal of knowledge discovery in mind, given that the interpretability of conventional log-linear models is burdensome, and reading independencies from them is not trivial.
However, this has been changing lately, mainly due to the fact that the theory of log-linear models for contingency tables~\cite{darroch1980markov} has been augmented by the introduction of a variety of representations that generalize graph-based undirected graphical models: Context-Specific Interaction models \cite{eriksen1999context,hojsgaard2004statistical}, Stratified-graphical models \cite{nyman2014stratified,nyman2014context}, and Canonical models \cite{ederaSchluterBromberg14}. 
Such contributions have shifted attention towards methods that focus on learning only the structure of these models, while the parameter learning step may be performed afterwards with existing techniques, or not performed at all, depending on the use case.
Although the focus of this work is on log-linear models for categorical data, it is interesting to mention other models that are able to represent context-specific interactions, for example, in domains based on ordinal data, such as Hierarchical Marginal Models \cite{nicolussiContextspecificIndependenciesHierarchical2019}; or causal models, such as CPT-tress \cite{boutilier1996context} and Labeled Directed Acyclic Graphs (LDAGs) \cite{PENSAR201691,coranderLogicalApproachContextspecific2019}.
In these representations, the semantics for reading conditional independence from graphs serves as inspiration for graphically expressing context-specific independencies, with the aim of representing a much wider class of models while maintaining their interpretability.

Due to all these relatively recent developments that focus on the structures of the models, there is an incresing need for tools that assess the quality of structure learning methods.
At present, this assessment is often carried out with the \emph{Kullback--Leibler divergence (KL-divergence)} \cite{kullback1951information,cover2012elements}.
In this context, the KL-divergence measures the similarity of the complete distributions encoded by each structure together with their parameters~\cite{ederaSchluterBromberg14,edera2014grow}.
Initially, divergences were the only means of computing statistical distance.
For model comparisons they have been used mainly in the process of disproving the null-hypothesis, in which one model differs from the other when the divergence equals zero. 
However, they present some limitations when the null-hypothesis holds, i.e., when the models are different, as they provide no sense of scale for their difference.
In that case, the statistical community uses \emph{distance functions} or  \emph{measures}\cite{DODGE2006}, a.k.a. \emph{metrics}, a notion stronger than divergence that satisfies not only nonnegativity and discrimination, but also \emph{symmetry} and the \emph{triangle inequality}.
Since the KL-divergence is a measure of divergence between distributions, it is an indirect procedure that requires learning the parameters in addition to the structure; therefore, the quality of structures is analyzed by evaluating the quality of the resulting full distribution.
This introduces some shortcomings.
The first disadvantage is that false positives and false negatives have a different impact on the quality of the distribution.
False negatives cannot be mitigated by the numerical parameters, because they add incorrect independence assumptions to the distribution that can invalidate statistical inference, leading to faulty conclusions.
Instead, false positives may be mitigated when learning the parameters, by setting some weights to zero to encode the independencies that were not found by structure learning.
Thus, KL is unable to accurately measure false positives in the structure, since these can be obscured by the parameters.
As a second disadvantage, it is important to note that the parameter learning process is sensitive to data scarceness; therefore, the KL measure might not be accurate when data is insufficient.
Both shortcomings are illustrated by a toy example in Section~\ref{app:kl}.
Since our method is computed directly over the structures, it addresses both problems: it allows for a separate analysis of false positives and false negatives, and is not influenced by data scarceness.

When learning structures for high-dimensional domains, the computation of
the KL-divergence becomes infeasible and some works report instead the Conditional Marginal Log-likelihood (CMLL)  \cite{LeealNIPS06,davis2010bottom,van2012markov,lowd2013learning,lowd14a}, which uses marginal probabilities in order to avoid the computation of the partition function that normalizes the distribution.
Although useful in practice, CMLL is an approximate method, and it also presents the first and second shortcomings mentioned above, because it also requires the task of learning the numerical parameters of the structure.
Lastly, as a means of understanding structural qualities without taking into account the parameters, a few works have used the number of features and average feature length \cite{van2012markov,ederaSchluterBromberg14,edera2014grow}.
Both are aggregated and indirect indicators and as such not very informative; moreover, they do not allow for trustworthy comparison between different structures.
A summary of the characteristics of all these methods is provided in Table~\ref{tab:measures}.

\begin{table}
  \footnotesize
  \centering
  \begin{tabular}{|p{2.4cm}|p{4cm}|p{4cm}|}
\hline
Measure & Advantages & Disadvantages \\\hline
KL-divergence
&
{\raggedright
\tabitem ease of implementation

\tabitem satisfies nonnegativity and discrimination
}
&
{\raggedright
\tabitem not a metric (symmetry, triangle inequality)

\tabitem unable to measure FPs in the structure

\tabitem sensitive to data scarceness

\tabitem infeasible in high dimensions
}
\\\hline
CMLL
&
\tabitem scalability
&
{\raggedright
\tabitem unable to measure FPs in the structure

\tabitem sensitive to data scarceness

\tabitem uses an approximation
}
\\\hline
Number of features
&
{\raggedright
\tabitem parameter-independent

\tabitem correlates to \# of dependencies
}
&
{\raggedright
\tabitem indirect measure
}
\\\hline
Average feature length
&
{\raggedright
\tabitem parameter-independent

\tabitem provides an idea of the density of the structure
}
&

\tabitem indirect measure
\\\hline
  \end{tabular}
  \caption{Characteristics of measures for the comparison of log-linear models}
  \label{tab:measures}
\end{table}

Our method works by measuring the number of structural differences that appear between two log-linear models, efficiently producing a confusion matrix that counts the true positives, false positives, true negatives and false negatives that appear in the second model, relative to the first one.
It is inspired on the structure comparison method of Markov networks: the Hamming distance of their graphs~\cite{brombergmargaritis09b,schluter2014ibmap,pensar2017marginal,schluter2018blankets}, i.e., the sum of false positives plus false negatives in terms of edges.
Although there is no unequivocal graph representation for log-linear models and therefore no straightforward generalization of the Hamming distance for this case, we will show that both measures take advantage of different properties of their respective independence representations in order to reduce the complexity of the comparison.
As will be discussed in detail later in Section~\ref{sec:structureComparisonLL}, a straightforward counting of dependencies and independencies for producing the confusion matrix for log-linear models presents an exponential computational cost due to the much larger space of possible structures when compared to graphical models.
The main advantage of our method is that it can efficiently compute the counts in the confusion matrix with respect to the number of variables.
Nevertheless, the efficiency w.r.t. the number of features is not guaranteed for a large number of features in the models, and it will be the subject of future work.

The contribution of this work has several potential applications. 
Most notably, this technique can be used to improve the quality of structures learned from data by providing the means for comparing different structure learning techniques over synthetic data produced by a known underlying distribution.
This is achieved by comparing the quality of the structures obtained from data by any given algorithm, measured as the distance from the learned structure to the structure of the underlying distribution.
Better structures have important advantages as they can improve both the quality and efficiency of parameter learning, leading to better density models for inference tasks.
In addition, although the complexity of the structures of log-linear models has been a limiting factor in the past, recent contributions have allowed for knowledge discovery tasks, by improving the interpretability of the models with some of the representations mentioned above.
Therefore, our method can also contribute to this goal, by providing a tool that can help select among different algorithms, or algorithm configurations, in order to find the best learning strategies specifically based on the quality of structures, which is not achievable with the state-of-the-art methods.
While these are the main benefits we identify for our contribution, there might be many other potential use cases.
For instance, another aspect of experimentation that could be explored is the possibility of generating random synthetic structures from the space of possible structures, using their structural distance as a guarantee that the sample is not biased.
At present, these synthetic experiments are usually comprised of a small number of handmade structures, designed in order to highlight the advantages of a particular method.
In addition, we see an interesting possibility of application in the incorporation of this measure as a means of assessing similarities among structures in search algorithms, where each solution in the search space would be equivalent to a complete log-linear model structure.
A similar example in the space of features is in \cite{davis2010bottom}, where a measure of similarity among features is used to generate the nearest candidate features w.r.t. a given feature from the current structure, in order to guide the search.
Lastly, another use case can be found in the design of new log-linear structure learning algorithms. If an algorithm poses the structure learning problem as a search in the space of possible structures (feature sets), then our metric could be used in different ways to evaluate similarity between structures, e.g. to find similar structures in a proximity search, or to maintain diversity by encouraging the generation of structures that differ from each other.
Lastly, it should be noted that our method is efficient in two ways: on the one hand, it is proven to be efficient in the number of variables of the models, when compared to a brute force approach; on the other hand, it avoids the complexity of parameter learning when used as a substitute for methods that compare the complete distributions. 

This paper is organized as follows: In the next section we present the notation and main concepts required for our analysis.
Section~\ref{sec:structureComparisonLL} establishes and justifies the basis for our approach, by introducing a brute-force method for the structural comparison of log-linear models, highlighting its sources of exponentiality, and providing a roadmap for tackling them.
Section~\ref{sec:approach} presents our approach for the efficient computation of the confusion matrix, together with a proof of correctness.
Section~\ref{sec:distance} introduces a distance measure directly computable from the confusion matrix, and provides proof it is indeed a distance metric by proving all four properties: \emph{nonnegativity}, \emph{discrimination}, \emph{symmetry}, and \emph{triangle inequality}.
Section~\ref{sec:summary} summarizes the main steps for the development of our contribution and for its computation. 
Section~\ref{app:kl} describes the example comparison of our metric against KL-divergence, the most common measure used in recent works concerning log-linear models structure learning.
Section~\ref{sec:discussion} presents some conclusions, open questions and some ideas to extend this work.
To simplify the presentation, the proofs of some lemmas have been removed from the main text and are presented in detail in~\ref{app:lemmas}.
Similarly, all auxiliary lemmas are described and proven in~\ref{app:auxiliary}.

\section{Background knowledge and Notation}\label{sec:notation}

This section introduces key concepts of probabilistic models and the notation used to denote them throughout the manuscript.
The first two parts, Sections~\ref{sec:randomVariables} and \ref{sec:loglinearmodels}, present basic definitions concerning random variables and log-linear models, together with some notations specific to this work.
The remaining three sections are more involved and present crucial aspects of our contribution.
Firstly, in Section~\ref{sec:independence}, we define different kinds of probabilistic independencies and reproduce important equivalences. 
Secondly, Section~\ref{sec:dependencyModels} explains the structure representation on which our contribution is conceptually based.
Finally, Section~\ref{sec:ugm} provides an overview of an analogous strategy used for the comparison of Undirected Graphical Models (UGMs), that serves as a partial inspiration for our method.

\subsection{Random Variables} \label{sec:randomVariables}

Let V be a finite set of indices for a
set of discrete random variables $X_V$. Lowercase subscripts denote single indices
(e.g., $X_i,X_j \in X_V$ where $i,j \in V$), while uppercase subscripts denote
subsets of indices (e.g., $X_A \subseteq X_V$ where $A \subseteq V$).
A variable $X_k$ can take a value from a finite set of configurations, denoted
by \val{X_k}. For example, for a binary variable $X_0$, $\val{X_0}=\{0,1\}$.
An arbitrary configuration in $\val{X_k}$ will be denoted in lowercase, e.g., $x_k$.

A set of variables $X_A$, $A\subseteq V$,  can take values from the cross-product of $\val{X_k}$, over all $k\in A$;
with individual configurations denoted by $x_A$.
The set of variables assigned in some configuration $x$ is called the
\emph{scope} of $x$, denoted  $S_x$; e.g., for $x=x_A$, $S_{x}=X_A$.

The space of all configurations for $X_V$ is denoted as $\X$.
A \emph{canonical context} $\context{x}$ is a complete assignment in a domain, i.e.,
$\context{x}\in \mathcal{X}$, and $\scope{\context{x}}=X_V$.
Even though canonical contexts are not used in this work in this manner,
we make extensive use of a similar concept: \emph{fully-contextualized (FC)
contexts} (or simply \emph{contexts}  when the meaning is clear), which are
configurations defined for a given pair of indices $(i,j)$ and consist of
assignments to all variables in $X_V \setminus \{X_i,X_j\}$. The set of all FC
contexts for one $(i,j)$ is denoted as \Xij. Its name stems from the fact that it is used in the sense of a completely contextualized conditioning set.

\subsection{Log-linear models}\label{sec:loglinearmodels}

 For a distribution to be considered an element of the log-linear family it must be structured through a set of feature functions $F = \{ f_i (X_{D_i} ) \}$,
 and specify a numerical value $\theta_i$ for each assignment $x_{D_i}$ of the subset of variables
 $X_{D_i}$, where $D_i \subseteq V$,
  resulting in the following generic functional form of distributions in the log-linear family:

 \begin{equation}\label{eq:jointloglinear}
 p (x) =  \frac{1}{Z(\theta)}\, exp \left( \sum_{f_i \in F} \theta_i f_i (x_{D_i})
 \right) ,
 \end{equation}
 where $Z(\theta)$  is the partition function that
 ensures  that  the  distribution  is normalized (i.e., all entries sum to $1$).

  In what follows, features are denoted by lowercase letters, such as $f$, $g$ or $h$. The value that variable $X_k \in X_V$ takes in feature $f$ is denoted by $X_k(f)$. For example, if $f=<X_0=1,X_2=0,X_3=1>$, then $X_2(f)=0$.
  Also, overloading the naming used for variable configurations, the set of variables that are assigned in a feature $f$ is called the $scope$ of $f$, and it is denoted by $\scope{f}$. For the last example, $\scope{f}=\{X_0,X_2,X_3\}$.

  Finally, we introduce the notation $\p{h}$ to denote a feature composed of the
  same assignments as feature $h$, except for those of $X_i$ and $X_j$, when
  $i,j$ is a pair of distinct indices such that $X_i,X_j \in \scope{h}$. For
  instance, if $V=\{0,\dots,5\}$, and if

  $$h = < X_0=2, X_1=1, X_2=1, X_5=0 >,$$

  then the same feature without the pair of assignments to $(X_0,X_2)$ will be

  $$h^{02} = < X_1=1, X_5=0 >.$$

\subsection{Independence} \label{sec:independence}
We use the notation $\ci{X_A}{X_B}{X_C}_p$ to denote that in the distribution $p$,
variables in set $X_A$ are (jointly) independent of those in $X_B$,
  conditioned on the values of the variables in $X_C$, for
 disjoint sets of indices $A$, $B$, and $C$.
This occurs if and only if the conditional distribution of $X_A$ conditioned on the values of variable $X_B$ and $X_C$ only depends on the values of $X_C$. Formally,
\begin{equation*}
        \ci{X_A}{X_B}{X_C}_p \iff p(x_A|x_B,x_C) = p(x_A|x_C),
\end{equation*}
for all $x_A \in val(X_A)$, $x_B \in val(X_B)$ and
$x_C \in val(X_C)$.
  The negation is $\cd{X_A}{X_B}{X_C}_p$, which denotes conditional dependence.
 $\assertion{X_A}{X_B}{X_C}_p$ denotes a query of conditional independence,
i.e., a question of whether the
independence \ci{X_A}{X_B}{X_C} holds or not; symbolically:
\begin{equation*}
        \assertion{X_A}{X_B}{X_C}_p \text{ is true } \iff \ci{X_A}{X_B}{X_C}_p.
\end{equation*}

A context-specific independence~\cite{boutilier1996context,hojsgaard2004statistical,koller09} between variables $X_A$ and $X_B$ given variables
$X_C$ and a set of configurations (context) $X_D=x_D$, where $D \cap A \cap B \cap C = \emptyset$, is defined as     \begin{equation*}
  \ci{X_A}{X_B}{X_C,x_D}_p \iff p(x_A|x_B,x_C,x_D) = p(x_A|x_C,x_D),
    \end{equation*}
    for all assignments $x_A \in val(X_A)$, $x_B \in val(X_B)$ and
$x_C \in val(X_C)$, whenever $p(X_B,X_C,x_D)>0$.
A context-specific dependence is denoted by $\cd{X_A}{X_B}{X_C,x_D}_p$;
and $\assertion{X_A}{X_B}{X_C,x_D}_p$ is a context-specific independence query.

From the above, it is easy to prove the following
    equivalence of context-specific independencies:

    \begin{equation}
      \ci{X_i}{X_j}{\uu,X_W} \equiv \forall x_W \in \val{X_W},
      \ci{X_i}{X_j}{\uu,x_W},
      \label{eq:implicationindependence}
    \end{equation}

    or, equivalently,

    \begin{equation}
      \cd{X_i}{X_j}{\uu,X_W} \equiv \exists x_W \in \val{X_W},
      \cd{X_i}{X_j}{\uu,x_W},
      \label{eq:implicationdependence}
    \end{equation}

    for all $X_i\neq X_j,\, U \cap W = \emptyset,\, U \cup W \subseteq
    {V}\setminus \{i,j\},\, \uu \in val(X_U)$.

One key result of probabilistic models consists of the separation of the independence semantics of the distribution into an explicit structure.
Interestingly, for log-linear distributions, this structure is completely encoded by its set of features $F$. 
In other words, the set of features $F$ is sufficient for determining dependence or independence, which is formalized by replacing $p$ as the subscript in the notation of independencies, e.g., $\ci{X_A}{X_B}{X_C,x_D}_F$, and dependencies, e.g.,  $\ci{X_A}{X_B}{X_C,x_D}_F$.

We shall start with some intuitions, to then proceed with the formalization of these concepts. For that, we first note that the numerical parameters $\theta_i$ in the logarithmic representation of Eq.~\ref{eq:jointloglinear} can take any real value.
Non-null terms, i.e., $\theta_i \neq 0$, indicate the presence of probabilistic interactions among the variables that appear together in the scope of a feature.
In contrast, when $\theta_i=0$ for some $i$, the corresponding feature ``disappears" from the model and, as a consequence, the interactions between the variables in its scope also vanish.
Thus, the notion of independence is related to setting certain parameters to $0$.
The set of all features $F$ in a log-linear model, allows for any marginal, conditional or context-specific independence query to be verified.

First, we will formalize this idea for the (strictly) context-specific case, and show how the other types of (in)dependencies can be deduced from it.

Given a context $x_U$, a (strictly) context-specific independence of the form $\ci{X_i}{X_j}{x_U}_F$ is verified, firstly, by considering all features in $F$ that are ``compatible'' with $x_U$, i.e.,

\begin{equation} \label{eq:FIndep}
F' = \left\{f\in F \,\middle\vert \, \forall u \in U, X_u \in S_f \implies X_u(x_U) = X_u(f) \right\};
\end{equation}
that is, for every variable $X_u$ in the context $x_U$ that is also in the context of $f$ (represented by $S_f$), it is the case that their assigned values in the context $x_U$ and in feature $f$ are equal.

Secondly, for the independence to hold, it must be verified that no  feature in $F'$ contains both $X_i$ and $X_j$ in its scope:

\begin{equation} \label{eq:fIndep}
\ci{X_i}{X_j}{x_U}_F \iff \forall f \in F',\, X_i \not\in
\scope{f} \vee X_j \not\in
\scope{f}.
\end{equation}

In order to read the most general context-specific independencies of the form
$\ci{X_i}{X_j}{x_U,X_W}_F$, the following
equivalence~\cite{hojsgaard2004statistical,ederaSchluterBromberg14,PENSAR201691}
can be used:

\begin{equation}\label{eq:ci-csi}
    \ci{X_i}{X_j}{X_{U\cup W}}_F \equiv
    \forall x_U \in val(X_U),\, \ci{X_i}{X_j}{x_U,X_W}_F.
\end{equation}

From the feature set representation, we begin by verifying a set of independencies where the
whole conditioning set is contextualized ($X_W=\emptyset$) and can later
aggregate any subset of variables in $U$ for which the equivalence in
Eq.~\ref{eq:ci-csi} holds,
which allows us to obtain the truth value for the most general type of queries
$\ci{X_i}{X_j}{x_U,X_W}_F$, which also includes all queries where $x_U=\emptyset$
(conditional independencies). Therefore, any conditional (in)dependence can be
read from the set of features of a log-linear model.
Lastly, marginal independencies $\mi{X_i}{X_j}$ are simply conditional
independencies among $X_i$ and $X_j$ that
hold for all conditioning sets, and can thus be deduced by verifying a set of
conditional independencies.

\subsection{Dependency models} \label{sec:dependencyModels}

Given the set of features $F$, it is then straightforward to read
(in)dependencies of any given pair of variables conditioned on any partially
contextualized conditioning set.
Alternatively, \cite{PEARL88} proposes an explicit representation of the (in)dependencies of the distribution: the \emph{dependency model}, an exhaustive listing of  all dependencies in the distribution.
For the case of the well-known \emph{undirected graphical models} (UGMs), for instance, the dependency model reports, for each pair of variables, whether they are dependent given each possible conditioning set of variables.
To formalize it, it is convenient to first define the set of all possible triplets of variable pairs and conditioning set for some given set $V$ of random variables,

\begin{equation}
  \TripletesUGM = \left\{\ \assertion{X_i}{X_j}{X_U} \ \ \middle\vert\
  \begin{array}{ll} i \neq j \in V,\\
      U \subseteq V\setminus\{i,j\}
  \end{array}\right\},\label{eq:tripletesUGM}
\end{equation}

to then define the dependency model of some undirected model $H$ as

\begin{equation}
    \mathcal{D}^{UGM}_C(H)=\left\{\ \assertion{X_i}{X_j}{X_U} \in \TripletesUGM \ \ \middle\vert\  \cd{X_i}{X_j}{X_U}_H \ \right\}.
       \label{eq:DUGMs}
\end{equation}
We use the superscript $UGM$ to clarify that this set can encode the (in)dependencies of UGMs, which excludes context-specific structure.
The subscript $C$ (for ``complete") is used as part of our notation of the dependency model to contrast the exhaustive definitions from an approximate version defined in the following section.

Log-linear distributions are more complex in that they can encode not only marginal and conditional dependencies, but also context-specific dependence assertions. This requires a generalization from the idea of a dependency model to a \emph{context-specific dependency model}. As for the undirected case, we formalize it in two steps, starting by the set of \emph{contextualized triplets}:
  {\label{eq:alltriplets}
\begin{equation*}
    \Tripletes=\left\{\ \tri \ \ \middle\vert \ \ \begin{array}{ll} i \neq j\in V; \\
    U, W \subseteq V\setminus\{i,j\}; \\
    U \cap W = \emptyset; \\
    x_U \in val(X_U) \end{array}\right\},
\end{equation*}}

to then define the context-specific dependency model (of a log-linear structure $F$) as
 \begin{equation}
    \mathcal{D}_C(F)=\left\{\ \tri \in \Tripletes \ \ \middle\vert\ \
      \cd{X_i}{X_j}{x_U,X_W}_F \ \right\}.\label{eq:dmcomplete}
\end{equation}
where, again, the subscript $C$ denotes the completeness of this model, in contrast to its approximate version defined in Section~\ref{sec:ugm}.

Given that these dependency models are exhaustive, it is straightforward to determine the (in)dependence in model $F$ of any given  assertion $\Ind = \assertion{X_i}{X_j}{x_U,X_W}$  by a simple verification of inclusion in a set; in this way, $\Ind\in \mathcal{D}_C(F)$ indicates dependence $\cd{X_i}{X_j}{x_U,X_W}_F$ is true for model $F$, while $\Ind\notin \mathcal{D}_C(F)$ indicates that the independence $\ci{X_i}{X_j}{x_U,X_W}_F$ holds in that model.

\subsection{Comparison of undirected graphical models}\label{sec:ugm}

In the following section, we will show that log-linear models present several exponential complexities in their structure, the first of which is analogous to an exponentiality present in the space of structures of UGMs.
Because of this, it may be helpful for the reader to understand such exponentiality in the case of UGMs and how it is overcome by the most widely used measure for comparing these models.
In what follows, we provide a brief explanation of this common approach and its advantages.
The underlying idea is that, by taking advantage of the properties of UGMs, their structures can be compared correctly and completely, avoiding an exhaustive comparison.
These ideas have inspired one aspect of our own approximation, by using similar concepts that apply to the much larger class of log-linear models, and also our proof in Section~\ref{sec:distance}, in which we used more general properties and equivalences that apply to the structures of log-linear models to obtain guarantees that this class of models are correctly and completely compared by our method.

As it can be seen in Eqs.~\ref{eq:tripletesUGM} and ~\ref{eq:DUGMs}, UGMs suffer from an exponentiality in the number of subsets  $U \subseteq V \setminus \{i,j\}$.
The approach for undirected graphical models compares them over the polynomial-size dependency model $\mathcal{D}^{UGM}$, a subset  of $\mathcal{D}^{UGM}_C$ containing only \emph{fully conditional dependencies} defined as the dependencies in $\mathcal{D}^{UGM}_C$ with a maximum-size conditioning set $U$, i.e.,  $U = V \setminus \{i,j\}$.
Formally,
    \begin{equation*}
      \mathcal{D}^{UGM}(H) \equiv\left\{\ \assertion{X_i}{X_j}{X_U} \in \TripletesUGM \ \ \middle\vert\  \cd{X_i}{X_j}{X_U}_H,\,\,\, U=V\setminus \{i,j\}  \ \right\}.
      \label{eq:DPUGMs}
    \end{equation*}

The comparison based on fully conditional dependencies corresponds to the known approach for the comparison of two undirected graphical models: the \emph{Hamming distance} of their graphs, as according  to the \emph{pairwise Markov property}~\cite{PEARL88} this reduced set is nothing more than the edges of the undirected graph, i.e.,

  \[
    \cd{X_i}{X_j}{X_{V\setminus \{i,j\}}}_H \equiv (X_i,X_j) \in E,
  \]
where $E$ is the set of edges of the graph representation of model $H$.

Despite only being conducted on the subset of the fully conditional dependencies, the Hamming distance comparison satisfies the properties of a \emph{metric}: \emph{nonnegativity}, \emph{symmetry}, \emph{discrimination}, and \emph{triangle inequality}\cite{DODGE2006,aliprantis2006}.
The first guarantees that the measure is greater than zero for every possible input.
The second property guarantees that the distance from one model to the other is the same as the distance from the second to the first.
The third property guarantees that for any two undirected models $H_1$ and $H_2$, their distance is zero if and only if they are identical.
And finally, the fourth property guarantees that given three models $H_1$, $H_2$ and $H_3$, the distance from $H_1$ to $H_3$ is always smaller than the sum of the distances between the other two, i.e., the distance from $H_1$ to $H_2$ plus the distance from $H_2$ to $H_3$.

For the Hamming distance of undirected graphs, the first two and the last properties are trivially satisfied.
The second property is also easily verified on graphs; nevertheless, it is useful to also inquire whether the satisfaction of this property for graphs implies that the complete dependency models are also identical when the distance between graphs is zero.
In other words, we would like to know if, when the reduced dependency models $\mathcal{D}^{UGM}(H_1)$ and $\mathcal{D}^{UGM}(H_2)$ are equal (Hamming distance of zero), then the complete dependency models $\mathcal{D}^{UGM}_C(H_1)$ and $\mathcal{D}^{UGM}_C(H_2)$ are also equal.
To the best of our knowledge, this statement has no formal proof in the literature, yet we believe it is not difficult to prove.
As an intuitive justification, let us note, first, that the equality over fully conditional dependencies is equivalent to the equality of the undirected graphs.
By the Markov properties~\cite{LAURITZEN96,koller09}, any (general) conditional dependence in $\mathcal{D}^{UGM}$ can be read from a graph, thus determining the complete model.
Then, the equality of these subsets of dependencies implies the equality of the complete dependency models:

\begin{equation}
  \mathcal{D}^{UGM}(H_1) = \mathcal{D}^{UGM}(H_2) \iff  \mathcal{D}^{UGM}_C(H_1) = \mathcal{D}^{UGM}_C(H_2).
  \label{eq:discriminationDMUGMs}
\end{equation}


\section{Structure comparison between log-linear models}
  \label{sec:structureComparisonLL}

In this section we will show how to arrive at a formal definition of the sets in a confusion matrix for directly and thoroughly comparing the structures of log-linear models. 
The section begins by describing an exhaustive brute-force approach for this comparison, while highlighting its main sources of exponential computational complexities.
Then, it motivates and formalizes some required approximations, and proves that, despite these approximations, the resulting comparison is \emph{valid}.
With this result, we can continue to address the remaining source of complexity in Section~\ref{sec:approach}. 

Comparing the structures of two log-linear models $F$ and $G$ implies comparing all the independencies and dependencies encoded in each of them.
An exhaustive, straightforward approach for this comparison should  examine each possible triplet from the set $\Tripletes$, testing
its membership in both $\mathcal{D}_C(F)$ and $\mathcal{D}_C(G)$. 
Throughout this work we will use the convention that a positive case corresponds to a dependence or interaction, whereas the absence of an interaction is a negative case, in accordance with the comparison of UGMs. 
Then, the dependency model comparison results in a \emph{confusion matrix} for  $F$ and $G$, with  two correct cases and two incorrect cases:
if the triplet belongs to both, one count is added to \emph{true positives}
($TP_C$); if the triplet is missing in both, it is counted as a \emph{true
negative} ($TN_C$); if $F$ does not contain the triplet but $G$ does, it is
counted as a \emph{false positive} ($FP_C$); and if the triplet belongs to $F$ but
not to $G$, it counts as a \emph{false negative} ($FN_C$).
Formally,

\begin{align}
  TP_C &= \left\vert \ \left\{ \   \Ind \in \Tripletes  \mid \Ind \in \DC{F} \land \Ind \in
\DC{G} \  \right\} \ \right\vert, \label{eq:TPcomplete}\\
FN_C &= \left\vert \  \left\{ \   \Ind \in \Tripletes  \mid \Ind \in \DC{F} \land \Ind
\not\in \DC{G} \  \right\} \ \right\vert,\label{eq:FNcomplete}\\
FP_C &= \left\vert \  \left\{ \   \Ind \in \Tripletes  \mid \Ind\not\in \DC{F} \land
\Ind \in \DC{G}  \ \right\} \ \right\vert,\label{eq:FPcomplete}\\
TN_C &= \left\vert  \ \left\{ \   \Ind \in \Tripletes  \mid \Ind \not\in \DC{F} \land \Ind
\not\in \DC{G}  \ \right\} \ \right\vert, \label{eq:TNcomplete}
\end{align}
where, again, the subscript $C$ denotes the fact that this confusion matrix is
computed over the \emph{complete} dependency models.

Unfortunately, the complexity of these evaluations depends directly on the cardinality of $\Tripletes$, which is
exponential in three possible ways:

\begin{enumerate}
    \item There is an exponential number of subsets of $V\setminus \{i,j\}$.
    \item For each subset of $V\setminus \{i,j\}$, there is an exponential number of disjoint sets $U$ and $W$. In other words, let $S \subseteq V\setminus \{i,j\}$; then, for each possible $S$, we have that $U$ and $W$ can be all partitions of $S$ into two sets, plus the cases where $U=\emptyset, W=S$ and $U=S,W=\emptyset$.
    \item For each possible $U$ and $W$ where $U$ is not empty, there is a number of contexts $x_U$ that is exponential in the size of $U$.
\end{enumerate}

In order to give an intuition of the context-specific dependency model and its complexity, we provide a simple example.

\begin{example}\label{ex:dmodel}
  Let $V=\{0,\dots,3\}$ be the index set of binary variables $X_V$.
  Any two log-linear models $\modelA$ and $\modelB$ over this domain can be represented by their context-specific dependency models $\mathcal{D}_C(\modelA)$ and $\mathcal{D}_C(\modelB)$, where each contains some subset of all possible marginal, conditional and context-specific dependency assertions, as summarized in Table~\ref{table:complexity}.

  \begin{table}
    \centering
    \begin{tabular}[ht]{p{3cm} p{3cm} l}
      Case & \# of assertions & Examples \\\hline
      $U=W=\emptyset$ & 6 & $\assertionM{X_0}{X_2}$ \\\hline
      \multirow{2}{3cm}{$|W|=1, U=\emptyset$} & \multirow{2}{4em}{12} & $\assertion{X_0}{X_1}{X_2}$ \\
      &                 & $\assertion{X_0}{X_1}{X_3}$ \\\hline
      \multirow{2}{3cm}{$|W|=2, U=\emptyset$} & \multirow{2}{4em}{6} & $\assertion{X_0}{X_1}{X_2,X_3}$ \\
      && $\assertion{X_1}{X_2}{X_0,X_3}$ \\\hline
      \multirow{2}{3cm}{$W=\emptyset, |U|=1$} & \multirow{2}{4em}{24} & $\assertion{X_0}{X_1}{X_2=0}$ \\
      && $\assertion{X_0}{X_1}{X_2=1}$ \\\hline
      \multirow{2}{3cm}{$W=\emptyset, |U|=2$} & \multirow{2}{4em}{24} & $\assertion{X_0}{X_1}{X_2=0,X_3=0}$ \\
      &&  $\assertion{X_0}{X_1}{X_2=0,X_3=1}$ \\\hline
      \multirow{3}{3cm}{$|W|=1, |U|=1$} & \multirow{2}{4em}{24} & $\assertion{X_0}{X_1}{X_2=0,X_3}$ \\
      && $\assertion{X_0}{X_1}{X_2=1,X_3}$ \\
      && $\assertion{X_0}{X_1}{X_2,X_3=0}$
    \end{tabular}
    \caption{Each row shows a subset of Eq.~\ref{eq:dmcomplete} for a domain with 4 binary variables. The first column determines the subset according to the cardinality of the conditioning sets, the second column indicates the number of assertions present in each subset, and the third column exemplifies a few of those assertions.}
    \label{table:complexity}
  \end{table}

  In total, with this representation, we would need to test 96 unique assertions per model in order to produce the counts of the confusion matrix (Eqs.~\ref{eq:TPcomplete}~to~\ref{eq:TNcomplete}) for $\modelA$ and $\modelB$.
\end{example}

Our proposal addresses the three exponentialities.
The first one is addressed by adapting an approximation that is widely used for the subclass of UGMs, described in Section~\ref{sec:ugm}.
Although the approach for these models is based on unassigned conditioning sets, it can be applied similarly to context-specific dependency models by considering only assertions in which $S = U \cup W = V\setminus\{i,j\}$.
In this way, we reduce the number of possible sets $S$ from the power set of $V\setminus\{i,j\}$ to only one set per pair $(i,j)$.


Unfortunately, this has no impact on the second nor the third exponentialities, which are specific to this class of models.
With the first reduction we have one choice of set $S$ per each pair of variables, but from each pair this set has an exponential number of assignments to $U$ and $W$; that is, there is an exponential number of ways of splitting the conditioning set into an assigned set and unassigned set.
This second exponentiality is addressed through further reductions in the number of comparisons, by considering only assertions where $W=\emptyset$ and $U=V\setminus\{i,j\}$.
The conditioning sets in these assertions thus correspond to $\Xij$, the set of \emph{fully contextualized contexts} as defined in Section \ref{sec:randomVariables}.
In what follows, we rename $U$ as $Z$ when referring to this case of fully-contextualized conditioning sets.
This reduction is justified by the ability of context-specific structures to represent more general dependencies and independencies, based on the equivalences in Section~\ref{sec:independence} (in particular, Eq.~\ref{eq:ci-csi} and its negation).

Both reductions are then formalized by defining $\DP{F}$, a reduced version of the complete dependency model $\mathcal{D}_C(F)$ (for an arbitrary model $F$),  that contains one assertion of the form $\assertion{X_i}{X_j}{\z}$ for every $X_i \neq X_j \in X_V$ and for every fully-contextualized context $\z \in \Xij$. We define $\DP{F}$ by formalizing the reduced set of triplets $\TripletesP$ as

      \begin{equation}
           \TripletesP = \left\{\trip \ \ \middle\vert\  i \neq j \in V,\,\,\, \z \in \Xij  \right\},\label{eq:tripletesP}
        \end{equation}

to then define the reduced, fully-contextualized dependency model of a model $F$ as
 \begin{equation}
   \DP{F} \equiv\left\{\ \trip \in \TripletesP \ \ \middle\vert\  \cd{X_i}{X_j}{\z}_F \  \right\},
      \label{eq:DP}
    \end{equation}

which reduces the comparison of two log-linear models $F$ and $G$ of Eqs.~\ref{eq:TPcomplete}-\ref{eq:TNcomplete} to the computation of the fully-contextualized confusion matrix:

\begin{align}  TP &= \left\vert \ \left\{\  \Ind \in \TripletesP  \mid \Ind \in \DP{F} \land \Ind \in
\DP{G} \ \right\}\ \right\vert,\label{eq:TPdepmodel}\\
FN &= \left\vert\  \left\{\   \Ind \in \TripletesP  \mid \Ind \in \DP{F} \land \Ind
\not\in \DP{G} \ \right\}\ \right\vert,\label{eq:FNdepmodel}\\
FP &= \left\vert\  \left\{\   \Ind \in \TripletesP  \mid \Ind\not\in \DP{F} \land
\Ind \in \DP{G}\  \right\}\ \right\vert,\label{eq:FPdepmodel}\\
TN &= \left\vert\  \left\{\   \Ind \in \TripletesP  \mid \Ind \not\in \DP{F} \land \Ind \not\in \DP{G}\  \right\}\ \right\vert.\label{eq:TNdepmodel}
\end{align}

    \begin{example}\label{ex:fcdmodel}
      By using Eq.~\ref{eq:DP} for defining $\modelA$ and $\modelB$ from \autoref{ex:dmodel}, the only assertions needed correspond to the fifth row in Table~\ref{table:complexity} (the case where $W=\emptyset$ and $|U|=2$).
      To compare $\mathcal{D}(\modelA)$ and $\mathcal{D}(\modelB)$, we would have to test these 24 assertions on each of them in order to produce the values of the confusion matrix, instead of the 96 per model of the exhaustive dependency model.
    \end{example}

To validate this reduced comparison we will prove that the errors $(FP+FN)$ computed over the fully-contextualized confusion matrix is a metric, which means that it satisfies the properties of \emph{non-negativity},  \emph{discrimination}, \emph{symmetry} and \emph{triangle inequality}.
The proof that the fully-contextualized accuracy is a distance is rather long, and thus has been postponed to ~\autoref{T:distance} in Section~\ref{sec:distance}.
To give an intuition of how FC conditioning sets can represent arbitrary structures, we introduce the following example:

\begin{example}\label{ex:DP}
  Let ${M}$ be a model over 3 binary variables $X_V$, $V=\{0,1,2\}$. Suppose $M$ is saturated except for the context-specific independence $\ci{X_1}{X_2}{X_0=1}$.
Using the representation proposed by Eq.~\ref{eq:DP}, its structure can be written as

  \begin{align*}
    \DP{M} = \{ &
     \assertion{X_1}{X_2}{X_0=0}, \\
&    \assertion{X_0}{X_2}{X_1=0}, \\
&    \assertion{X_0}{X_2}{X_1=1}, \\
&    \assertion{X_0}{X_1}{X_2=0}, \\
&    \assertion{X_0}{X_1}{X_2=1} \}.
    \label{eq:exampleDP}
  \end{align*}

  The context-specific structure is simply represented by the absence of $\assertion{X_1}{X_2}{X_0=1}$, while $\assertion{X_1}{X_2}{X_0=0}$ is present.
  But $M$ should also include conditional and marginal dependencies among the other variables.
  By Eq.~\ref{eq:implicationindependence},

  $$\cd{X_0}{X_2}{X_1=0} \land \cd{X_0}{X_2}{X_1=1} \implies \cd{X_0}{X_2}{X_1},$$

  and by the contrapositive of the Strong Union axiom,

  $$\cd{X_0}{X_2}{X_1}\implies \md{X_0}{X_2},$$

  and likewise for $X_0$ and $X_1$.

  In this way, all dependencies in an exhaustive dependency model can be also encoded by this set.
\end{example}

At this point then, the outcome is a fully-contextualized confusion matrix, where the first two exponentialities are addressed by relying on the properties of the model.
However, this definition still presents the third exponentiality.
In the following section, we propose a method that overcomes such exponentiality by using an equivalent representation and applying an efficient algorithm.

\section{Approach for an efficient comparison}\label{sec:approach}

This section presents an efficient alternative to the brute-force algorithm for computing the fully-contextualized confusion matrix of Eqs.~\ref{eq:TPdepmodel} - \ref{eq:TNdepmodel}.
The approach is presented in three parts. First, in Section~\ref{sec:setform}, we re-arrange the confusion matrix into a simpler form based on sets of contexts. 
Then, in Section \ref{sec:preliminaries}, we present some preliminary definitions and concepts that will allow us to operate with these sets.
We conclude with Section \ref{sec:theorem1}, which relies on all of these foundations to achieve an efficient method for computing the confusion matrix.

\subsection{Confusion matrix in set form}\label{sec:setform}

First, we note that for any arbitrary set of features $F$, $\DP{F}$  can be partitioned over  mutually exclusive dependency sets $\DPij{F}$, i.e.,

\begin{equation}
  \DP{F} =\bigcup_{i \neq j \in V} \DPij{F}. \label{eq:decomposition}
\end{equation}

This follows by first noticing that, from its definition in
Eq.~\ref{eq:tripletesP}, the triplet set $\TripletesP$ can be easily partitioned over pairs $(i,j)$, i.e.,

\begin{equation*}
      \TripletesP^{ij} = \left\{\ \trip \ \ \middle\vert\ \z \in \Xij \ \right\},\label{eq:tripletesPij}
\end{equation*}

and that from its definition in Eq.~\ref{eq:DP}, $\DP{F}$ is partitioned
accordingly, resulting in

\begin{equation*}
      \DPij{F} \equiv \left\{\ \trip \in \TripletesP^{ij}\ \ \middle\vert\ \cd{X_i}{X_j}{\z}_F \ \right\}.
        \label{eq:DPij}
\end{equation*}
or simply

\begin{equation*}
      \DPij{F} \equiv \left\{\ \trip\ \   \middle\vert\ \ \z \in \Xij, \,\,\, \cd{X_i}{X_j}{\z}_F \ \right\}.
        \label{eq:DPij}
\end{equation*}

A further simplification is achieved by noticing that all elements in
$\TripletesP$ differ from each other solely by the FC conditioning set $\z$, resulting in an alternative way of writing the dependency model $\DPij{F}$ that simply specifies those FC conditioning sets $\z$ for which the triplet is a dependency according to model $F$, i.e.,

\begin{equation}
    \Xij(F) \equiv \{\ \z \in \Xij ~\mid ~ \cd{X_i}{X_j}{\z}_F \ \}.
    \label{eq:XijF}
\end{equation}

From the latter and the decomposition of \DP{F} in Eq.~\ref{eq:decomposition},  we can break down the confusion matrix of Eqs.~\ref{eq:TPdepmodel} - \ref{eq:TNdepmodel} over the configuration sets $\Xij(F)$ and $\Xij(G)$ as follows

  \begin{equation}
    TP = \sum_{i\neq j\in V}{TP_{ij}} ;\quad TP_{ij} = | \{ \z \in \Xij ~|~  \z \in \Xij(F) \land  \z \in \Xij(G) \}|
    \label{eq:TPij}
  \end{equation}

  \begin{equation}
    FN = \sum_{i\neq j\in V}{FN_{ij}} ;\quad FN_{ij} = | \{ \z \in \Xij |  \z \in \Xij(F) \land  \z \notin \Xij(G) \}|
    \label{eq:FNij}
  \end{equation}

  \begin{equation}
    FP = \sum_{i\neq j\in V}{FP_{ij}} ;\quad FP_{ij} = | \{ \z \in \Xij |  \z \notin \Xij(F) \land  \z \in \Xij(G) \}|
    \label{eq:FPij}
  \end{equation}

  \begin{equation}
    TN = \sum_{i\neq j\in V}{TN_{ij}} ;\quad TN_{ij} = | \{ \z \in \Xij |  \z \notin \Xij(F) \land  \z \notin \Xij(G) \}|.
    \label{eq:TNij}
  \end{equation}

  Then, from basic set equivalences, one can observe that the conjunction in the definition of $TP_{ij}$ makes it equivalent to the intersection of two sets, one for each term in the conjunction, namely,

\begin{align}
    TP_{ij} &=  \Big|
             \Big\{ \z \in \Xij ~~|~~ \z \in \Xij(F) \Big\} \bigcap
            \Big\{ \z \in \Xij ~~|~~ \z \in \Xij(G)  \Big\}
           \Big|,  \nonumber \\
           &=  |\Xijof{F} \cap \Xijof{G} | .\label{eq:TPa}
 \end{align}

Similarly, by set equivalences, the conjunctions of set inclusion and exclusion of $FN_{ij}$ and $FP_{ij}$ can be re-expressed as the difference of two sets, to obtain

 \begin{align}
    FN_{ij} &=  |\Xijof{F} \setminus \Xijof{G} |, \label{eq:FNa}\\
    FP_{ij} &= |\Xijof{G} \setminus \Xijof{F} |  \label{eq:FPa}.
  \end{align}
Finally, to simplify the expression for $TN_{ij}$ we first extract the negation to obtain $\lnot( \z \in \Xij(F) \lor \z \in \Xij(G)) $, and rewrite the negation as set complement and the disjunction as set union, to obtain

 \begin{equation}
    TN_{ij} =  |\overline{ \Xijof{F} \cup \Xijof{G}  }|. \label{eq:TNa}
 \end{equation}

\subsection{Preliminary definitions}\label{sec:preliminaries}
The following are a number of definitions, naming conventions,  and equivalences
that will be useful throughout the remainder of this section.

\begin{definition}[Union of features]\label{def:union}

Given two features $f$ and $g$, if for all $X_k \in \scope{f} \cap \scope{g},\ X_k(f) = X_k(g)$ (i.e., $f$ and $g$ have no incompatible assignments), then a \emph{union feature} $f \cup g$ can be defined
as a new feature $h$ such that, for all $X_k \in \scope{f} \cup \scope{g},~ X_k \in \scope{h}$ and

          \begin{equation*}
          X_k(h)  \equiv \left\{
               \begin{array}{ll}
               X_k(f) & \quad  if~ X_k \in \scope{f} \\     	     X_k(g) & \quad if~ X_k \in \scope{g}.             \end{array}
    	    \right.
    \label{eq:def:union}
          \end{equation*}

\end{definition}

\begin{example}\label{ex:union}
  Given $f$ and $g$ defined over a domain $X_V$ where $V=\{0,\dots,4\}$:

  \begin{align*}
  f&=<X_0=1,X_2=0,X_3=1>\text{, and} \\
  g&=<X_0=1,X_1=0,X_4=0>\text{, then their union is}\\
  f \cup g &= <X_0=1,X_1=0,X_2=0,X_3=1,X_4=0>.
 \end{align*}
 \end{example}

 \begin{definition}[Union of features over $(i,j)$]\label{def:unionij}

   Given a set of features $H$ over variables $X_V$, if for all $h\in H$ it is satisfied that $X_i,X_j \in S_h$, and that $\forall X_k\in \bigcup_{h\in H} S_{\p{h}}$ there are no incompatible assignments, i.e., $\forall h, h' \in H, \forall X_k \in \scope{h} \cap \scope{h'} \setminus \{X_i,X_j\},\ X_k(h) = X_k(h')$, then the \emph{union of features $h\in H$ over indices $(i,j)$}
   is denoted as $\cupij_{h\in H} h$ and is defined as any of the possible unions

   \begin{equation}\label{eq:unionij}
	    \bigcupij_{h\in H} h  \equiv \bigcup_{h \in H} \p{h} \cup x_i \cup x_j,
          \end{equation}

	  for any $x_i\in val(X_i)$ and $x_j\in val(X_j)$.
 \end{definition}

 \begin{example}\label{ex:unionij}
  Following Example~\ref{ex:union}, now replace $f$ and $g$ by

  \begin{align*}
  f&=<X_0=1,X_2=0,X_3=1> \text{, and} \\
  g&=<X_0=0,X_1=0,X_2=1,X_4=0> \text{, then their union over }(0,2)\text{ is} \\
   f \cup^{02} g &= <X_0=\cdot,X_1=0,X_2=\cdot,X_3=1,X_4=0>,
 \end{align*}

 where the dots may be replaced by any arbitrary assignment in $val(X_0)$ and $val(X_2)$, respectively.
 \end{example}

\begin{definition}[Fully-contextualized context set of a
  feature]\label{def:FCset}
A \emph{FC context set} for a feature $h$ w.r.t. a pair of distinct variables $X_i,X_j \in X_V$ is the  subset of all FC contexts $\z$ in $\Xij$ over which $X_i$ and $X_j$ are dependent according to feature $h$, that is,

\begin{equation*}
\Xijof{h} \equiv \{\ \z \in \Xij ~\mid ~ \cd{X_i}{X_j}{\z}_h \ \}. \label{eq:contextsetf}
\end{equation*}
which, according to  Eqs.~\ref{eq:FIndep} and \ref{eq:fIndep}, occurs for every
$\z$ such that the assigned variables in $h$ have matching values with $\z$,
provided that $X_i,X_j \in S_h$.
\end{definition}

A  straightforward result from this definition is that the FC context set
$\Xijof{h}$ of a feature $h$ (with assignments for $X_i$ and $X_j$) contains one element for each configuration of all
remaining variables outside its scope; formally,

\begin{equation}
    \left|\Xijof{h} \right| = \left\{
    \begin{array}{ll}
        1 & \quad if~\left|\scope{h}\right|=\left| X_V\right| \\
        \prod_{X_k \in X_V\setminus \scope{h}}{\left|val(X_k)\right|} & \quad otherwise.
  \end{array}\right.\label{eq:cardfeature}
\end{equation}

This cardinality is an important result that will allow us to efficiently compute partial counts in our proposed method. 
Let us illustrate the advantage of this definition with some examples:

 \begin{example}\label{ex:cardfeature}
 Given feature $h=<X_0=0,X_1=0,X_3=1,X_4=0,X_6=1>$, with $V=\{0,\dots,6\}$, $val(X_2)=\{0,1\}$ and $val(X_5)=\{0,1,2\}$, its FC contexts for pair $(0,1)$ are
     \begin{align*}
\Xzoof{h} =  \{ & <X_2=0,X_3=1,X_4=0,X_5=0,X_6=1>,  \\
& <X_2=0,X_3=1,X_4=0,X_5=1,X_6=1>, \\
& <X_2=0,X_3=1,X_4=0,X_5=2,X_6=1>, \\
& <X_2=1,X_3=1,X_4=0,X_5=0,X_6=1>, \\
& <X_2=1,X_3=1,X_4=0,X_5=1,X_6=1>, \\
& <X_2=1,X_3=1,X_4=0,X_5=2,X_6=1> \ \},
\label{eq:examplecardfeature}
\end{align*}
and its cardinality is clearly $6$. We have that $X_V \setminus \scope{h} = \{X_2,X_5\}$; therefore, by Eq.~\ref{eq:cardfeature}, the cardinality is $|\Xzoof{h}| = |val(X_2)| \times |val(X_5)| = 2 \times 3 = 6$.
If we now consider a greater number of unassigned variables, say, 10 binary variables (instead of two as in the previous example), then the exhaustive computation would have to \emph{generate} 1024 contexts and count them, while Eq.~\ref{eq:cardfeature} would simply compute $2^{10}$. 
 \end{example}

The following is a definition for an essential notion in our method, since its efficiency relies on working with sets of features that are a mutually exclusive (but equivalent) version of arbitrary feature sets.
We call these sets \emph{partition models}.
In the following section, we describe Algorithm~\ref{alg:partition}, which computes partition models from any given feature set.


 \begin{definition}[Partition model]\label{def:partition}
  A set of features $P$ is a \emph{partition model} for the set of features $H$ if and only if

\begin{equation*}
  \Xijof{H} = \Xijof{P}  \text{\quad   and \quad  }
\forall p \neq p' \in P,\, \Xijof{p} \cap \Xijof{p'} = \emptyset ,
\end{equation*}
  \end{definition}

  that is, the matching FC context set of $P$ and $H$ is partitioned by the FC
  context sets $\Xijof{p}$ of features $p \in P$.

  \begin{example}\label{ex:partition}
  Given $V=\{0,\dots,4\}$ and the set of features

    \begin{align*}
H =  \{ & <X_0=0,X_1=0,X_4=0>,  \\
& <X_0=0,X_1=0,X_3=0> \},
\label{eq:examplepartitionH}
\end{align*}
    on the one hand, we note that
\begin{align*}
\Xzoof{H} =  \{ & <X_3=0,X_4=0>,  \\
& <X_3=1,X_4=0>,  \\
& <X_3=0,X_4=1> \},
\end{align*}

and $\Xzoof{<X_0=0,X_1=0,X_4=0>}\ \cap\ \Xzoof{<X_0=0,X_1=0,X_3=0>} = \{<X_3=0,X_4=0>\}$,
so the features in $H$ are not a partition of $\Xzoof{H}$.
  On the other hand, the set
    \begin{align*}
  P =  \{ & <X_0=0,X_1=0,X_4=0>,  \\
& <X_0=0,X_1=0,X_3=0,X_4=1> \}, \end{align*}

  is a partition of $\Xzoof{H}$ because

\begin{align*}
    \Xzoof{P} =  \{ & <X_3=0,X_4=0>,  \\
             & <X_3=1,X_4=0>,  \\
            & <X_3=0,X_4=1> \}, \end{align*}

that is, $\Xzoof{H}=\Xzoof{P}$, and

$$\Xzoof{<X_0=0,X_1=0,X_4=0>} \cap \Xzoof{<X_0=0,X_1=0,X_3=0,X_4=1>} = \emptyset .$$

\end{example}

  A straightforward consequence of the definition of partition models is the
  possibility of efficiently computing the cardinality of the FC context set of
  some partition model $P$. This follows, first, by noticing that the FC
  context set of $P$ can be decomposed into the FC context set of its features
  $p$ as follows,

\begin{equation*}
     \Xijof{P}  = \bigcup_{p\in P}{ \Xijof{p} }, \label{eq:partitionunion}
\end{equation*}

and then, by the fact that the contexts for all $p$ are mutually exclusive,
the cardinality can be expressed as a sum:

\begin{equation*}
    \left\vert \Xijof{P} \right\vert = \sum_{p\in P}{\left\vert \Xijof{p}
  \right\vert },\label{eq:partitioncard}
\end{equation*}

  where its cardinality can be computed efficiently according to Eq.~\ref{eq:cardfeature}.

\subsubsection{Partitioning Algorithm} \label{sec:partitionAlgs}

  \begin{algorithm}[tb]
    \caption{\label{alg:partition} $partition(H)$.}
    \begin{algorithmic}[1]
      \STATE \algorithmiccomment{Given a set of features $H$, it returns its partition model $P$  (see
  Definition~\ref{def:partition}). The notation $(i,j)$ is omitted for
      clarity.}

        \STATE $h' \longleftarrow $ some arbitrary feature $h \in H$
        \STATE $P \longleftarrow \{h'\} $\label{alg:P}
        \FOR {$h\in H\setminus \{h'\}$ }  \label{alg:h}
          \STATE $D_h \longleftarrow \{h\}$\label{alg:Dh}

    \FOR {$p\in P$ }\label{alg:p}
            \STATE $D_{hp} \longleftarrow \emptyset$ \label{alg:DhpInit}
            \FOR {$h'\in D_h$ } \label{alg:hprime}
                \STATE $D_{hp} \longleftarrow D_{hp} \cup D_{h'\setminus p}$\label{alg:Dhp}

                \ENDFOR \label{alg:end:p}
            \STATE $D_h \longleftarrow D_{hp}$\label{alg:DhDhp}

            \ENDFOR \label{alg:end:hprime}
            \STATE $P \longleftarrow P \cup D_h $\label{alg:PDh}
        \ENDFOR  \label{alg:end:h}

        \RETURN $P$
     \end{algorithmic}
  \end{algorithm}

  We now introduce an algorithm whose main purpose  is to produce the partition model $P$ for the input set  of features $H$ over the FC context set $\Xij$.
  According to ~\autoref{def:partition}, this partition model $P$ is equivalent
  to $H$ in that both represent the same FC context set, i.e., $\Xij(H) = \Xij(P)$, but $P$ contains features with no overlapping FC contexts, i.e., $\forall p, p' \in P, \Xij(p) \cap \Xij(p') = \emptyset$.
  The partitioning algorithm is shown in Algorithm~\ref{alg:partition}.

  Producing a partition requires avoiding the double counting of every possible
  FC context in $H$. For a pair of features, say $h,h' \in H$, this is achieved by keeping one of them intact, say $h'$, and subtracting its FC contexts from the other, i.e., producing some set of features $D$ satisfying $\Xij(D) = \Xij(h) \setminus \Xij(h')$.
  For that, we use the operation  \emph{feature difference} defined in~\autoref{lemma:singlediff} of Section~\ref{subsubsec:singlefeatures}. This operation takes the two features $h$ and $h'$ and produces a new set of features $D_{h\setminus h'}$ whose dependency model $\Xij(D_{h\setminus h'}) = \cup_{d \in D_{h\setminus h'}} \Xij(d)$ equals $\Xij(h) \setminus \Xij(h')$. These operations are specific for some given pair $(i,j)$, but its explicit mention is omitted for brevity.

  When $H$ contains more than two features, the basic feature difference
  operation must be conducted over every pair.
  If these were simple sets, one subtraction per element would suffice.   However, when the operations are conducted over features, the difference feature is instead a set of features.
  This makes the procedure more complex.
  First, the algorithm keeps track of the partitioned (subtracted) features in $P$, initialized by a single, arbitrary feature $h'\in H$ in line~\ref{alg:P}.
    The algorithm then conducts two nested loops, one over all remaining features $h  \in H \setminus \{h'\} $ (lines \ref{alg:h}-\ref{alg:end:h}), and the other over each  feature $p \in P$  (lines \ref{alg:p}-\ref{alg:end:p}), with the main idea of subtracting from $h$ every feature $p$,  to produce a new state of $P$ in line~\ref{alg:PDh} that is guaranteed to be a partitioned model for the subset of $H$ that has already been visited.
    The core of the second loop contains initially a subtraction of some $h
  \in H$ minus $p$. However, after the first iteration over $P$, the difference
  of $h$ minus $p$ produces not one, but a set $D_{h \setminus p}$ of features, requiring
  several subtractions in the second iteration, one per $d \in D_{h \setminus
  p}$. This is solved by storing all subtractions in $D_{h}$, initialized with $h$ in line~\ref{alg:Dh}, and updated in line~\ref{alg:DhDhp} with the difference features $D_{hp}$ produced for $p$.
  There is one final difficulty to address: the subtraction of a single $p$ from every feature $h'$  in $D_h$. The only real complication is to collect the resulting features. For that, the loop over $p$ maintains the set $D_{hp}$, initialized empty in line~\ref{alg:DhpInit}, and updated with the set $D_{h'\setminus p}$ resulting from the subtraction of $p$ from $h'$. Only after collecting all difference features for every $h' \in D_h$, $D_h$ is updated again with the set $D_{hp}$ of new differences.

 These procedures could be illustrated with the following example. In the first  iteration of the loop over $H$ (lines
 \ref{alg:h}-\ref{alg:end:h}) $h$ is the second feature from $H$, and P contains the first one, i.e. $p=h'$ (line \ref{alg:p}).
 In the innermost loop of line \ref{alg:hprime}, we have then $h$ as the only element in $D_{h}$; resulting in the subtraction of $p=h'$ from $h$, with the set of features $D_{hp}$ becoming $D_{h \setminus h'}$.
 In line \ref{alg:PDh}, this new set is added to $P$.
   In the third iteration over $H$, the algorithm takes the third element of $H$, say $h''$, which must be
 subtracted from $P$, that at this point equals $P=\{h' \cup D_{h \setminus h'}\}$.
 The interesting aspect of this third iteration is that the loop over features $p \in P$ (lines \ref{alg:p}-\ref{alg:end:p}) now runs over more than one feature.
 For clarity of exposition let us rename these as $P=\{p^1,p^2,\dots,p^{|P|}\}$.
 In the first iteration of this second loop, we obtain the difference set  $D_{h'' \setminus p^1}$.
 In the next iteration, we have to subtract $p^2$ from each  of the features in $D_{h'' \setminus p^1}$.
 At this point it becomes clear why we need the additional third loop over $D_h$ in
 lines~\ref{alg:hprime}-\ref{alg:end:hprime}: for $p^2$, we need to produce
 the difference set $d \setminus p^2$  for each $d$ in $D_h=D_{h' \setminus p^1}$, which must be conducted incrementally.
 At the end of this loop, the resulting difference features are in $D_h$, and we use this  set for $p^3$, the next element in $P$.
 In other words, each iteration over $P$  produces a set $D_h$, which becomes
 gradually smaller w.r.t. the number of FC contexts represented.
 After subtracting all $p \in P$, we have a final set $D_h$ which does not
 overlap with any $p$.
 The union of these difference sets (line \ref{alg:Dhp}) is added to the partition set $P$, and the
 algorithm proceeds with the next feature in $H$.

\subsection{Efficient computation of the confusion matrix} \label{sec:theorem1}

In this section we present the approach for efficiently computing the comparison
of two log-linear models $F$ and $G$, as expressed by the set form of the
confusion matrix (Eqs.~\ref{eq:TPa}-\ref{eq:TNa}). The approach is expressed
in the following theorem, which includes a proof of correctness, i.e., a proof
that the confusion matrix computed by the efficient method is guaranteed to
produce the same counts as the FC confusion matrix of Eqs.~\ref{eq:TPa}-\ref{eq:TNa}.

  \begin{theorem} \label{T:method}
   Let $F$ and $G$ be two log-linear model structures over $X_V$.
   The fully contextualized confusion matrix $TP$, $FP$, $FN$, and $TN$ of $G$
   w.r.t. $F$ can be computed efficiently in terms of $|V|$ as follows:

      \begin{align}
  TP &= \sum_{i\neq j\in V}{TP_{ij}} ;   &      TP_{ij} &=  |\Xijof{F} \cap \Xijof{G} |  \equiv   \sum_{p\in P^{TP}}\left\vert{ \Xijof{p}} \right\vert, \label{eq:T:method:TP} \\[5pt]
    FN &= \sum_{i\neq j\in V}{FN_{ij}} ;     &   FN_{ij} &=  |\Xijof{F} \setminus \Xijof{G} |  \equiv  \sum_{p\in P^{FN}}\left\vert{ \Xijof{p}} \right\vert, \label{eq:T:method:FN} \\[5pt]
   FP &= \sum_{i\neq j\in V}{FP_{ij}} ;     &   FP_{ij} &=  |\Xijof{G} \setminus \Xijof{F} |  \equiv  \sum_{p\in P^{FP}}\left\vert{ \Xijof{p}} \right\vert, \label{eq:T:method:FP}
    \end{align}
      \begin{equation}
    TN =     \left(\sum_{i\neq j\in V}{\,\, \prod_{k \in V\setminus \{i,j\}
        }{\left|val(X_k)\right|} } \right)- TP - FN - FP .
          \label{eq:T:method:TN}
      \end{equation}

     where the cardinalities of
          the FC contexts $\Xijof{p}$ over the individual features $p$ can be computed efficiently by Eq.~\ref{eq:cardfeature},
          and feature sets
     $P^{TP}$, $P^{FN}$, and $P^{FP}$ are partition models of the sets of features $H^{TP}$, $H^{FN}$, and $H^{FP}$, respectively, defined as follows:

     The equivalent set for $TP_{ij}$, named $H^{TP}(F,G)$ (where we omit the dependence over $(i,j)$ for clarity), is defined as
    \begin{equation}
        H^{TP}(F,G) \equiv \left\{ {f} \cupij {g}  \middle\vert  f\in F,\, g\in G,\, {C}_1(f),\, {C}_1(g),\, {C}_2(f,g) \right\},\label{eq:HTP}
            \end{equation}
    that is, it contains one \emph{union feature} $ {f} \cupij {g} $ (as defined by ~\autoref{def:unionij} in Section~\ref{sec:preliminaries}) for each pair of features $f \in F$ and $g \in G$ that satisfy the following conditions for non-empty union features:

     \begin{align}
         {C}_1(f)   &\equiv X_i \in \scope{f} \land X_j \in \scope{f}, \label{eq:C1} \\[5pt]
         {C}_1(g)   &\equiv X_i \in \scope{g} \land X_j \in \scope{g}, \label{eq:C1g} \\[5pt]
         {C}_2(f,g) &\equiv \forall X_k \in \scope{f} \cap \scope{g} \setminus \{X_i,X_j\},\quad  X_{k}(f) = X_{k}(g).  \label{eq:C2}
     \end{align}

     Condition $C_1$ requires feature $f$ (resp. $g$)  to contain both $X_i$ and $X_j$ in its
     scope. Condition $C_2$ requires features $f$ and $g$ to contain no incompatible values on variables other than $X_i$ and $X_j$; and it corresponds to the requirement for the existence of the union of features over $(i,j)$, as specified in its definition.

     The equivalent set for $FN_{ij}$ is defined as

   \begin{equation}
   \begin{aligned}
               H^{FN}(F,G) \equiv& \phantom{\bigcup\,} \left\{ {f} \, \cupij \, \bigcupij_{d\in \DEf} d  \, \middle\vert \,
                  f \in \Ftwo \setminus \Fone,  \,\DEf \in \vvarprod_{g \notin
      \Gcomp}\, \DE \right\}  \\[8pt]
                                    & \bigcup  \,  \left\{ \quad\quad\,\,\,\, \bigcupij_{d\in \DEf} d  \,  \middle\vert \,
                  f \notin (\Ftwo\cup \Fone),  \,\DEf \in \vvarprod_{g \notin
      \Gcomp}\, \DE \right\},
    \end{aligned}
    \label{eq:HFN}
   \end{equation}

   where

   \begin{equation*}
     {f} \bigcup\limits^{ij}  \bigcup\limits^{ij}_{d\in \DEf} d
   \end{equation*}

   is computed with Eq.~\ref{eq:unionij} as only one union operation over $(i,j)$ over the set $\{f\}\cup \DEf$, and
        with $\Fone$, $\Ftwo$, $\Gone$, and $\Gtwo$ defined as

    \begin{align}
      \Fone &\equiv \left\{ f\in F \mid \Gone \neq \emptyset \right\};  \label{eq:Fone} \\[5pt]
      \Ftwo &\equiv \left\{ f\in F \mid \Gtwo \neq \emptyset \right\};  \label{eq:Ftwo} \\[5pt]
                        \Gone~\equiv~&\left\{ g\in G \mid \left[ C_1(g) \land
			\p{g} \subseteq \p{f} \right] \lor \lnot C_1(f) \right\} \label{eq:Gone}; \\[5pt]
                        \Gtwo~\equiv~&\left\{ g\in G \mid \lnot C_2(f,g) \lor \lnot C_1(g) \right\};  \label{eq:Gtwo}
                 \end{align}

    and set  $\DE$ for features $f$ and $g$ defined as

                                       \begin{equation}
					 \DE = \bigcup_{k \in S_g\setminus S_f} {\DE}_{(k)} ,  \tag{\ref{eq:DE}}
                \end{equation}

                with
                \begin{equation}
                    {\DE}_{(k)} = \left\{ \text{\ features }d \  \middle\vert \begin{array}{ll}
		      S_d = S_f\cup S_g^{\leq k}; \\[6pt]
                    \forall X_m\in S_{f}, X_m(d)=X_m(f);  \\[6pt]
                    \forall X_m\in S_g^{\leq k}~s.t.~m < k\setminus S_f, X_{m}(d) = X_{m}(g); \\[6pt]
		    X_k(d) \neq X_k(g) \end{array}\right\},\tag{\ref{eq:DEk}}
                \end{equation}

    where the notation $X_m(d)$ refers to the assignment to variable $X_m$ in feature $d$ (likewise for $X_m(f)$ and $X_k(g)$), and $S_g^{\leq k} = \{m\in S_{\p{g}} | m \leq k\}$.

    We will now aim to give some intuitions regarding Eqs.~\ref{eq:HFN}-\ref{eq:Gtwo}, and Eqs. \ref{eq:DE} and \ref{eq:DEk}; however, the full rationale behind these definitions will become clear in the proof.

Set $H^{FN}(F,G)$ is the union of two sets of features.
The first contains one feature union per $f$ in $\Ftwo$ and not in $\Fone$,  obtained by computing the feature union of the feature $f$ without $X_i$ and $X_j$, i.e., $\p{f}$, and each feature $d$ in the set of features  $\DEf$ corresponding to $f$, computed as the cross product over all feature sets $\DE$, one for each $g$ that is neither  in   $\Gone$ nor $\Gtwo$.   The second set of features differs in two aspects: it contains  feature unions over features $d$ only; and features $d$ belong to the $\DEf$ over the features $f$ that are neither part of $\Fone$ nor $\Ftwo$.
Although $D_{fg}^E$ is an exponential set by definition, our interest does not lie in the computation of this set, but in the cardinality of an equivalent set, i.e., its partition model $P^{FN}$. This model will be obtained by the use of Algorithm~\ref{alg:partition} and with syntactic operations over the features of the output set, as will be shown in Section~\ref{subsubsec:singlefeatures}.

As regards $\Fone$, we can observe that the features in this set will be excluded from the construction of the set $H^{FN}(F,G)$.
Specifically, set $\Fone$ contains all features in $F$ that do not satisfy $C_1(f)$, i.e., at least one of $X_i$ or $X_j$ is not in its scope; plus features for which there is at least one $g\in G$ that satisfies both $\p{g} \subseteq \p{f}$ and $C_1(g)$, i.e., it contains both $X_i$ and $X_j$ in its scope, and its remaining assignments are a subset of the assignments in $\p{f}$.
The first case are features in $F$ which do not encode dependencies among $X_i$ and $X_j$, and the second case are features which do encode dependencies, but these particular dependencies are also encoded by some feature(s) in $G$.
For counting false negatives, all these features in $\Fone$ must be excluded, as can be seen in the definition of both parts of the union in $H^{FN}(F,G)$.

Set $\Ftwo$, instead, contains all features $f\in F$ for which there exists at least one $g$ that does not satisfy $C_2(f,g)$, or where $g$ does not have the pair of variables $X_i$ and $X_j$ in its scope.  $\lnot C_2(f,g)$ occurs when the intersection of the scopes $S_f$ and $S_g$ is not empty, and for at least one variable in this intersection, its assignment differs in $f$ and in $g$.
In this case, these features in $F$ are encoding dependencies which are not encoded in $G$.
When this happens, all FC contexts corresponding to $f$ must be included in the resulting set, and this is why inclusion in the set $\Ftwo \setminus \Fone$ determines the presence of $\p{f}$ in the first part of the union of $H^{FN}(F,G)$.

As for the set $\DE$, it contains the features in $F$ that do not correspond to the previous two cases, i.e., features that do not belong in $\Fone$ nor $\Ftwo$.
While its definition may seem complex at first, it may be intuitively understood as a set of features that correspond to the non-trivial difference of two FC context sets between two features.
     Because of the way in which this set is constructed by using an operation between single features (which will be described later in Section~\ref{subsubsec:singlefeatures}),
     it is partitioned over multiple subsets ${\DE}_{(k)}$ where each $k$ is related to a variable that is in the scope of $g$ but not in the scope of $f$.
     All features in all subsets contain the same assignments as $f$, which guarantees that the FC contexts represented by the set will belong to $\Xijof{f}$.
     In addition, since the difference set must contain contexts that are not in $g$, each feature in ${\DE}_{(k)}$ contains, for a particular variable $X_k$ (which is assigned in $g$ but not in $f$), an assignment that differs from $X_k(g)$.
     This includes contexts that should be in $f$ (due to the variable being unassigned in that feature) but excludes configurations that are in $g$ (which should be subtracted).
     The complete reasoning behind the difference set will become clear in \autoref{lemma:singlediff}.

     We conclude with the definition of the equivalent set for $FP_{ij}$, which is
     simply the reversed version of the respective set for the $FN_{ij}$:
    \begin{equation*}
      H^{FP}(F,G) \equiv H^{FN}(G,F).\label{eq:HFP}
    \end{equation*}

\end{theorem}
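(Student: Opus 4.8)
The plan is to reduce the four displayed identities to three equalities between fully-contextualized context sets, namely $\Xijof{H^{TP}(F,G)} = \Xijof{F}\cap\Xijof{G}$, $\Xijof{H^{FN}(F,G)} = \Xijof{F}\setminus\Xijof{G}$, and $\Xijof{H^{FP}(F,G)} = \Xijof{G}\setminus\Xijof{F}$, and then to convert each right-hand side into a finite sum of closed-form cardinalities. Once these equalities hold, \autoref{def:partition} guarantees that a partition model $P^{TP}$ of $H^{TP}$ satisfies $\Xijof{P^{TP}}=\Xijof{H^{TP}}$ while its features have pairwise-disjoint context sets, so $|\Xijof{F}\cap\Xijof{G}| = \sum_{p\in P^{TP}}|\Xijof{p}|$, each term being evaluated by Eq.~\ref{eq:cardfeature}; the same reasoning yields Eqs.~\ref{eq:T:method:FN} and~\ref{eq:T:method:FP}. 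The $TN$ identity is then pure counting: for each pair $(i,j)$ the four sets in Eqs.~\ref{eq:TPa}--\ref{eq:TNa} partition $\Xij$, whose cardinality is exactly $\prod_{k\in V\setminus\{i,j\}}|\val{X_k}|$, so $TN_{ij}$ is the complement of $TP_{ij}+FN_{ij}+FP_{ij}$, and summing over $(i,j)$ gives Eq.~\ref{eq:T:method:TN}. Throughout I will rely on the feature-level reading of dependence from Eqs.~\ref{eq:FIndep}--\ref{eq:fIndep}: since an FC context $\z\in\Xij$ assigns \emph{every} variable outside $\{X_i,X_j\}$, this specializes to $\Xijof{F}=\bigcup_{f\in F}\Xijof{f}$, with $\Xijof{f}=\emptyset$ precisely when $\lnot C_1(f)$, and with $\z\in\Xijof{f}$ iff $X_i,X_j\in\scope{f}$ and the assignments of $\p{f}$ agree with $\z$.

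\textbf{The $TP$ and $FP$ cases.} For the $TP$ equality I would argue directly: for features $f,g$ with $C_1(f),C_1(g)$, a context $\z$ lies in $\Xijof{f}\cap\Xijof{g}$ iff it agrees with both $\p{f}$ and $\p{g}$; this forces $f$ and $g$ to agree on the shared variables other than $X_i,X_j$ (i.e. $C_2(f,g)$) and makes $\z$ agree with $\p{f}\cup\p{g}$, which by \autoref{def:unionij} is exactly $\z\in\Xijof{f\cupij g}$; conversely $\lnot C_2(f,g)$ forces $\Xijof{f}\cap\Xijof{g}=\emptyset$. Taking the union over all admissible pairs, using distributivity of $\cap$ over $\cup$ and $\Xijof{F}=\bigcup_f\Xijof{f}$, yields $\Xijof{H^{TP}}=\Xijof{F}\cap\Xijof{G}$. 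The $FP$ equality then follows from the definition $H^{FP}(F,G)=H^{FN}(G,F)$, once the $FN$ equality is established, by swapping the roles of $F$ and $G$.

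\textbf{The $FN$ case (the crux).} I would first rewrite $\Xijof{F}\setminus\Xijof{G} = \bigl(\bigcup_f\Xijof{f}\bigr)\setminus\bigl(\bigcup_g\Xijof{g}\bigr) = \bigcup_{f\in F}\bigcap_{g\in G}\bigl(\Xijof{f}\setminus\Xijof{g}\bigr)$, and then evaluate the inner term $\bigcap_{g\in G}(\Xijof{f}\setminus\Xijof{g})$ via the case split encoded in Eqs.~\ref{eq:Fone}--\ref{eq:Gtwo}. If $f\in\Fone$ --- because $\lnot C_1(f)$ (so $\Xijof{f}=\emptyset$) or because some $g$ has $C_1(g)$ and $\p{g}\subseteq\p{f}$ (so $\Xijof{f}\subseteq\Xijof{g}$) --- the inner term vanishes, matching the exclusion of $\Fone$ from $H^{FN}$. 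Otherwise $\Gone=\emptyset$, and for each $g$: if $g\in\Gtwo$ (i.e. $\lnot C_2(f,g)$ or $\lnot C_1(g)$) then $\Xijof{f}\setminus\Xijof{g}=\Xijof{f}$; and if $g\notin\Gcomp$ then by the elementary feature-difference identity \autoref{lemma:singlediff} we have $\Xijof{\DE}=\bigcup_{d\in\DE}\Xijof{d}=\Xijof{f}\setminus\Xijof{g}$, where the partition of $\DE$ over indices $k\in\scope{g}\setminus\scope{f}$ in Eqs.~\ref{eq:DE}--\ref{eq:DEk} merely records, for each context to be removed, the least variable on which it must disagree with $g$. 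Hence the inner term collapses to $\bigcap_{g\notin\Gcomp}\Xijof{\DE}$ (the $\Gtwo$-factors contribute only $\Xijof{f}$, which is absorbed; and if $G\setminus\Gcomp=\emptyset$ one still gets $\Xijof{f}$, handled in Eq.~\ref{eq:HFN} by keeping $f$ explicit in the union). Distributing $\cap$ over $\cup$ once more gives $\bigcap_{g\notin\Gcomp}\bigcup_{d_g\in\DE}\Xijof{d_g}=\bigcup_{(d_g)\in\prod_{g\notin\Gcomp}\DE}\bigcap_g\Xijof{d_g}$, and since every $d_g$ already contains all assignments of $f$, each $\bigcap_g\Xijof{d_g}$ equals $\Xijof{f\cupij\bigcupij_{g}d_g}$ when the union over $(i,j)$ exists and $\emptyset$ otherwise --- so ill-defined tuples are silently dropped. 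Reindexing by $f$ and by the product element $\DEf$ reproduces exactly Eq.~\ref{eq:HFN}, with its first summand corresponding to $f\in\Ftwo\setminus\Fone$ and the second to $f\notin(\Ftwo\cup\Fone)$ (equivalently $\Gone=\Gtwo=\emptyset$, so every $g$ supplies a genuine difference set and $f$ need not be adjoined separately).

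\textbf{Partitioning, efficiency, and the main obstacle.} To finish I would (a) prove that Algorithm~\ref{alg:partition} returns a set satisfying \autoref{def:partition}, by induction on the features of $H$ processed: each elementary step replaces a feature by the output of \autoref{lemma:singlediff}, which preserves the represented context set and does not reintroduce overlaps with the already-partitioned features in $P$; and (b) bound the cost --- $|H^{TP}|\le|F|\,|G|$, every set $\DE$ and every partition model has size polynomial in $|V|$ (with $|F|,|G|$ as parameters), each $|\Xijof{p}|$ is one product of at most $|V|$ factors, and the outer sum ranges over $O(|V|^2)$ pairs --- so the whole computation is polynomial in $|V|$, the only residual blow-up being the Cartesian products over features inside $H^{FN}$ and $H^{FP}$, exactly as disclaimed. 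I expect the main obstacle to be the $FN$ step: keeping the nested case analysis over $(f,g)$ bookkeeping-correct, justifying the feature-difference construction $\DE$ (the least-disagreement partition of Eqs.~\ref{eq:DE}--\ref{eq:DEk}), and checking that every degenerate configuration --- empty $\DEf$, unions over $(i,j)$ that fail to exist, features violating $C_1$ --- is absorbed consistently rather than producing spurious or missing contexts; isolating the elementary identity $\Xijof{\DE}=\Xijof{f}\setminus\Xijof{g}$ into \autoref{lemma:singlediff} is what keeps the remainder of the argument tractable.
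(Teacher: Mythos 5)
Your proposal follows essentially the same route as the paper's proof: the same decomposition over pairs $(i,j)$, the same reduction to the set identities $\Xijof{H^{TP}}=\Xijof{F}\cap\Xijof{G}$ and $\Xijof{H^{FN}}=\Xijof{F}\setminus\Xijof{G}$ via the single-feature intersection and difference lemmas, the same $\Fone/\Ftwo/\Gone/\Gtwo$ case analysis with the relative-complements rewriting and cross-product distribution, the same partition-model summation of cardinalities, and the same complement counting for $TN$. The argument is correct; your explicit handling of the degenerate tuples in the cross-product (incompatible $d_g$'s yielding empty intersections that are silently dropped) is if anything slightly more careful than the paper's own treatment.
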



\vspace*{0.5cm}

  \begin{proof}
  The decomposition of $TP$, $FP$, and $FN$ into $TP_{ij}$, $FP_{ij}$, and $FN_{ij}$ follows from Eqs.~\ref{eq:TPij}-\ref{eq:TNij}. Then, the proof of
  the equivalences of these three cases with the computationally efficient expressions of the r.h.s.
  proceeds by demonstrating their equivalence with sets $H^{TP}$, $H^{FP}$, and $H^{FN}$ through~\cref{lemma:HTP,lemma:HFN}, presented and proven in the subsections immediately following this proof.

  We proceed now to discuss the details of these proofs, together with the case of $TN$ that follows a different structure.

    \begin{enumerate}

            \item \emph{True positives}:

        \begin{align*}
          TP_{ij} &=  \left|\Xijof{F} \cap \Xijof{G}\right| & \text{by Eq.~\ref{eq:TPa}}\\[0.7ex]
    &=  \left|\Xijof{H^{TP}}\right| & \text{by~\autoref{lemma:HTP}}\\[0.7ex]
                                    &= \left|\Xijof{P^{TP}}\right| & \text{by equivalence (\autoref{def:partition})}\\[0.7ex]
                  &= \left|\bigcup_{p\in P^{TP}} \Xijof{p}\right|   &\text{by~\autoref{al:Funion} (see~\ref{app:auxiliary})}\\[0.7ex]
                  &= \sum_{p\in P^{TP}}\Xijof{p} & \text{by the fact that $P^{TP}$ is a partition.}
        \end{align*}

            \item \emph{False negatives:}

        \begin{align*}
          FN_{ij} &= \left|\Xijof{F} \setminus \Xijof{G} \right|  & \text{By Eq.~\ref{eq:FNa}}\\[0.7ex]
    &=  \left|\Xijof{H^{FN}}\right| & \text{by~\autoref{lemma:HFN}}\\[0.7ex]
                                    &= \left|\Xijof{P^{FN}}\right| & \text{by equivalence (\autoref{def:partition})}\\[0.7ex]
                  &= \left|\bigcup_{p\in P^{FN}} \Xijof{p}\right|  &
      \text{by~\autoref{al:Funion},~\ref{app:auxiliary}}\\[0.7ex]
                  &= \sum_{p\in P^{FN}} \Xijof{p} & \text{by the fact that $P^{FN}$ is a partition.}
        \end{align*}

          \item  \emph{False positives:} For $FP_{ij}$, the proof follows from the fact that it is the same computation as for $FN_{ij}$ but exchanging the operands.

      \item  \emph{True negatives:}  The count $TN$ is computed as the remainder of counts, that is, by discounting the sum of counts $TP$, $FP$, and $FN$ from the total number of fully contextualized configurations. In Eq.~\ref{eq:T:method:TN}, the latter is represented by the first term, a simple operation involving
      a sum over each pair $(i,j)$ where for each, we compute the product
      of the cardinalities of all remaining variables in the domain except for $X_i$ and $X_j$:
            \begin{equation*}
          \sum_{i\neq j\in V}{\,\, \prod_{k \in V\setminus \{i,j\}
        }{\left|val(X_k)\right|} } .
          \label{eq:TotalComparisonsFormula}
      \end{equation*}
            For illustration purposes let us consider the particular case where $|val(X_k)|=m$ for all $k \in V$, for which $TN$ results in

        \begin{align*}
         TN = m^{|V|-2} \; \binom{|V|}{2}  - TP - FN - FP.
          \label{eq:TNformula2}
        \end{align*}
        For example, in a domain with 6 binary variables, i.e., $m=2$, the total number of configurations  is $m^{|V|-2} \; \dbinom{|V|}{2} = 2^4 \times \frac{6 \times 5}{2} = 16 \times 15 = 240$.

    \end{enumerate}
\end{proof}

The equivalences in the proofs for $TP_{ij}$, $FN_{ij}$ and $FP_{ij}$ are possible by the following lemmas, which are proven in~\ref{app:lemmasHTPHFN}:

\begin{restatable*}{lemma}{htp}\label{lemma:HTP}
  Let $F$ and $G$ be two arbitrary log-linear models over $X_V$, and $H^{TP}(F,G)$ be the set of union-features over $F$ and $G$ defined in Eq.~\ref{eq:HTP}, then

  \[
      \Xijof{F} \cap \Xijof{G} = \Xijof{H^{TP}(F,G)}.
  \]
\end{restatable*}

\begin{restatable*}{lemma}{hfn}\label{lemma:HFN}
  Let $F$ and $G$ be two arbitrary log-linear models over $X_V$, and $H^{FN}(F,G)$ be the set of union-features over $F$ and $G$ defined in Eq.~\ref{eq:HFN}, then

  \[
      \Xijof{F} \setminus \Xijof{G} = \Xijof{H^{FN}(F,G)}
  \]
\end{restatable*}

In order to produce the cardinalities of Eqs.~\ref{eq:T:method:TP},~\ref{eq:T:method:FN}~and~\ref{eq:T:method:FP} efficiently, \cref{lemma:HTP,lemma:HFN} propose the decomposition of these operations over simpler elements (features) and define sets $H^{TP}$ and $H^{FN}$ which produce the same FC context sets; furthermore, these sets can be converted into partition sets, which allow for an efficient computation of their cardinality.
In the following section, we show the basis for the definitions of these sets, which are constructed by performing syntactic operations over features, thus avoiding the exponential cost of comparing all elements in $\Xij$ for each $(i,j)$.

\subsubsection{Efficient operations over single features}\label{subsubsec:singlefeatures}
The equations in~\cref{lemma:HTP,lemma:HFN} show that, to efficiently compute $TP_{ij}$, $FN_{ij}$, $FP_{ij}$ and $TN_{ij}$, it
is necessary to produce a procedure for computing $\Xij(f) \cap \Xij(g)$ and another for $\Xij(f) \setminus \Xij(g)$, both of which avoid the complexity of $\Xij$.
These efficient procedures are described in
~\cref{lemma:singleintersection,lemma:singlediff} below.
Due to the length and complexity of their proofs, these are presented in~\ref{app:lemmas}.

We start with ~\autoref{lemma:singleintersection}, which provides an efficient computation of the intersection, without the need to count through the exponential number of FC contexts.
Instead, it can arrive at the same set by comparing the scope and assignments of $f$ and $g$ and, in the worst case, producing a new feature whose contexts are equivalent to the intersection of contexts.

\begin{restatable*}{lemma}{singleinter}\label{lemma:singleintersection}

    Let $f$ and $g$ be two arbitrary features over $X_V$, and let $X_i \neq X_j$ be any two different variables in $X_V$.
    Then, the intersection of the FC contexts $\Xijof{f}$ and $\Xijof{g}$ can be efficiently computed as

                                        \begin{equation}
        \Xijof{f} \cap \Xijof{g} = \left\{
               \begin{array}{ll}
                    \emptyset & \quad if~\lnot C_1(f) \lor \lnot C_1(g) \lor \lnot C_2(f,g)\\[5pt]
		    \Xijof{{f} \cupij {g}} & \quad otherwise,            \end{array}
          \right.
          \label{eq:intersection1}
        \end{equation}
     where ${f} \cupij {g}$ is a feature union over $(i,j)$ of ${f}$ and ${g}$ according to ~\autoref{def:unionij},  and $C_1(f),\,C_1(g)$ and $C_2(f,g)$ are defined in
     Eqs.~\ref{eq:C1}-\ref{eq:C2} in \autoref{T:method}.
     Note that the ambiguous definition of~\autoref{def:unionij} is valid, since the assignments for $X_i$ and $X_j$ are only needed by the $\Xij$ function to be non-empty (the values that these variables take in the features need not be equal in both $f$ and $g$, since only by their presence in the scope of said features they encode a dependency among the distributions of the variables).

In other words, for there to be a non-empty intersection, both $X_i$ and $X_j$ must be present in both features and these features must have no incompatible assignments.
\end{restatable*}

\begin{proof}
  See~\ref{app:lemmas}.
\end{proof}

Given their importance, we will illustrate the different cases in Eq.~\ref{eq:intersection1} with some examples.

  \begin{example}\label{ex:intersection}

    We consider a domain $X_V, V=\{0,1,2,3,4,5\}$, where each variable can take values in $\{0,1,2\}$. Let $f$ and $g$ be two features defined as

\begin{align*}
& f = < X_0=2, X_1=1, X_2=0, X_5=0 >, \\
& g = < X_0=0, X_1=1, X_2=0, X_4=1 >.
\end{align*}

We have that both $X_0$ and $X_1$ are in the scope of both features ($C_1$ holds for both $f$ and $g$).
The only variable in $S_f \cap S_g \setminus \{X_0,X_1\}$ is $X_2$, and $X_2(f)=X_2(g)=0$; therefore, $C_2(f,g)$ holds.
Since all conditions are satisfied we have the non-empty case, i.e.,

 \[
   f \cup^{01} g = < X_0=\cdot, X_1=\cdot, X_2=0, X_4=1, X_5=0 >.
    \]

    Then, the resulting FC context set is

    \begin{align*}
    \Xijof{f \cup^{01} g} = \left\{ \right. & < X_2=0, X_3=0, X_4=1, X_5=0 >, \\
                                   &  < X_2=0, X_3=1, X_4=1, X_5=0 >, \\
				 &  \left. < X_2=0, X_3=2, X_4=1, X_5=0 > \right\} .
			       \end{align*}


  \end{example}

    \begin{example}\label{ex:intersection2}
Consider the same scenario as~\autoref{ex:intersection} but replace $f$ by

\begin{align*}
& f = < X_1=1, X_2=0, X_5=0 >.
\end{align*}

In this case, $C_1(f)$ does not hold (because $X_0 \not\in \scope{f}$), making
$C_1$ \emph{false}, satisfying the condition for the empty case.
Intuitively, this is because $f$ does not encode any dependencies between variables $X_i$ and $X_j$.
  \end{example}

    \begin{example}\label{ex:intersection3}
Following~\cref{ex:intersection,ex:intersection2}, replace $f$ by

\begin{align*}
& f = < X_0=2, X_1=1, X_2=1, X_5=0 >.
\end{align*}

In this case, $C_2(f,g)$ does not hold, because $X_2(f)=1$ while $X_2(g)=0$, by which we have that $X_2(f)\neq X_2(g)$; thus, the intersection is empty.
This expresses the fact that the two features have no FC contexts in common.
  \end{example}

We present now \autoref{lemma:singlediff}, which provides an efficient computation
of the difference sets of Eqs.~\ref{eq:FNa}~and~\ref{eq:FPa}, without the need to
compare through the exponential number of FC contexts. Similarly to
the intersection case of  \autoref{lemma:singleintersection}, the Lemma results
in several possible values for the difference set by comparing the scope and assignments of the input
features $f$ and $g$.

\begin{restatable*}{lemma}{singledif}\label{lemma:singlediff}

    The difference of FC sets of arbitrary single features $f$ and $g$ over $X_V$ can be efficiently computed as

  \begin{equation}
    \Xij(f) \setminus \Xij(g)  \equiv \left\{
    \begin{array}{ll}
         \emptyset & \quad  if~ g^{ij} \subseteq f^{ij}\ \land C_1(g), \text{ or }\ \lnot  C_1(f) \\[5pt]
	 \Xijof{f} & \quad  \lnot C_2(f,g)   \ \text{or}\ \lnot C_1(g)\\[5pt]
	 \bigcup_{d\in \DE} \Xijof{d} & \quad otherwise,      \end{array}
     \right.\label{eq:diff:D}
  \end{equation}

  while $C_1(f)$, $C_1(g)$ (presence of variables in the scope of features) and
  $C_2$ (existence of mismatched values) are the same conditions defined in
  Eq.~\ref{eq:intersection1}, ~\autoref{lemma:singleintersection}, and the set of features $\DE$ is defined as follows:

         \begin{equation}
      \DE = \bigcup_{X_k \in S_g\setminus S_f} {\DE}_{(k)} ,  \label{eq:DE}
  \end{equation}

  with
  \begin{equation}
      {\DE}_{(k)} = \left\{ \text{\ features }d \  \middle\vert \begin{array}{ll}
       S_d = S_f\cup S_{\p{g}}^{\leq k}; \\[6pt]
      \forall  X_m\in S_{f}, X_m(d)=X_m(f);  \\[6pt]
      \forall m < k,\text{ s.t. } X_m\in S_{\p{g}}^{\leq k}\setminus S_f, X_{m}(d) = X_{m}(g); \\[6pt]
      X_k(d) \neq X_k(g) \end{array}\right\},\label{eq:DEk}
  \end{equation}

    where the notation $X_m(d)$ refers to the assignment to variable $X_m$ in
    feature $d$ (likewise for the remaining indices and features), and $S_{\p{g}}^{\leq k} =
    \{X_m\in S_{\p{g}} | m \leq k\}$.

\end{restatable*}

\begin{proof}
  See~\ref{app:lemmas}.
\end{proof}

 Finally, we provide an example for the most complex case:

 \begin{example} \label{ex:difference}

   The example follows ~\autoref{ex:intersection} to obtain the  difference for the pair $(X_0,X_1)$, were we had
     \begin{align*}
        f = < X_0=2, X_1=1, X_2=0, X_5=0 >,
     \end{align*}

       but we will change $g$, by adding an assignment $X_3=0$ to it in order to simplify the example:

       \begin{align*}
        g = < X_0=0, X_1=1, X_2=0, X_3=0, X_4=1 >.
     \end{align*}

     From inspection, we may deduce the following: feature $f$ has $|val(X_3) \times val(X_4)|=9$ FC contexts for the pair, while $g$ has $|val(X_5)|=3$; however, only one of the contexts in $\Xijof{g}$ is in $\Xijof{f}$, namely, $< X_0=0, X_1=1, X_2=0, X_3=0, X_4=1, X_5=0 >$.
    Therefore, we should arrive at a difference set whose FC context set contains 8 contexts.

     Starting from Eq.~\ref{eq:diff:D}, neither of the first two conditions
     hold, by which the difference set D must be defined as a set of features
     following the third condition.

     The sets ${\DE}_{(k)}$ are firstly determined by $S_g\setminus S_f
     =\{X_3,X_4\}$; then, there will be two difference sets corresponding to $k\in \{3,4\}$.
We will analyze each in turn.

     \begin{itemize}
      			       \item $k=3$: the scope of the features will include all variables in $S_f$ and also, in this case, the
	 variable $X_3$, since $S_f \cup S_{{g^{01}}}^{\leq
	 3}=\{X_0,X_1,X_2,X_5\}\cup \{X_3\}= \{X_0,X_1,X_2,X_3,X_5\}$. $X_2$ and $X_5$ will
	 match their values in $f$ (as stated in the second line in
	 Eq.~\ref{eq:DEk}), while $X_3$ must take values that differ from
	 $X_3(g)=0$ (see the last line in Eq.~\ref{eq:DEk}).
	 This implies that we must generate two different features:
	 \begin{align*}
	   d_1 &= <X_0=2, X_1=1, X_2=0,X_3=1,X_5=0>, and \\
	   d_2 &= <X_0=2, X_1=1, X_2=0,X_3=2,X_5=0>.
	 \end{align*}
        \item $k=4$: in this case we will also generate two features, but they
	  have a scope containing $\{X_0,X_1,X_2,X_3,X_4,X_5\}$, and there is one
	  variable
	  $X_m\in S_{{g^{01}}}^{\leq 4}$ such that $m < 4$; namely, $X_3$. This variable now takes
	  the same value as $X_3(g)$ (see third line in Eq.~\ref{eq:DEk}), while
	  $X_4$ takes values that differ from $X_4(g)=1$:
	 \begin{align*}
	   d_3 &= <X_0=2, X_1=1, X_2=0,X_3=0,X_4=0,X_5=0>, \\
	   d_4 &= <X_0=2, X_1=1, X_2=0,X_3=0,X_4=2,X_5=0>.
	 \end{align*}

     \end{itemize}
	 Lastly, the set $\DE$ is the union of both $k$ sets which is simply

	 \begin{equation}
	   \DE = {\DE}_{(3)} \cup {\DE}_{(4)} = \{ d_1,d_2,d_3,d_4  \}.
	 	 \end{equation}

	 This produces the total of 8 FC contexts in the difference.
	 On the one hand, $d_1$ and $d_2$ produce 6 out of the 9 contexts of $\Xijof{f}$, which correspond to all configurations of $X_4$ for two configurations of $X_3$ ($X_3=1$ and $X_3=2$).
	 On the other hand, $d_3$ and $d_4$ add the remaining configurations for $X_3=0$ (which should be in the difference because they belong to $\Xijof{f}$) but excluding the configuration $X_3=0,X_4=1$, which is precisely the one that is in $\Xijof{g}$ and should be absent in the difference.

 \end{example}

\section{Metric}\label{sec:distance}

In this section we propose a measure based on the FC confusion matrix counts $FP$ and $FN$ for comparing two log-linear model structures $\MP$ and $\MQ$;  and prove it  is a \emph{distance measure} or \emph{metric} by proving it satisfies all four properties [Chapter 3, \cite{aliprantis2006}].

The comparison measure is formally defined as:

\begin{equation}
      \dist{\MP}{\MQ}= FP + FN
  \label{eq:distance}
\end{equation}
where $FP$ and $FN$ correspond to the total count of false positives and false negatives, respectively, of the FC confusion matrix defined in Section~\ref{sec:structureComparisonLL}.
This measure is the complement of the unnormalized FC accuracy $TP+TN$, in that when one equals zero, the other takes its maximum value corresponding to the cardinality of the FC triplet set of Eq.~\ref{eq:tripletesP}.

The proof is formalized in the following theorem:

\begin{theorem}\label{T:distance}

    Given two log-linear model structures  $\MP$ and $\MQ$, the measure
$\dist{\MP}{\MQ}= FP + FN$
is a metric, i.e., it satisfies all four properties: \emph{nonnegativity},
\emph{discrimination} (also known as \emph{identity of the indiscernibles}),
\emph{symmetry}, and \emph{triangle inequality} (also known as
\emph{subadditivity}).

\end{theorem}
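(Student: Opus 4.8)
The strategy is to verify each of the four metric axioms for $\dist{\MP}{\MQ} = FP + FN$ in turn, treating the FC confusion matrix counts as counts of elements in the sets $\Xijof{F}$ and $\Xijof{G}$ summed over pairs $(i,j)$. The key observation that makes everything work is the set-theoretic reformulation established in Section~\ref{sec:setform}: by Eqs.~\ref{eq:FNa}--\ref{eq:FPa}, $FN = \sum_{i\neq j} |\Xijof{F} \setminus \Xijof{G}|$ and $FP = \sum_{i\neq j} |\Xijof{G} \setminus \Xijof{F}|$, so the measure is simply the sum, over all pairs $(i,j)$, of the cardinality of the symmetric difference $\Xijof{F} \,\triangle\, \Xijof{G}$. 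Once phrased this way, three of the four properties become essentially immediate facts about symmetric differences of finite sets, and the fourth reduces to the known fact that $|A \triangle C| \le |A \triangle B| + |B \triangle C|$.

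\textbf{Nonnegativity and symmetry.} Nonnegativity is immediate: $FP$ and $FN$ are cardinalities of finite sets, hence nonnegative, so their sum is nonnegative. Symmetry follows because swapping $\MP$ and $\MQ$ swaps the roles of $FP$ and $FN$ (indeed $FP(F,G) = FN(G,F)$ by Eqs.~\ref{eq:FNa}--\ref{eq:FPa}), leaving $FP+FN$ unchanged; equivalently, $|\Xijof{F}\triangle\Xijof{G}| = |\Xijof{G}\triangle\Xijof{F}|$ termwise.

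\textbf{Discrimination.} Here I must show $\dist{\MP}{\MQ}=0$ if and only if $\MP$ and $\MQ$ are ``identical'' as structures. The forward direction: if $FP+FN=0$ then each nonnegative term vanishes, so $\Xijof{F} = \Xijof{G}$ for every pair $(i,j)$; by Eq.~\ref{eq:XijF} this says $\cd{X_i}{X_j}{\z}_F \iff \cd{X_i}{X_j}{\z}_G$ for all $\z \in \Xij$ and all $i\ne j$, i.e.\ $\DP{F} = \DP{G}$. The subtle point — analogous to Eq.~\ref{eq:discriminationDMUGMs} for UGMs — is arguing that equality of the \emph{reduced} (fully-contextualized) dependency models forces equality of the \emph{complete} dependency models $\mathcal{D}_C(F) = \mathcal{D}_C(G)$; this is exactly where the equivalences of Section~\ref{sec:independence}, particularly Eq.~\ref{eq:ci-csi} and Eq.~\ref{eq:implicationindependence} together with the Strong Union property (illustrated in Example~\ref{ex:DP}), are invoked: every general contextualized (in)dependence can be reconstructed from the fully-contextualized ones. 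The converse is trivial: identical structures yield identical $\Xijof{\cdot}$ sets and hence zero symmetric differences. I expect this direction — establishing that the FC restriction loses no discriminating power — to be the main obstacle, since it requires carefully chaining the independence-axiom equivalences rather than a one-line set argument.

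\textbf{Triangle inequality.} Given three structures $F$, $G$, $H$ I must show $\dist{F}{H} \le \dist{F}{G} + \dist{G}{H}$. Since the measure decomposes as a sum over pairs $(i,j)$ of $|\Xijof{F}\triangle\Xijof{H}|$, it suffices to prove the inequality pairwise, i.e.\ $|\Xijof{F}\triangle\Xijof{H}| \le |\Xijof{F}\triangle\Xijof{G}| + |\Xijof{G}\triangle\Xijof{H}|$, and then sum over all $(i,j)$. This is the standard fact that symmetric difference satisfies the triangle inequality: from the inclusion $A\triangle C \subseteq (A\triangle B)\cup(B\triangle C)$ one gets $|A\triangle C| \le |(A\triangle B)\cup(B\triangle C)| \le |A\triangle B| + |B\triangle C|$, using finiteness and subadditivity of cardinality on unions. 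I would state the set inclusion, justify it by a short membership-chase (an element in $A\triangle C$ lies in exactly one of $A,C$, hence is ``moved'' across the boundary at least once in going $A\to B$ or $B\to C$), and conclude. No obstacle is expected here beyond bookkeeping.
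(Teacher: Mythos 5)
Three of your four properties are handled correctly and essentially as the paper does: nonnegativity and symmetry are immediate from the set formulation, and your pairwise symmetric-difference argument for the triangle inequality is equivalent to the paper's per-triplet indicator argument (the paper sums $\tilde d_t$ over triplets $t\in\TripletesP$ and does a case analysis on the values $0/1$; you sum $|\Xijof{F}\,\triangle\,\Xijof{H}|$ over pairs and invoke $A\triangle C\subseteq(A\triangle B)\cup(B\triangle C)$ --- same content, both fine).

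The genuine gap is in discrimination, and it is exactly the step you flag as ``the main obstacle'' and then do not carry out. Showing that $\dist{\MP}{\MQ}=0$, i.e.\ $\DP{\MP}=\DP{\MQ}$, forces $\DC{\MP}=\DC{\MQ}$ is the substantive content of the theorem, and citing Eq.~\ref{eq:ci-csi}, Eq.~\ref{eq:implicationindependence} and ``Strong Union'' does not yet constitute a proof: Eq.~\ref{eq:implicationindependence} only reduces a mixed assertion $\assertion{X_i}{X_j}{\uu,X_W}$ to strictly contextualized assertions $\assertion{X_i}{X_j}{\uu,x_W}$ whose context $\{\uu,x_W\}$ still need not cover $V\setminus\{i,j\}$, so these are not elements of $\TripletesP$ and you still owe an argument that their truth values are determined by the fully contextualized ones. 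The paper closes this by the counter-positive: given a discrepancy $\ci{X_i}{X_j}{\uu,X_W}_{\MP}$ vs.\ $\cd{X_i}{X_j}{\uu,X_W}_{\MQ}$, it invokes H{\o}jsgaard's separation criterion on the instantiated graphs $\graph{\uu}{\MP}$ and $\graph{\uu}{\MQ}$, extracts a path outside $W$ present only in $\MQ$, locates an edge $(k,l)$ on it that is missing in $\MP$, and only then applies Strong Union and Eqs.~\ref{eq:implicationindependence}--\ref{eq:implicationdependence} to exhibit an FC context $\z\in\Xkl$ on which the models disagree --- note that the FC discrepancy is produced at a \emph{different} pair $(k,l)$, a possibility your sketch does not anticipate. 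An alternative route that stays closer to your sketch does exist: from Eqs.~\ref{eq:FIndep}--\ref{eq:fIndep} one can show $\cd{X_i}{X_j}{x_U}_F$ holds iff some FC extension $\z\supseteq x_U$ lies in $\Xijof{F}$ (extend $x_U$ by the witnessing feature's assignments), so the complete dependency model is determined pairwise by the FC context sets; but that argument must actually be written out, and it is not the one your cited equivalences supply.
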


\begin{proof}
We prove each property separately:

\begin{enumerate}[i]
  \item \emph{Nonnegativity} holds trivially because the measure is defined as the cardinality of a set, which is always nonnegative.
  \item \emph{Discrimination} can be stated as

    \begin{equation*}
    \dist{\MP}{\MQ}=0 \iff \MP = \MQ.
      \label{eq:identity}
    \end{equation*}
where the equality of the two models in the r.h.s., understood as the equality of their dependencies and independencies, is a shorthand for the equality of their \emph{complete} dependency models, i.e., $\DC{\MP} = \DC{\MQ}$.
We begin by noting that, according to the confusion matrix over the complete dependency models of Eqs.~\ref{eq:TPcomplete}-\ref{eq:TNcomplete}, when the two complete dependency models are equal, both the false positives and false negatives are zero.
Thus, if we define the complete measure as their sum, i.e.,
    \begin{equation*}
            \distcomplete{\MP}{\MQ}= FP_C + FN_C,
        \label{eq:distancecomplete}
    \end{equation*}
then $\MP = \MQ \iff \distcomplete{\MP}{\MQ}=0 $.

Starting by the right-to-left implication, we have that $\MP = \MQ$: this implies that $\distcomplete{\MP}{\MQ} = 0$ and thus $FP_C=FN_C=0$.
If we now consider that, for any model $H$, $\DP{H} \subset \DC{H}$ (in particular for $\MP$ and $\MQ$), then it must always be the case that $FP(\MP, \MQ) \leq FP_C$ and $FN \leq FN_C$.
Then, if $FP_C=FN_C=0$, it follows that $FP=FN=0$ and thus $\dist{\MP}{\MQ}=0$.

The left-to-right implication requires a more detailed analysis.
We provide a proof by transitivity, by showing that

    \begin{equation}
      \dist{\MP}{\MQ} = 0 \implies \distcomplete{\MP}{\MQ} = 0,
      \label{eq:identityimplication}
    \end{equation}

which by in turn implies $\MP = \MQ$.

We proceed by proving the contrapositive of Eq.~\ref{eq:identityimplication}, assuming that $\distcomplete{\MP}{\MQ} > 0$ and showing that this results in $\dist{\MP}{\MQ} > 0$.
Distances higher than zero imply that there is at least one mismatch in the corresponding dependency models of $\MP$ and $\MQ$.
Thus, proving the contrapositive requires proving that any given discrepancy in the complete model produces a discrepancy in the FC model.
This is trivial for discrepancies coming from (in)dependencies with FC conditioning sets; so we will consider an arbitrary discrepancy that comes from a non-FC conditioning set in the complete model.
We assume that the independence $\ci{X_i}{X_j}{\uu,X_W}_{\MP}$ holds for model $\MP$ but does not hold in $\MQ$, i.e., $\cd{X_i}{X_j}{\uu,X_W}_{\MQ}$; and prove that this produces a discrepancy in their corresponding FC dependency models, i.e., there exist some pair of variables $(X_k, X_l), k,l\in V$ and some FC context $\z \in \Xkl $, such that $\ci{X_k}{X_l}{\z}_{\MP}$ but $\cd{X_k}{X_l}{\z}_{\MQ}$.
In short,

    \begin{eqnarray}
        \ci{X_i}{X_j}{\uu,X_W}_{\MP} \land \cd{X_i}{X_j}{\uu,X_W}_{\MQ} \nonumber \\
        \implies \exists \z \in \Xkl,   \ci{X_k}{X_l}{\z}_{\MP} \land \cd{X_k}{X_l}{\z}_{\MQ}. \label{eq:contradiction}
    \end{eqnarray}

The proof involves the concept of paths in instantiated graphs, and a concept proposed in [Theorem 1, \cite{hojsgaard2004statistical}] reproduced below:

\begin{theorem}\label{t:hojsgaard}
    [\cite{hojsgaard2004statistical}]{
  Let $\graph{\uu}{}$ be the graph instantiated by $\uu$. Then $\ci{X_i}{X_j}{\uu,X_W}$ if and only if~$W$ separates $i$ and $j$ in $\graph{\uu}{}$.
}\label{aux:global}
   \end{theorem}

By the dependence $\cd{X_i}{X_j}{\uu,X_W}_{\MQ}$ and \autoref{t:hojsgaard}, we have that, in the instantiated graph \graph{\uu}{\MQ} (Figure~\ref{fig:graph:D2}), there is a path from $i$ to $j$ satisfying that none of its nodes are in $W$.
We denote the sequence corresponding to this path by $\secuencia$.

Now, by the independence $\ci{X_i}{X_j}{\uu,X_W}_{\MQ}$ and \autoref{t:hojsgaard}, in the instantiated graph \graph{\uu}{\MP} (Figure~\ref{fig:graph:D1}) there is no path from $i$ to $j$ that is disconnected from $W$, i.e., a path satisfying that none of its nodes are in $W$.
In particular, the sequence of nodes $\secuencia$ cannot be a path in \MP.
This implies that at least one pair of subsequent nodes in $\secuencia$ has no edge between them in \graph{\uu}{\MP}, while it does have an edge between them in \graph{\uu}{\MQ}. We have denoted this edge by indices $k$ and $l$, as shown in the figures.

By simple inspection of the figure one can infer that $W$ separates $k$ and $l$ in \graph{\uu}{\MP}.
From the right-to-left implication of \autoref{t:hojsgaard}, we have that $\ci{X_k}{X_l}{\uu,X_W}_{\MP}$, and then the axiom of Strong Union\footnote{For a model $H$, the Strong Union axiom is satisfied if: $\ci{X_A}{X_B}{X_Z}_H \implies \ci{X_A}{X_B}{X_{Z\cup W}}_H$}, implies that $\ci{X_k}{X_l}{\uu,X_{W'}}_{\MP}$,  with $W' = V\setminus\left( \{k,l\}\cup U \right)$.

Before concluding, we will prove a similar equivalence for the dependence case.
For that, we note that a direct edge between $k$ and $l$ in the graph \graph{\uu}{\MP} implies that no set of nodes can separate them, not even the set containing all other nodes in the graph, which is precisely $W'$.
Applying the contrapositive of \autoref{t:hojsgaard}, we have that $\cd{X_k}{X_l}{\uu,X_{W'}}_{\MQ}$.
To conclude, we recall the following equivalence of context-specific independencies, presented in Section~\ref{sec:independence}, Eqs.~\ref{eq:implicationindependence} and \ref{eq:implicationdependence},  applied over the above (in)dependencies for $k$ and $l$ over the conditioning set $\{\uu, X_{W'}\}$:

    \begin{eqnarray*}
      \ci{X_i}{X_j}{\uu,X_{W'}} &\equiv \forall x_{W'} \in \val{X_{W'}}, \ci{X_i}{X_j}{\uu,x_{W'}}, \\
      \cd{X_i}{X_j}{\uu,X_{W'}} &\equiv \exists x_{W'} \in \val{X_{W'}}, \cd{X_i}{X_j}{\uu,x_{W'}}.
          \end{eqnarray*}

Let us denote as $\z$ the particular context $\uu, x_{W'}$ for which the second equivalence holds.
Since the first equivalence holds for any such context, it holds for $\z$ as well.
We thus have that $\ci{X_k}{X_l}{\z}_{\MP}$ holds for model $\MP$, and $\cd{X_k}{X_l}{\z}_{\MQ}$ holds for model $\MQ$, matching the r.h.s. of Eq.~\ref{eq:contradiction}, and thus the left-to-right part of the discrimination property.

\begin{figure}
\begin{subfigure}{.5\textwidth}
  \centering

\begin{tikzpicture}[node distance=3cm, thick,
                    main node/.style={circle,draw,font=\sffamily\Large\bfseries}]

    \node[main node,line width=1.5mm] (5) {W};
    \node[main node] (1) [below left of=5] {i};
  \node[main node] (2) [below right of=5] {j};
  \node[main node] (3) [below of=1] {k};
  \node[main node] (4) [below of=2] {l};

  \path
    (1) edge [ultra thick, double] node {} (5)
    (2) edge [ultra thick, double, left] node {} (5);

  \tikzset{decoration={snake,amplitude=.4mm,segment length=2mm,
           post length=0mm,pre length=0mm}}
  \draw[decorate] (3) -- (1);
  \draw[decorate] (2) -- (4);

\end{tikzpicture}

\caption{\graph{\uu}{\MP}}
  \label{fig:graph:D1}
\end{subfigure}\begin{subfigure}{.5\textwidth}
  \centering

\begin{tikzpicture}[node distance=3cm, thick,
                    main node/.style={circle,draw,font=\sffamily\Large\bfseries}]

    \node[main node,line width=1.5mm] (5) {W};
    \node[main node] (1) [below left of=5] {i};
  \node[main node] (2) [below right of=5] {j};
  \node[main node] (3) [below of=1] {k};
  \node[main node] (4) [below of=2] {l};

  \path
    (1) edge [ultra thick, double] node {} (5)
    (2) edge [ultra thick,double] node {} (5)
          (4) edge node [left] {} (3);

  \tikzset{decoration={snake,amplitude=.4mm,segment length=2mm,
           post length=0mm,pre length=0mm}}
  \draw[decorate] (3) -- (1);
  \draw[decorate] (2) -- (4);

\end{tikzpicture}

\caption{\graph{\uu}{\MQ}}
  \label{fig:graph:D2}
\end{subfigure}
\caption{Two possible instantiated graphs in $\uu$~for $\ci{X_i}{X_j}{X_U,x_W}_{\MP}$
and $\cd{X_i}{X_j}{X_U,x_W}_{\MQ}$. $W$ represents a set of nodes and the thick
double lines represent edges between nodes $i$ and $j$ to all variables in $W$.
In this example, in $\graph{\uu}{\MQ}$ there is a sequence of nodes $\secuencia$, represented by waved lines, which constitutes a path from $i$ to $j$ that is disconnected from $W$, but for some pair of nodes $k,l \in \secuencia$, there is not an edge between them in $\graph{\uu}{\MP}$.} 

\end{figure}

  \item \emph{Symmetry} follows by the symmetry of $FP$ and
    $FN$, which can be inferred trivially from the commutativity of the logical AND operator in their definitions in Eqs~\ref{eq:FNdepmodel} and \ref{eq:FPdepmodel}, respectively.

  \item \emph{Triangle inequality.} We must prove that, for any three context-specific dependency models~\MA,~\MB~and~\MC,
    \begin{equation}
      \dist{\MA}{\MC} \leq \dist{\MA}{\MB} + \dist{\MB}{\MC}.
      \label{eq:triangle}
    \end{equation}

     To simplify the proof, we rewrite the measure as a sum of terms over the set $\IP$ of all FC triplets as defined in Eq.~\ref{eq:tripletesP},

\begin{equation}
  \dist{\MP}{\MQ}=\sum_{\Ind\in \IP}\disti{\MP}{\MQ},
  \label{eq:trianglesum}
\end{equation}

where

\begin{equation*}
  \disti{\MP}{\MQ}=
     \begin{cases}
       1 &\quad \text{if } \Ind_{\MP} \neq \Ind_{\MQ} \\
       0 &\quad \text{if } \Ind_{\MP} = \Ind_{\MQ}.
     \end{cases}
  \label{eq:distancei}
\end{equation*}

The interpretation is that each of these terms is an indicator of whether the
independence assertion $\Ind$ has the same value in both models or not, contributing $0$ or $1$, respectively. This would clearly correspond to the count of all mistmatches of the confusion matrix, which equals the sum of $FP$ and $FN$.

We can re-express Eq.~\ref{eq:triangle} using Eq.~\ref{eq:trianglesum} as

\begin{equation}
      \sum_{\Ind\in \IP}\disti{\MA}{\MC} \leq \sum_{\Ind\in \IP}\disti{\MA}{\MB} +
      \sum_{\Ind\in \IP}\disti{\MB}{\MC}.
  \label{eq:trianglesumi}
\end{equation}

Considering that the sum of two or more valid inequalities side by side is also
a valid inequality,
it is sufficient to prove that,
for any three models \MA, \MB, \MC~and all triplets $\Ind \in \IP$,

      \begin{equation}
      \disti{\MA}{\MC} \leq \disti{\MA}{\MB} + \disti{\MB}{\MC}.
      \label{eq:trianglei}
      \end{equation}

We consider the two possible cases for the l.h.s.: either $\disti{\MA}{\MC}=0$ or $\disti{\MA}{\MC}=1$.
On the one hand, if $\disti{\MA}{\MC}=0$, then the property is trivially satisfied because the
r.h.s. will always be nonnegative.
On the other hand, given $\disti{\MA}{\MC}=1$, then the only possible combination of
values that violates the property is

\begin{center}
\begin{tabular}[h]{ccc}
  \disti{\MA}{\MC} & \disti{\MA}{\MB} & \disti{\MB}{\MC} \\\hline
  1 & 0 & 0
 \end{tabular}
\end{center}

However, this combination is not possible.
If $\disti{\MA}{\MB}=0$ and $\disti{\MB}{\MC}=0$, then it holds that
$\Ind_{\MA}=\Ind_{\MB}$ and $\Ind_{\MB}=\Ind_{\MC}$.
Then, from transitivity, $\Ind_{\MA}=\Ind_{\MC}$;  which implies $\disti{\MA}{\MB}=0$.
Therefore, the combination above will never occur.

Since all possible cases satisfy Eq.~\ref{eq:trianglei} and the sum
on both sides for all $\Ind \in \IP$ (Eq.~\ref{eq:trianglesumi}) maintains the inequality, we can now conclude
that the property in Eq.~\ref{eq:triangle} is satisfied. \end{enumerate}
\end{proof}

\section{Summary of the development and computation of the proposed method}\label{sec:summary}

In the previous sections, we have presented the complete rationale for the definition and computation of our contribution, including proofs of the correctness of each step.
Due to the length and complexity of the thorough exposition, we now provide a summary of the main steps in Table~\ref{table:summary}, which may serve as a guide for understanding how the different parts of the development of the method are related and integrated from a broader perspective.


\begin{table}
\centering
\small
\begin{tabular}{llp{7cm}}
\hline
Section                                                  & \begin{tabular}[c]{@{}l@{}}Equations or\\references\end{tabular}         & Description                                                                                                 \\\hline
\multirow{2}{*}{Section~\ref{sec:structureComparisonLL}} & \ref{eq:TPcomplete}-\ref{eq:TNcomplete}   & Confusion matrix (CM) based on complete dependency models ($\mathcal{D}_C(\cdot)$)                                          \\
                                                         & \ref{eq:TPdepmodel}-\ref{eq:TNdepmodel}   & CM based on reduced dependency models                                                                       \\\hline
\multirow{4}{*}{Section~\ref{sec:setform}}               & \ref{eq:TPij}-\ref{eq:TNij}               & Alternative and equivalent definition of CM based on FC contexts ($\Xij$)                                   \\
                                                         & \ref{eq:TPa}-\ref{eq:TNa}                 & Equivalent definition with set operations                                                                   \\\hline
\multirow{4}{*}{Section~\ref{sec:theorem1}}              & \ref{eq:T:method:TP}-\ref{eq:T:method:TN} & Efficient computation of the CM based on partition models (Theorem~\ref{T:method})                                       \\
                                                         & \ref{eq:HTP}, \ref{eq:HFN}                & Definitions of the partition models $H^{TP}, H^{FP}$ and $H^{FN}$ (\emph{H sets}) (Lemmas~\ref{lemma:HTP} and~\ref{lemma:HFN})                       \\\hline
\ref{app:lemmas}                                         & \ref{eq:intersection1}, \ref{eq:diff:D}                 & Efficient computation for obtaining the \emph{H sets} with intersection and difference (Lemmas ~\ref{lemma:singleintersection} and ~\ref{lemma:singlediff})                  \\\hline
\multirow{2}{*}{Section~\ref{sec:preliminaries}}         & Algorithm~\ref{alg:partition}             & Transformation of the \emph{H sets} into partition models (as per Definition~\ref{def:partition}) \\
                                                         & \ref{eq:cardfeature}                      & Efficient computation of cardinality of FC context sets that allows for the computation of the partition models \\\hline
Section~\ref{sec:distance}                               & \ref{eq:distance}                         & Computation of the distance between structures of two log-linear models \\\hline
&&
\end{tabular}
\caption{Summary of the main equivalences and procedures of the method presented in this work.}
\label{table:summary}
\end{table}

In addition, considering that the main exposition of this work has focused on the theoretical aspects, we now provide a simple list of steps for using the method. 
This guide can be used as the basis for an implementation that obtains the confusion matrix and distance between two log-linear models.
It is written as a high-level procedure, since the details of implementation can vary widely depending on the computational representation of features, among other details, which are strongly dependent on the chosen programming language and libraries.
Such details can be solved in a straightforward manner.
Finally, it is important to note that, although the \emph{H sets} are a necessary step for explaining the method (and they perhaps constitute the most involved part), the actual computation of these sets is simpler than it could appear, and is ultimately based on two operations over single features.

The procedure for obtaining a comparison can thus be summarized in the following steps:
\begin{enumerate}
  \item Iterate over all pairs of variables $(X_i,X_j)$:
    \begin{enumerate}
    \item For each pair, build sets $H^{TP}, H^{FP}$ and $H^{FN}$ by using Eqs. \ref{eq:intersection1} and \ref{eq:diff:D}.
    \item Generate the partition models $P^{TP}$, $P^{FN}$ and $P^{FP}$ with Algorithm~\ref{alg:partition}.
    \item Compute the cardinalities of the partition models with Eq.~\ref{eq:cardfeature} to obtain $TP_{ij}, FP_{ij},$ and $FN_{ij}$.
  \end{enumerate}
  \item Sum the cardinalities of each pair to obtain TP, FP and FN.
  \item Compute TN according to Eq.~\ref{eq:T:method:TN} to complete the confusion matrix.
  \item Sum $FP+FN$ to obtain the distance.
\end{enumerate}



\section{Comparison between the proposed metric and
Kullback--Leibler divergence} %
\label{app:kl}


In statistics, divergences are functions that measure the similarity of probability distributions, the first of which was introduced by \cite{BHATTA43}.
However, perhaps the most popular of these functions is the \emph{Kullback--Leibler divergence}~\cite{kullback1951information,cover2012elements}.
At present, divergences are still in use, proven useful for statistical comparisons of probabilistic models \cite{gardner2018,venturini2015statistical}.
For model comparisons they have been used mainly in the process of disproving the null-hypothesis that one model differs from the other when the divergence equals zero.

The KL-divergence, also known as \emph{relative entropy}, is a measure of the similarity between two distributions, $p(X)$ and $q(X)$, and it has been used to compare log-linear models~\cite{ederaSchluterBromberg14,edera2014grow}.
It is defined as

\begin{equation*}
  D_{KL}(p||q) = \sum_{x \in \X}{p(x) log \frac{p(x)}{q(x)}},
\end{equation*}

following the conventions $0\ log \frac{0}{q(x)}=0$ and
$p(x)\ log \frac{p(x)}{0}=\infty$.

This divergence can be interpreted as the information lost when using $q(x)$ as an approximation of $p(x)$.
Alternatively, it can be thought of as an approximation of the distance between the two distributions, given that it satisfies the intuition that the cost of approximating $p(x)$ with $q(x)$ is lower when they are similar, being zero only when they are identical and positive in any other case.
Because of this, the KL-divergence satisfies two properties of a metric:
non-negativity and discrimination.\footnote{Note that discrimination is satisfied only in relation to the complete distributions $p(X)$ and $q(X)$, not their structures.}
Nevertheless, it is not a metric, as it does not satisfy symmetry nor the triangle inequality; as a consequence, it provides no sense of scale for the differences.


In spite of this, the KL-divergence is still useful as a measure of quality of a distribution learned from data sampled from a known distribution.
The biggest disadvantage is that this method is not a direct indicator of the similarity of two structures.
As it involves the parameters of the model, it can obscure false positives when
the parameters cancel spurious interaction terms that are present in the structure.
This translates as an obstacle when evaluating structure learning algorithms for possible tendencies to introduce false positives.

In the next example we have used a synthetic model with a well-defined structure, and randomly generated a great number of structures which possess varying numbers of either false positives or false negatives. We aim to illustrate the shortcomings of a measure that compares the similarity of probability distributions, including log-linear models, in contrast to our proposed method.
For this we learn the parameters for the random structures using data sampled from the original model, and over these models (random structure plus learned parameters) we compute the KL-divergence.
Then, the values obtained in this manner are visualized against the percentage of errors (either FP or FN) computed by our metric.

\subsection{Methodology}

We proceed, first, by using a synthetic model $M_O$ (the ``original'' model), defined over a domain of 6 variables: $\{X_0,\ldots,X_5\}$.
We selected this model due to its presence in related work (see ~\cite{ederaSchluterBromberg14}~and~\cite{edera2014grow}), and its suitability for introducing modifications in the structure that add a considerable number of false positives and false negatives.

The original model is represented as two instantiated graphs in Figure~\ref{fig:klgrafo}.
This representation is useful to show its two local structures: a saturated model (complete subgraph) for one context (given by one variable), and an independent model (empty subgraph).
In this way, the global structure contains a number of context-specific independencies.
The associated dependency model is:
  \begin{align*}
  \mathcal{D}_C(M) = \left\{ \cd{X_i}{X_j}{X_0=0}\right\}\cup \left\{\ci{X_i}{X_j}{X_0=1}\right\}; \forall i\neq j \in \{1,\cdots,5\}.
  \end{align*}
Parameters were generated by using different weights for the features that guarantee strong interactions.
Their design is explained in detail in~[Appendix B, \cite{ederaSchluterBromberg14}].
We have used the models generated for the experiments in the cited work, with the permission of its authors.

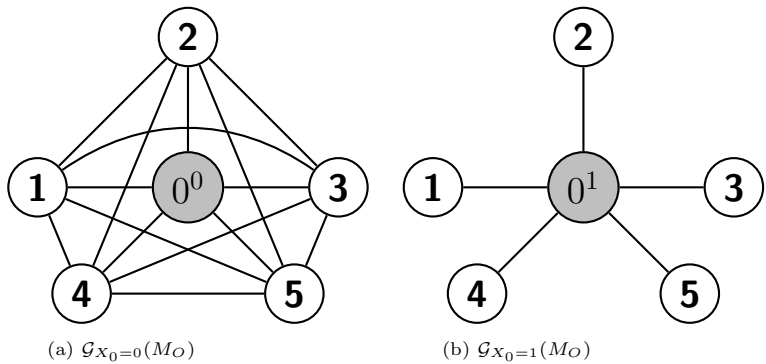
\begin{figure}[H]
    \centering
\begin{subfigure}{.25\textwidth}
\begin{tikzpicture}[node distance=2cm, thick,
                    main node/.style={circle,draw,font=\sffamily\Large\bfseries}]

  \node[main node, fill={rgb:black,1;white,3}] (1) {$0^0$};
  \node[main node] (6) [below right of=1] {5};
  \node[main node] (2) [left of=1] {1};
  \node[main node] (3) [above of=1] {2};
  \node[main node] (4) [right of=1] {3};
  \node[main node] (5) [below left of=1] {4};

  \path
    (1) edge node {} (3)
    (1) edge node {} (2)
    (1) edge node {} (4)
    (1) edge node {} (5)
    (1) edge node {} (6)
    (2) edge node {} (3)
    (2) edge [bend right=-35] node {} (4)
    (2) edge node {} (5)
    (2) edge node {} (6)
    (3) edge node {} (4)
    (3) edge node {} (5)
    (3) edge node {} (6)
    (4) edge node {} (5)
    (4) edge node {} (6)
    (5) edge node {} (6);

\end{tikzpicture}
\caption{$\graph{M_O}{X_0=0}$}
  \label{fig:klgrafo:a}
\end{subfigure}
\quad\quad\quad\quad
\quad\quad
\begin{subfigure}{.25\textwidth}

\begin{tikzpicture}[node distance=2cm, thick,
                    main node/.style={circle,draw,font=\sffamily\Large\bfseries}]

  \node[main node, fill={rgb:black,1;white,3}] (1) {$0^1$};
  \node[main node] (2) [left of=1] {1};
  \node[main node] (3) [above of=1] {2};
  \node[main node] (4) [right of=1] {3};
  \node[main node] (5) [below left of=1] {4};
  \node[main node] (6) [below right of=1] {5};

  \path
    (1) edge node {} (3)
    (1) edge node {} (2)
    (1) edge node {} (4)
    (1) edge node {} (5)
    (1) edge node {} (6);

\end{tikzpicture}
\caption{$\graph{M_O}{X_0=1}$}
  \label{fig:klgrafo:b}
\end{subfigure}%

\caption{Instantiated graphs associated to the synthetic model $M_O$. Grey nodes
correspond to the configurations $X_0=0$ (labeled as $0^0$) and $X_0=1$ (labeled as $0^1$), respectively.}\label{fig:klgrafo}
\end{figure}

The comparison is divided in two parts.
On the one hand, we will show the evaluation of both measures over a set of structures $\mathcal{M}_{FP}$ that only have false positives with respect to $M_O$ and, on the other hand, over structures that only have false negatives with respect to $M_O$, $\mathcal{M}_{FN}$.

Once the structures were generated, the next step was to compute our log-linear structure distance measure, $d$, directly between the synthetic structure $M_O$ and each randomly generated structure.
This consists in obtaining, for each structure $M_{FN} \in \mathcal{M}_{FN}$, the value $\dist{M_O}{M_{FN}}$, and for each structure $M_{FP}\in \mathcal{M}_{FP}$, $\dist{M_O}{M_{FP}}$.

The computation of the KL-divergence required three additional steps.
First, it was necessary to generate datasets of varying sizes from the synthetic model $M_O$.
Specifically, the number of datapoints used was $s \in \{ 50,100,1000,10000 \}$, and the sampling method was Gibbs sampling using the open source software package \emph{Libra toolkit}.
Second, we performed parameter learning on these datasets for all the random structures $\mathcal{M}_{FN}$ and $\mathcal{M}_{FP}$, in order to obtain the complete distribution estimated with each dataset.
Lastly, we computed the KL-divergence between $M_O$ and each model obtained in the second step, and averaged the values over the 10 sets of parameters of the model corresponding to each dataset size.

\subsection{Results}

Results are visualized in Figures~\ref{fig:kl1}~to~\ref{fig:kl4}.
An interactive version of these results is available at \url{https://jstrappa.shinyapps.io/llmc}, which provides more visualization options.
For each figure, results for $\mathcal{M}_{FN}$ and $\mathcal{M}_{FP}$ are plotted separately.
The graphs also show a comparison of values obtained for different sample sizes, which were used by the KL-divergence to compute the similarity of the distributions.
In each graph, the x-axis represents the percentage of errors measured by our method (the number of errors relative to the maximum possible number of errors w.r.t. the original structure).
The y-axis shows the value of the KL-divergence for the corresponding structure.
Each dot is a different structure.
The standard deviation is the one obtained with parameter learning, by using the data generated from model $M_O$ with 10 different sets of parameters.

\newcommand{\scale}{0.4}

\begin{figure}[!hp]
  \centering
  \includegraphics[scale=\scale]{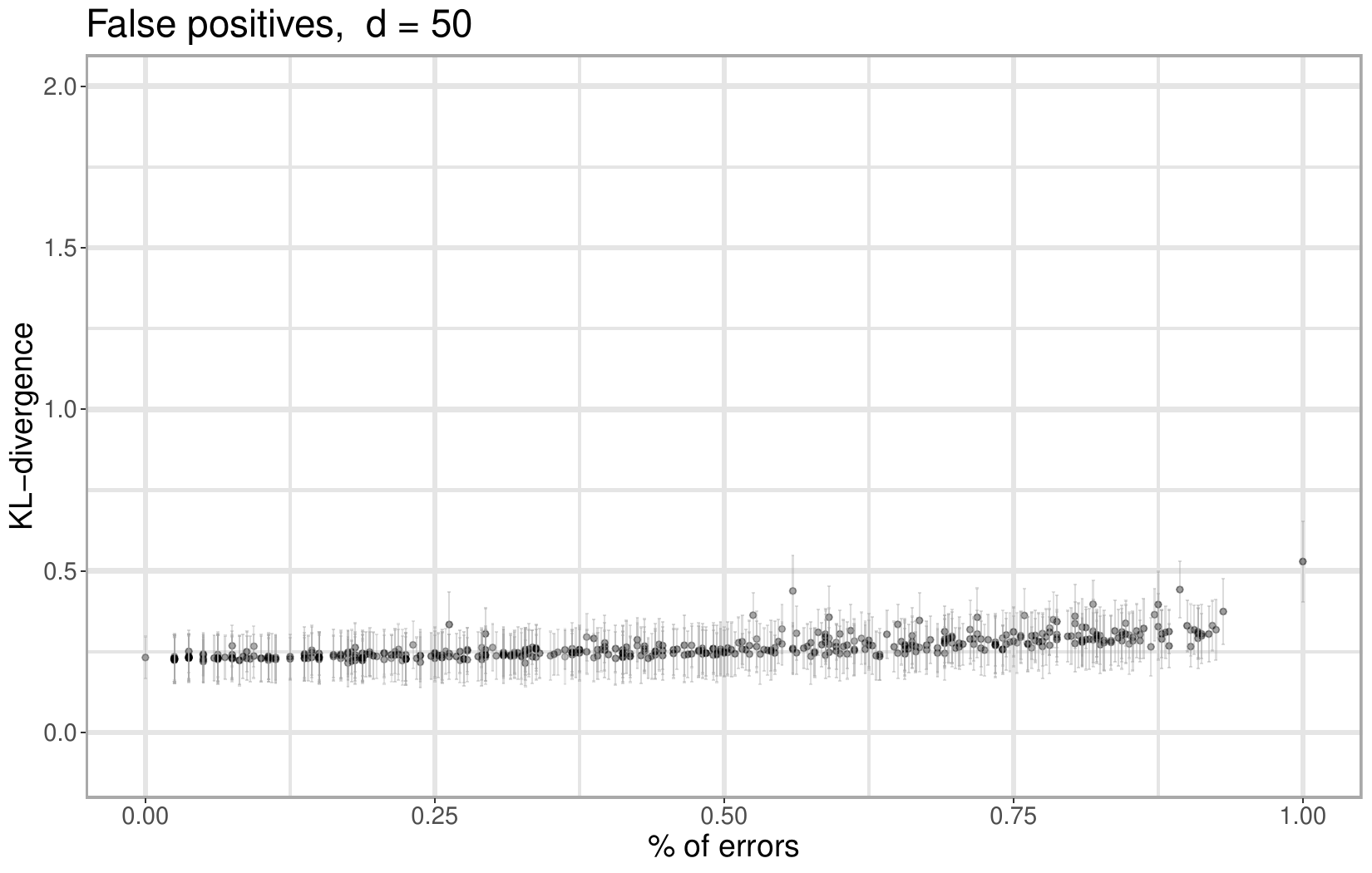}

  \vspace{2cm}

  \includegraphics[scale=\scale]{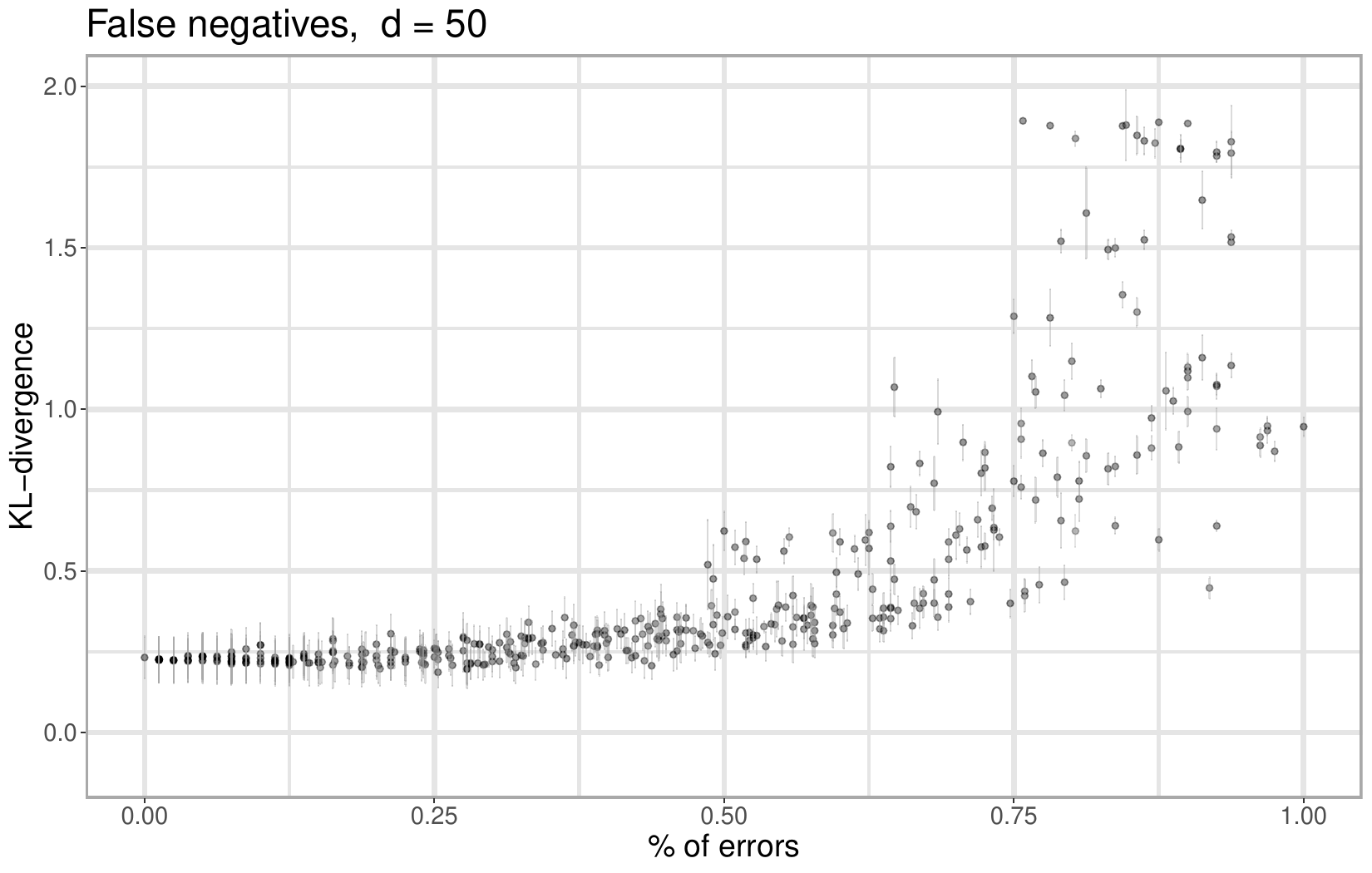}
  \caption{Error comparison for the proposed metric (x-axis) vs KL-divergence (y-axis) for synthetic datasets of 6 variables, and $s=50$.
    Top: \% of false positives.
  Bottom: \% of false negatives.
  Each dot in the graphs is a single structure.
  }
  \label{fig:kl1}
\end{figure}
\begin{figure}[hp]
  \centering

  \includegraphics[scale=\scale]{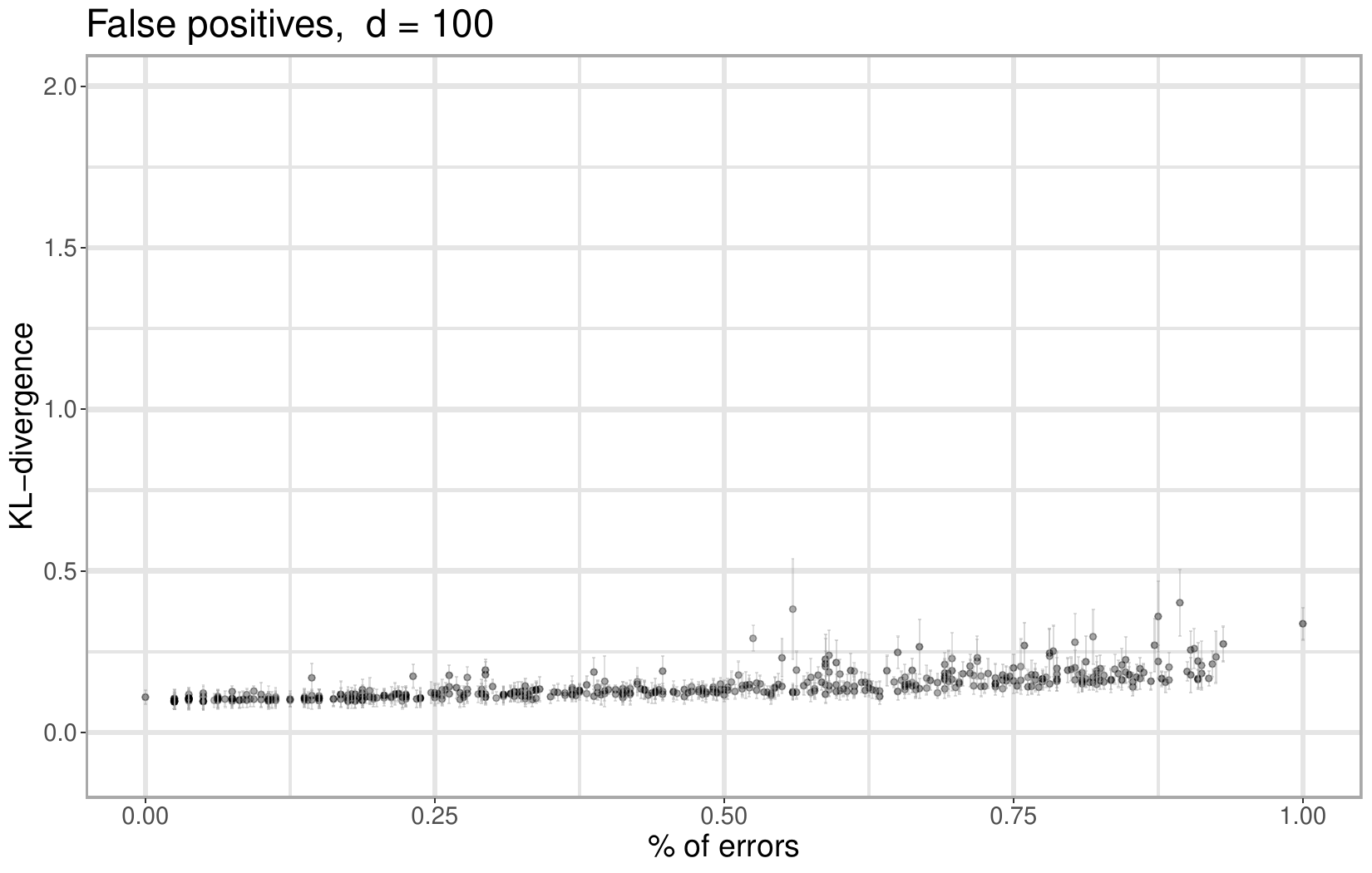}

  \vspace{2cm}

  \includegraphics[scale=\scale]{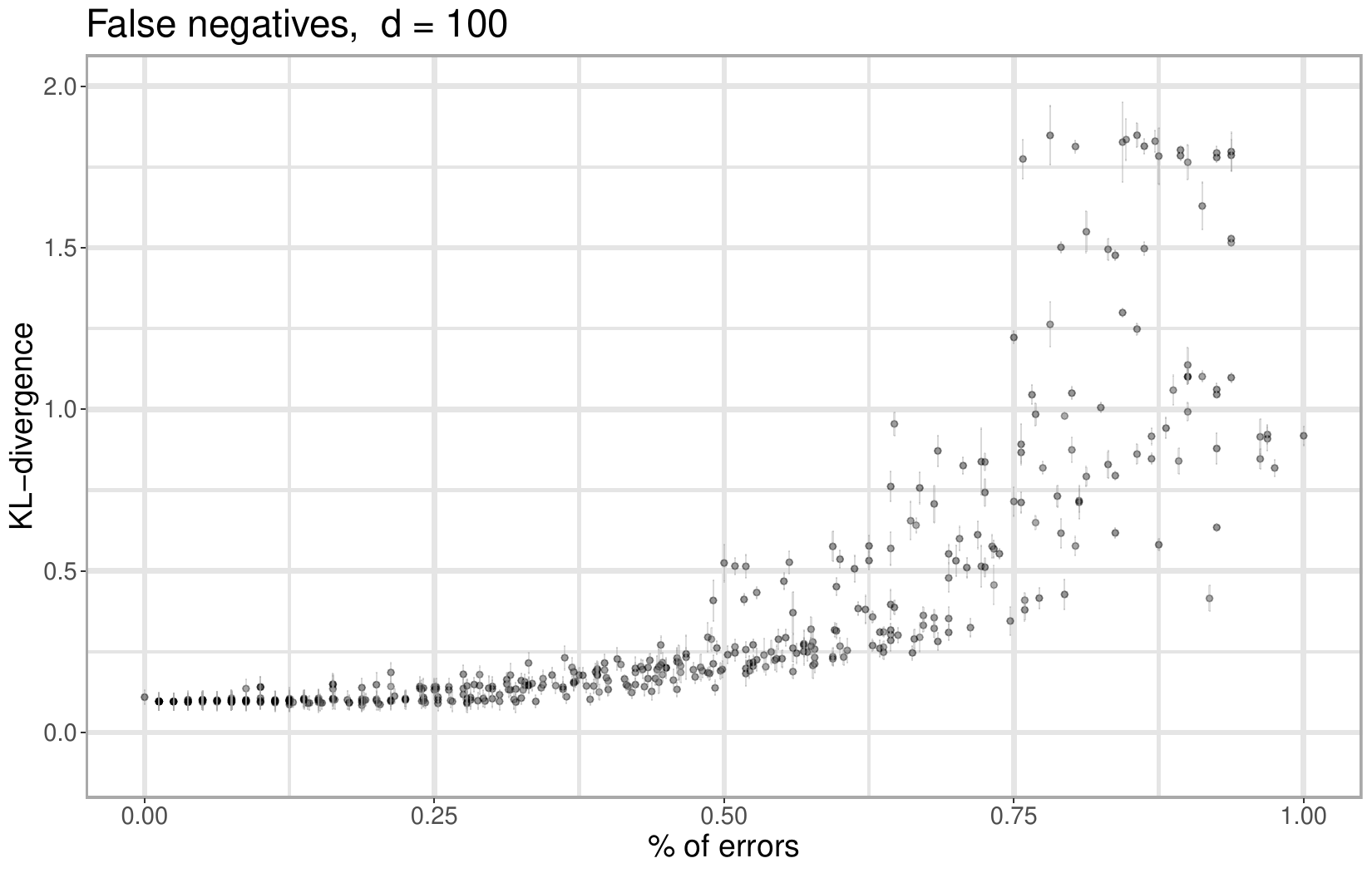}
 \caption{Error comparison for the proposed metric (x-axis) vs KL-divergence (y-axis) for synthetic datasets of 6 variables, and $s=100$.
    Top: \% of false positives.
  Bottom: \% of false negatives.
  Each dot in the graphs is a single structure.
  }
  \label{fig:kl2}
\end{figure}
\begin{figure}[hp]
  \centering
  \includegraphics[scale=\scale]{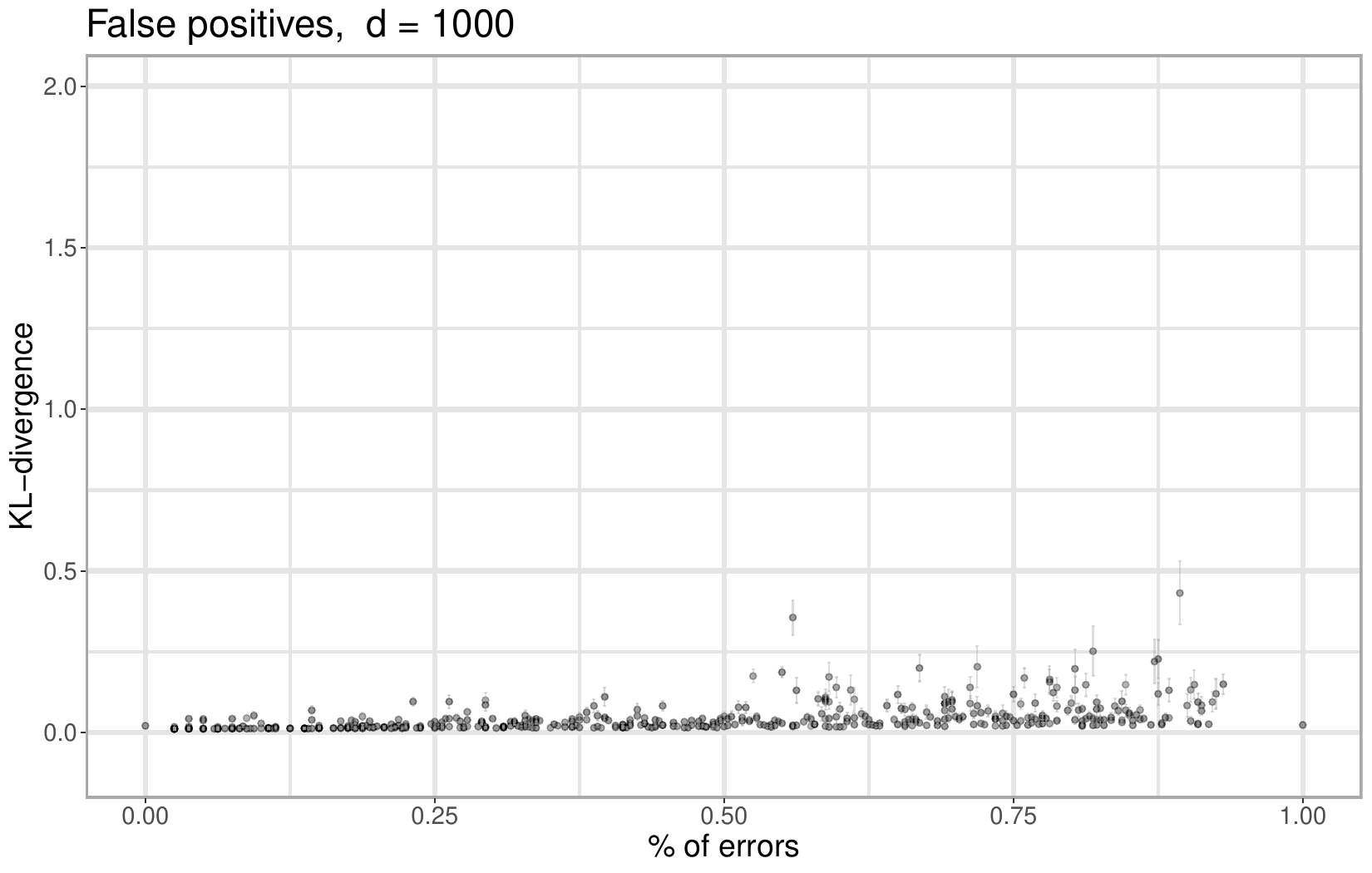}

  \vspace{2cm}

  \includegraphics[scale=\scale]{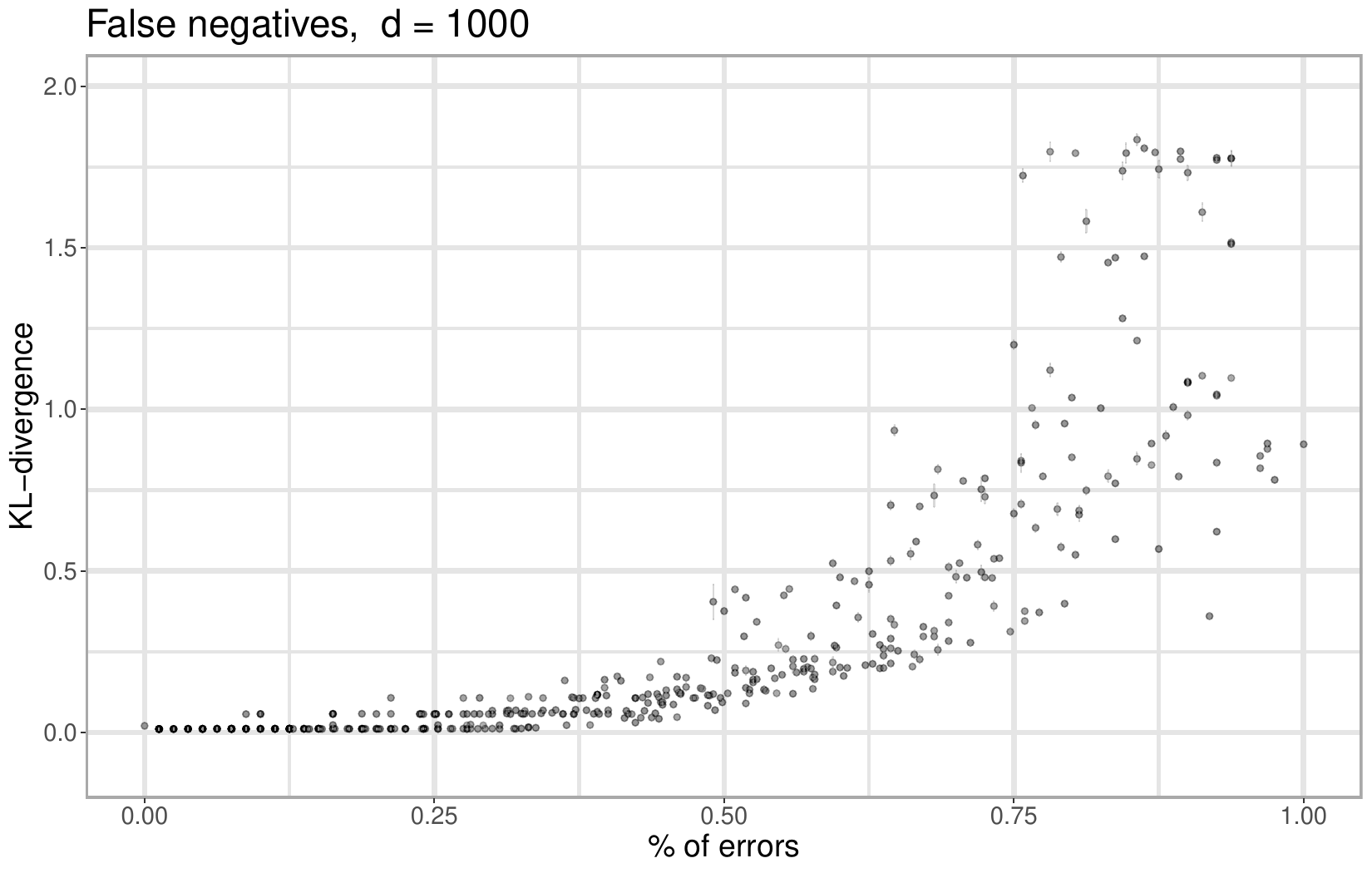}
 \caption{Error comparison for the proposed metric (x-axis) vs KL-divergence (y-axis) for synthetic datasets of 6 variables, and $s=1000$.
    Top: \% of false positives.
  Bottom: \% of false negatives.
  Each dot in the graphs is a single structure.
  }
  \label{fig:kl3}
\end{figure}
\begin{figure}[hp]
  \centering
  \includegraphics[scale=\scale]{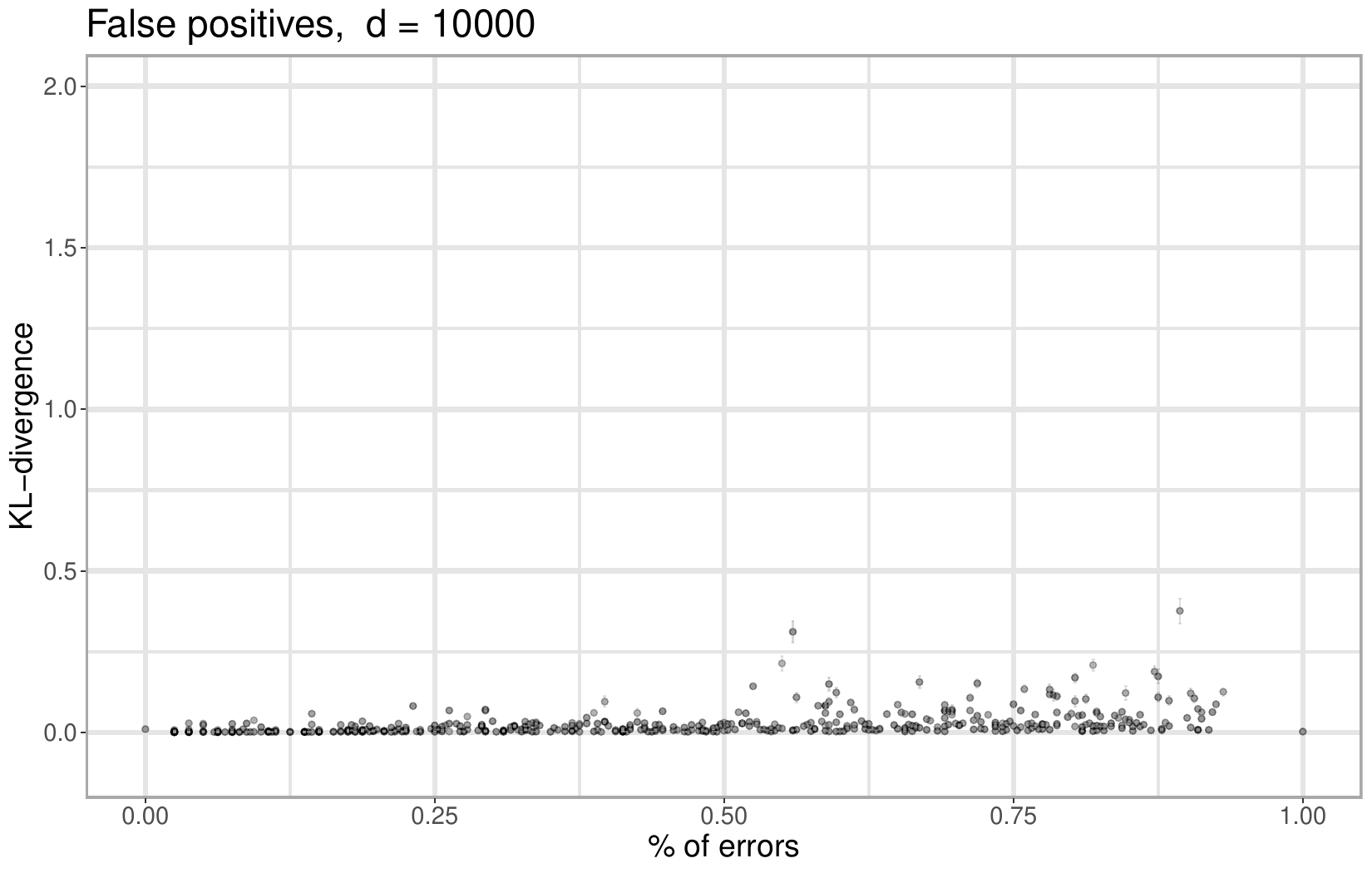}

  \vspace{2cm}

  \includegraphics[scale=\scale]{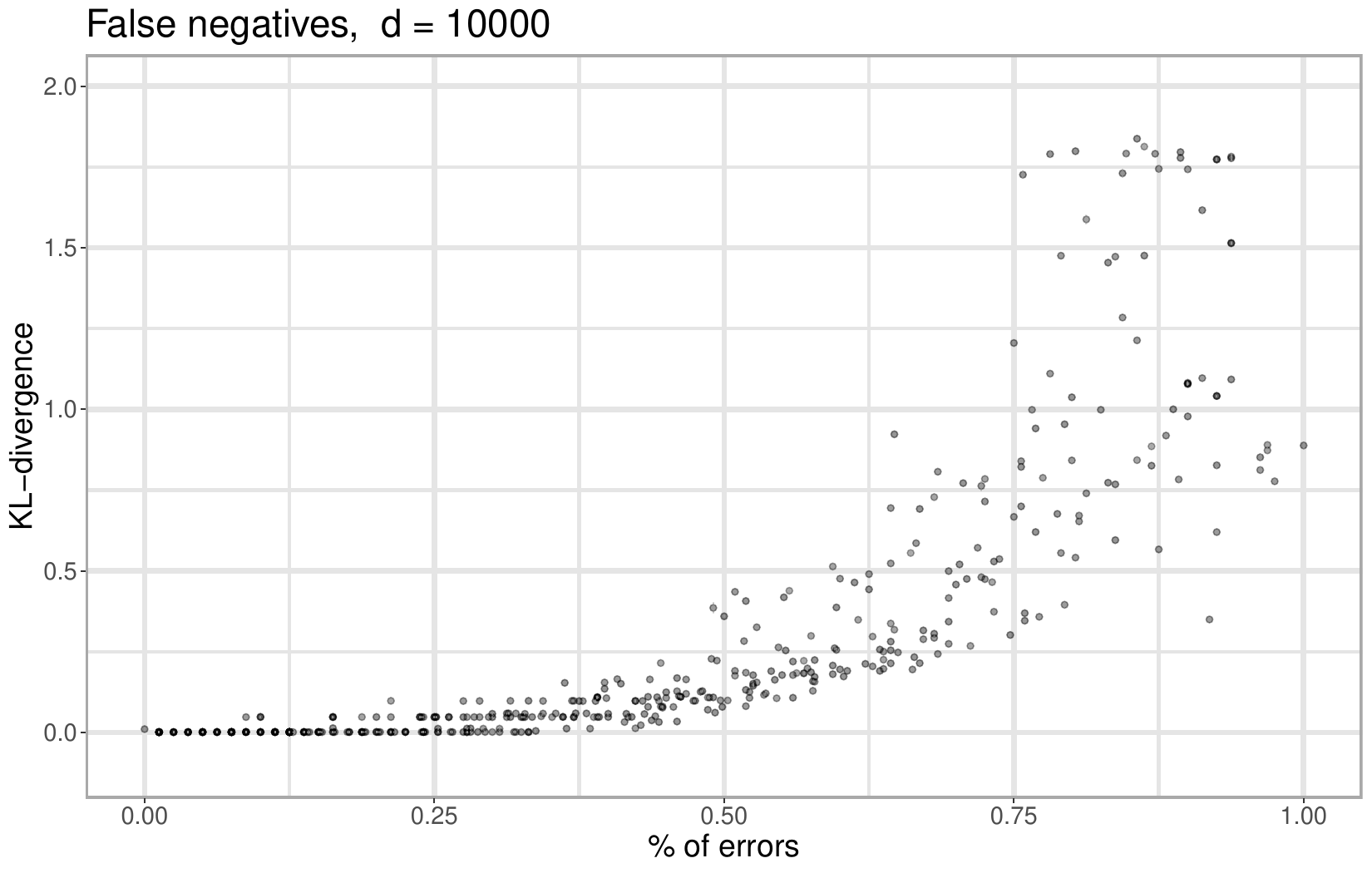}
 \caption{Error comparison for the proposed metric (x-axis) vs KL-divergence (y-axis) for synthetic datasets of 6 variables, and $s=10000$.
    Top: \% of false positives.
  Bottom: \% of false negatives.
  Each dot in the graphs is a single structure.
  }
  \label{fig:kl4}
\end{figure}

\subsection{Conclusions of this comparison}

A positive correlation between the KL-divergence and false negatives (as reported by our metric) can be observed.
This is consistent with our knowledge, since this kind of errors are due to interactions that are missing from the structure, and therefore cannot be quantified by the parameters of the model, regardless of the amount of data.
As a consequence, the KL-divergence shows the dissimilarity caused by the absence of interactions in the second model in relation to the first (original) model.
Nevertheless, the KL-divergence does not correlate with false positives.
In this case the measure over distributions can be said to conceal structural differences between models when the model to be compared possesses this kind of errors.
On a final note, the amount of data serves as a confirmation of the above: as the amount of data used for parameter learning grows, the ability of the model's parameters to mitigate spurious interactions increases.
This is caused by the compensation of the parameters, which becomes more accurate as more data are used to learn them.

\section{Conclusions}\label{sec:discussion}

In this paper we presented a metric for directly and efficiently comparing the structures of two log-linear models.
These models are more expressive than undirected graphs due to their capacity to represent context-specific independencies. 
However, the interpretation of the independence structure of these models is complex, and no sound method for making direct quality comparisons of these structures was known to us prior to the design of our metric.
The importance of a method that compares independence structures is that it can be used not only for enhancing the evaluation of structure learning algorithms, but also for qualitative comparisons in general.
First and foremost, one can analyze differences in the independence structures learned with structure learning algorithms w.r.t.~underlying synthetic structures, or compare the structures learned by different algorithms.
Furthermore, one can draw qualitative insights about structures, either those learned by algorithms or those designed by human experts (or both), which cannot be obtained by mere observation except in simple (low-dimensional) scenarios.

Also, our method provides more guarantees than state-of-the-art techniques for assessing independence structures of log-linear models.
On the one hand, for this representation, learning algorithms are usually evaluated with the KL-divergence measure for complete distributions, or the approximate method of CMLL for high dimensional domains, which require learning the numerical parameters of the models and are therefore indirect.
Besides, they do not have the properties of a metric.
On the other hand, some direct methods have been used, such as the average feature length or number of features, but these only provide very limited information about the structures, and no guarantees of their validity exist. 
In contrast, in this work we have proved that our technique is a metric, thus making it suitable for drawing reliable conclusions about comparisons made with it.
Some possible future lines of research on this method may include the search for an efficient method w.r.t. the number of features in the models, and a reproduction of results from structure learning works, adding measurements with this new metric to the existing KL-divergence or CMLL scores, in order to analyze the impact of using our measure.

\appendix


\section{Lemmas for the equivalent feature sets $H^{TP}$ and $H^{FN}$}\label{app:lemmasHTPHFN}

This section contains the proofs for the sets proposed in Eqs.~\ref{eq:HTP} and~\ref{eq:HFN}, in order to show that these sets correctly represent the context sets corresponding to $TP_{ij}$ and $FN_{ij}$, respectively, as defined in~\autoref{T:method}.

\htp

%

\begin{proof}
    From~\autoref{al:Funion} and the definition of $H^{TP}(F,G)$, we have for the r.h.s. that
    \begin{align*}
      \Xijof{H^{TP}(F,G)}  &= \bigcup_{h \in H^{TP}(F,G)} \Xijof{h} \\[5pt]
                          &= \bigcup_{{f}\cupij {g} \in H^{TP}(F,G)} \Xijof{{f} \cupij {g}}   \\[5pt]
                          &= \bigcup_{f \sim C_1(f)}\,\,\,\bigcup_{g \sim C_1(g) \land C_2(f,g)} \Xijof{{f} \cupij {g}},
    \end{align*}
    where notation $a \sim c$ is used to denote the set of all elements $a$ satisfying condition $c$. For $f$ and $g$, the universe of elements is assumed to be $F$ and $G$, respectively. Thus,  $f \sim C_1(f)$ denotes all features $f \in F$ satisfying condition $C_1(f)$, and $g \sim C_1(g) \land C_2(f,g)$  denotes all features $g \in G$ satisfying conditions $C_1(g)$  and $C_2(f,g)$.

    It suffices then to prove that
    \begin{equation*}
      \Xijof{F} \cap \Xijof{G} = \bigcup_{f \sim C_1(f)} \,\,\,\bigcup_{g \sim
      C_1(g) \land C_2(f,g)} \Xijof{{f} \cupij {g}}
      \label{eq:nonemptyintersection}
    \end{equation*}

            For this purpose, we use the result of \autoref{al:Funion},
   stated and proven in \ref{app:auxiliary}, and the set property of distribution of intersection over union, in the following manner:
      \begin{align*}
	\Xijof{F} & \cap \Xijof{G} \notag\\[5pt]
	 &= \Xijof{F} \cap \left( \bigcup_{g \in G} \Xijof{g} \right)  & \text{by ~\autoref{al:Funion} over $G$} \\[5pt]
	 &= \bigcup_{g \in G} \Xijof{g} \cap   \Xijof{F}  & \text{by distributive law} \\[5pt]
         &= \bigcup_{g \in G} \left( \Xijof{g} \cap  \bigcup_{f \in F}
   \Xijof{f}\right) & \text{by ~\autoref{al:Funion} over $F$} \\[5pt]
         &= \bigcup_{g \in G} \bigcup_{f \in F} \Xijof{g} \cap \Xijof{f} & \text{by distributive law.} \\
    \end{align*}

    To conclude, we use a result stated in ~\autoref{lemma:singleintersection},
    Section~\ref{subsubsec:singlefeatures}.
    According to this lemma,  the intersections
  over individual features $\Xijof{f} \cap \Xijof{g}$ in the r.h.s.
  can be either empty, or equal to $\Xijof{{f} \cupij {g}}$; with the
   non-empty case occurring only if $f$ and $g$ satisfy
    conditions ${C}_1(f)$, ${C}_1(g)$, and ${C}_2(f,g)$.
    These conditions are precisely the restrictions stated in
    Eq.~\ref{eq:nonemptyintersection}, by which we can conclude that the equivalence is correct.

      \end{proof}


\hfn

%
%

 \begin{proof}
  From~\autoref{al:Funion} and the definition of $H^{FN}(F,G)$ we have for the r.h.s. that

\begin{align} \label{eq:HFNsuffices}
    \Xijof{H^{FN}(F,G)}
            =& \bigcup_{h \in H^{FN}(F,G)} \Xijof{h} \nonumber \\[5pt]
            =& \left[ \bigcup_{f \in \Ftwo \setminus \Fone } \,
        \bigcup_{\DEf \in \crossproductDE{f}} \bigXijof{{f} \,
      \cupij \, \bigcupij_{d\in \DEf} d } \right] \nonumber \\[5pt]
       & \cup
             \left[ \bigcup_{f \notin \Fone \cup \Ftwo } \,
             \bigcup_{\DEf\in \crossproductDE{f}} \bigXijof{\bigcupij_{d\in \DEf}  d } \right].
  \end{align}

for $\crossproductDE{f}=\vvarprod_{g \in \Gcomp} \DE$, a cross-product dependent on $f$, whose elements $\DEf$ are  sets of features of cardinality $|\overline{\Gcomp}|$ computed by extracting exactly one feature from each $\DE$, with one $\DE$ defined per $g \notin \Gcomp$

  It suffices then to prove that $\Xijof{F} \setminus \Xijof{G}$ equals the r.h.s. of this last expression.
  We begin by applying a few set equivalences:

      \begin{align}
            \Xijof{F} & \setminus \Xijof{G} \notag\\
         &= \Xijof{F} \cap \overline{\Xijof{G}} & \text{by general sets
   equivalence}\notag \\[5pt]
         &= \overline{\Xijof{G}}  \cap \bigcup_{f\in F} \Xijof{f}
   & \text{by ~\autoref{al:Funion} over $F$} \notag\\[5pt]
         &= \bigcup_{f\in F} \Xijof{f} \cap \overline{\Xijof{G}}
   & \text{by distributive law} \notag\\[5pt]
         &= \bigcup_{f\in F} \Xijof{f} \setminus \Xijof{G}
   & \text{by general sets equivalence} \notag\\[5pt]
         &= \bigcup_{f\in F} \Xijof{f} \setminus \bigcup_{g\in G} \Xijof{g}
     & \text{by ~\autoref{al:Funion} over $G$} \notag\\[5pt]
         &= \bigcup_{f\in F}\quad \bigcap_{g\in G} \Xijof{f} \setminus  \Xijof{g}
   & \text{by \emph{relative complements}} \label{eq:FminusGproof}      \end{align}

      Specifically, the set equivalences used above are the following: the equivalence between
      subtracting a set and intersecting its complement (i.e., $A \setminus B= A
      \cap \overline{B}$), the property of distribution of intersection over union
      (e.g., $(A \cup B) \cap (C \cup D) = (A \cap C) \cup (A\cap D) \cup (B \cap C) \cup (B\cap D)$),
      and, in the last step, a property of sets known as \emph{relative complements} that states that for sets $A$, $B$, and $C$,
      $C\setminus(A\cup B)= (C \setminus A) \cap (C \setminus B)$.

The resulting expression involves differences of FC context sets of individual
features: $\Xijof{f} \setminus \Xijof{g}$.
From~\autoref{lemma:singlediff} (Section~\ref{subsubsec:singlefeatures}), each of these differences can take one of
  three values, depending on the conditions that hold for each combination of $f$ and $g$:

  \begin{equation}
    \Xij(f) \setminus \Xij(g)  \equiv \left\{
    \begin{array}{ll}
      \emptyset & \quad  if~ g^{ij} \subseteq f^{ij}\ \land C_1(g), \text{ or }\ \lnot  C_1(f) \\[5pt]
      \Xijof{f} & \quad  \lnot C_2(f,g)   \ \text{or}\ \lnot C_1(g)\\[5pt]
      \bigcup_{d\in \DE} \Xijof{d} & \quad otherwise.      \end{array}
     \right.\label{eq:diff:proof}
  \end{equation}

  We will now proceed to analyze the impact that the three possible values of
  the difference have in the union of intersections of Eq.~\ref{eq:FminusGproof}.
    For that, we start by decomposing the intersection over $G$ over a partition
  of $G$ consisting of three parts: $g\in G^1(f)$, $g\in G^2(f)$, and the remainder $g\notin G^1(f) \cup G^2(f)$. Note the dependence on $f$ of the partition, indicating it is different for every $f$ of the union. The partition results in

    \begin{equation}
  \begin{aligned}
     \Xijof{F} \setminus \Xijof{G}   \equiv  \bigcup_{f\in F}\quad & \left\{
              \bigcap_{g\in \Gone} \Xijof{f} \setminus  \Xijof{g}\,\,\,
	      \bigcap_{g\in \Gtwo} \Xijof{f} \setminus  \Xijof{g} \,\,\, \right. \\[5pt]
              & \left. \bigcap_{g\notin \Gcomp} \Xijof{f} \setminus  \Xijof{g}
   \right\} \nonumber
  \end{aligned}
  \end{equation}

    We shall start by the first partition over $g\in G^1(f)$. According to the definition of $G^1(f)$ in Eq.~\ref{eq:Fone}, any $g$ in it satisfies either $g^{ij} \subseteq f^{ij}\ \land C_1(g)$ or $\lnot  C_1(f)$, which according to Eq.~\ref{eq:diff:proof} is exactly the condition for the difference to be empty.
    Then, if for some $f\in F$ this condition
  holds for at least one $g \in G$, i.e., if $G^1(f) \neq \emptyset$, then the whole intersection over $G$ is empty, including all three partitions.
  From its definition in Eq.~\ref{eq:Fone}, this occurs for every $f \in  F^1$.
     We can thus omit the partition over $G^1$ by simply restricting the union only
  over features $f \notin F^1$.

  We will now analyze the second partition over $g\in G^2(f)$. 
  According to the definition of $G^2(f)$ in Eq.~\ref{eq:Ftwo}, any $g$ in it satisfies either $\lnot C_2(f,g)$ or $\lnot  C_1(g)$, which according to Eq.~\ref{eq:diff:proof} is exactly the condition for the difference to be $\Xij(f)$. 
  This results in all differences within the intersection over the second partition to be $\Xij(f)$.

  Consequently, the intersection is now equal to $\Xij(f)$. 
  There is an exception to this, when no $g$ satisfies that condition for some $f$. 
  This occurs when $G^2(f) = \emptyset$. 
  When this happens, the second intersection can be ignored.
  From its definition in Eq.~\ref{eq:Ftwo}, this occurs for every feature $f$ that is  not in $F^2$, resulting in a partition over $F$.

  Combining the above conclusions, we have that

  \begin{equation*}
  \begin{aligned}
       \Xijof{F} \setminus \Xijof{G}   \equiv
            &\left[\bigcup_{f\in \Ftwo \setminus \Fone}
                  \Xijof{f} \bigcap_{g\notin \Gcomp} \Xijof{f} \setminus  \Xijof{g} \right] \\[5pt]
            & \bigcup
            \left[\bigcup_{f\notin (\Fone \cup  \Ftwo)}
                   \bigcap_{g\notin \Gcomp}  \Xijof{f} \setminus  \Xijof{g} \right].
    \end{aligned}\label{eq:FminusGproof3}
  \end{equation*}

  This leaves us with only the third partition to analyze. For that, we notice that all $g$ not in $G^1(f)$ nor $G^2(f)$ are exactly those not satisfying neither the first nor second condition but the third condition of Eq.~\ref{eq:diff:proof}. 
  After replacing the difference by the expression corresponding to this third condition we obtain

  \begin{equation*}
  \begin{aligned}
       \Xijof{F} \setminus \Xijof{G}   \equiv
            &\left[ \bigcup_{f\in \Ftwo \setminus \Fone}
              \Xijof{f} \cap \left( \
                \bigcap_{g\notin \Gcomp}\,\,  \bigcup_{d\in \DE} \Xijof{d}  \right)\right] \\[5pt]
            &\bigcup
            \left[\bigcup_{f\notin (\Fone \cup  \Ftwo)} \bigcap_{g\notin \Gcomp} \,\, \bigcup_{d\in \DE} \Xijof{d} \right] .
    \end{aligned}\label{eq:FminusGproof3}
  \end{equation*}

To continue, we further simplify the subexpression

$$\bigcap_{g\notin \Gcomp}
  \bigcup_{d\in \DE} \Xijof{d}  ,$$

  which appears in both unions, by applying the distributive property of intersection over
  union to obtain
      \begin{equation}
  \begin{aligned}
     \Xijof{F} \setminus \Xijof{G}   \equiv
         &\left[\bigcup_{f\in \Ftwo \setminus \Fone}\quad
               \Xijof{f}  \cap \left(
                  \bigcup_{\DEf \in \crossproductDE{f}}\quad
         \bigcap_{d \in \DEf} \Xij(d) \right)\right] \\[5pt]
       & \bigcup
         \left[ \bigcup_{f\notin (\Fone \cup \Ftwo)}\quad  \bigcup_{\DEf \in \crossproductDE{f}}\quad \bigcap_{d \in \DEf} \Xij(d) \right]  . \nonumber
  \end{aligned}
    \label{eq:FminusGproof4}
  \end{equation}

  for $\crossproductDE{f}=\vvarprod_{g \in \Gcomp} \DE$, a cross-product dependent on $f$, whose elements $\DEf$ are  sets of features of cardinality $|\overline{\Gcomp}|$ computed by extracting exactly one feature from each $\DE$, with one $\DE$ defined per $g \notin \Gcomp$.
    To illustrate, if we assume that each $\DE$ contains $2$ features, then the cross-product would produce $2^{|\Gcomp|}$ features $\DEf$, for each $f$.

  One can also apply  the distribution of intersection over union of $\Xijof{f}$ onto the union over the $\DEf$, to obtain

  \begin{equation}
  \begin{aligned}
     \Xijof{F} \setminus \Xijof{G}   \equiv
          & \left[\bigcup_{f\in \Ftwo \setminus \Fone}\quad \bigcup_{\DEf \in \crossproductDE{f}}\quad \left( \Xijof{f}  \cap \bigcap_{d \in \DEf} \Xij(d) \right) \right] \\[5pt]
                  & \bigcup \left[\bigcup_{f\notin (\Fone \cup \Ftwo)}\quad \bigcup_{\DEf \in \crossproductDE{f}}\quad \bigcap_{d \in \DEf} \Xij(d) \right]  . \nonumber
  \end{aligned}
  \end{equation}

  To conclude the proof, we note that the intersection of the FC sets of $f$ and the $d$ in
  $\DEf$ corresponds to the non-empty case from Eq.~\ref{eq:intersection1} in~\autoref{lemma:singleintersection}; therefore, it can
  be replaced by the union of features over $(i,j)$, resulting in

\begin{equation}
  \begin{aligned}
     \Xijof{F} \setminus \Xijof{G}   \equiv
          &\left[\bigcup_{f\in \Ftwo \setminus \Fone}\quad
                 \bigcup_{\DEf \in \crossproductDE{f}} \quad
                 \Xij \left(f \cupij \bigcupij_{d \in \DEf} {d} \right)
          \right] \\[5pt]
                 & \bigcup
          \left[
              \bigcup_{f\notin (\Fone \cup \Ftwo)}\quad
              \bigcup_{\DEf \in \crossproductDE{f}}\quad
              \Xij\left(\bigcupij_{d \in \DEf} {d}\right) \right]  . \nonumber
  \end{aligned}
  \end{equation}

  The above expression matches exactly what Eq.~\ref{eq:HFNsuffices} indicated is sufficient for proving the lemma.
\end{proof}

\section{Lemmas for the efficient computation of $\Xijof{f}\cap \Xijof{g}$ and $\Xijof{f} \setminus \Xijof{g}$}\label{app:lemmas}

This appendix provides the proofs for \autoref{lemma:singleintersection} and
\autoref{lemma:singlediff} of Section~\ref{subsubsec:singlefeatures}, which
propose an approach for the efficient computation of the intersection  $\Xijof{f}\cap \Xijof{g}$ and difference  $\Xijof{f}\setminus \Xijof{g}$ of the FC context sets of single features, respectively.

\singleinter

\begin{proof}
    We will consider each case separately.

    \textbf{Case $\lnot C_1(f)$}: By the definition of this condition, $X_i \notin \scope{f}$ or $X_j \notin \scope{f}$. This contradicts the r.h.s. in ~\autoref{aux:XijSubset} (see ~\ref{app:auxiliary}), which implies that there is no $\z\in \Xijof{f}$, or equivalently, $\Xij(f)=\emptyset$. This in turn implies an empty intersection.

    \textbf{Case $\lnot C_1(g)$}: The case of $C_1(f)$ applies here as well, resulting in $\Xij(g)=\emptyset$, and therefore in an empty intersection.

    \textbf{Case $\lnot C_2(f,g)$}:  For this case, neither $\Xij(f)$ nor $\Xij(g)$ are empty, but their intersection is.
    To prove this, we argue that given any two FC contexts $\z \in \Xij(f)$ and $\z' \in \Xij(g)$, they must differ in the assignment of at least one of its variables.
    By the definition of $C_2(f,g)$ in Eq.~\ref{eq:C2}, its negation implies that there exists at least one variable $X_h$ other than $X_i$ and $X_j$ that is both in $f$ and $g$, such that $X_h(f) \neq X_h(g)$.
        According to Eq.~\ref{eq:XijSubset} of \autoref{aux:XijSubset} (see~\ref{app:auxiliary}), we have that,
                        for all $x_Z \in \Xijof{f}$, $X_h(x_Z)=X_h(f)$, while for all $x'_Z \in \Xijof{g}$, $X_h(x'_Z)=X_h(g)$.
    Therefore, for all $x_Z \in \Xijof{f}$ and all $x'_Z \in \Xijof{g}$, we have $X_h(x_Z) \neq X_h(x'_Z)$ which implies $x_Z \neq x'_Z$, by which we conclude that no FC context belongs to both $\Xijof{f}$ and $\Xijof{g}$ simultaneously.

    \textbf{Case $C_1(f) \land C_1(g) \land C_2(f,g)$ }: We must prove that $\Xij(f) \cap \Xij(g) \equiv \Xij(f \cup \p{g})$.
    In what follows, we denote by $\z$ an arbitrary FC context in $\Xij$.
        From basic set theory, we have that

          \begin{equation*}
            \z \in \Xij(f) \cap \Xij(g)  \iff \z \in \Xijof{f} \land \z \in \Xijof{g}.
        \end{equation*}

        By applying ~\autoref{aux:XijSubset} to the conditions in the r.h.s. we obtain

          \begin{equation*}
              \z \in \Xij(f) \cap \Xij(g) \iff  \p{f} \subseteq \z \,\,\,\land\,\,\, \p{g} \subseteq \z \,\,\,\land\,\,\, X_i,X_j\in\scope{f} \,\,\,\land\,\,\, X_i,X_j \in \scope{g}.
        \end{equation*}

        From set theory, we know that for arbitrary sets $A$, $B$, and $C$, $A\subseteq C \land B \subseteq C$ is equivalent to $A \cup B \subseteq C$.
	This applies in particular to the union of features (\autoref{def:union}), as the union $\p{f} \cup \p{g}$ is a feature that contains all assignments in both features, and by the r.h.s., all of these assignments are in $\z$.
	Also, from the definition of feature union, if $X_i,X_j$ are in the scopes of both $f$ and $g$, then they are in the scope of their union.
	Combining both conclusions, we obtain

         \begin{equation*}
            \z \in \Xij(f) \cap \Xij(g) \iff   \p{f} \cup \p{g} \subseteq \z \,\,\,\land\,\,\, X_i,X_j \in \scope{f \cup g}.
        \end{equation*}
	To conclude the proof we apply the right-to-left direction of Eq.~\ref{eq:XijSubset} (\autoref{aux:XijSubset}), where the feature $\p{f} \cup \p{g}= \p{h}$ should have a corresponding feature $h$ in the l.h.s. of Eq.~\ref{eq:XijSubset}, that is, a feature that contains $X_i$ and $X_j$ in its scope.
	Note that the l.h.s. of the auxiliary lemma holds for any arbitrary assignment to this pair of variables; then,
	by \autoref{def:unionij}, such a feature can be expressed as a union over the pair $(i,j)$, namely, $f\cupij g$, which allows us to obtain



         \begin{equation*}
     \z \in \Xij(f) \cap \Xij(g) \iff \z \in \Xijof{f \cupij {g}}.
        \end{equation*}


\end{proof}

We will now prove the following lemma for the difference of the FC contexts of single features.

\singledif

\begin{proof}
For the first case of Eq.~\ref{eq:diff:D}, we consider the two cases of the disjunction in the condition separately:

$\lnot C_1(f)$: If this is the case, the right-hand side of ~\autoref{aux:XijSubset} never holds.
This results in $\Xij(f)=\emptyset$, and consecuently the difference is empty.

$\p{g} \subseteq \p{f} \land C_1(g)$: For this case we prove the difference is empty by showing that $\Xijof{f} \subseteq \Xijof{g}$, which means that for every $\z \in \Xij$ it is the case that
\[
   \z \in \Xijof{f}  \implies \z \in \Xijof{g}
\]
From the left-hand side and ~\autoref{aux:XijSubset}, we have that $\p{f} \subseteq \z  \land  {C}_1(f)$, which combined with condition $\p{g} \subseteq \p{f}$ results in $\p{g} \subseteq \z$. The latter combined with $C_1(g)$ can be applied to the left-to-right implication of Eq.~\ref{eq:XijSubset} of the  auxiliary lemma to obtain $\z \in \Xijof{g}$.

For the second case of Eq.~\ref{eq:diff:D}, if $\lnot C_2(f,g)$ holds, some value in $g$ does not match its corresponding value in $f$, and therefore none of the elements of $\Xij(g)$ are in $\Xij(f)$. If $\lnot C_1(g)$ holds, then $\Xij(g)=\emptyset$. In both cases, nothing can be subtracted from $\Xij(f)$.

The proof for the third case of Eq.~\ref{eq:diff:D} consists in proving that, when the conditions of the first two cases are not satisfied, then

    \begin{equation*}
        \Xijof{f} \setminus \Xijof{g} = \bigcup_{d\in \DE}{\Xijof{d}}.\label{eq:minusDE}
    \end{equation*}

For that we proceed in two steps. First, we prove this equality for $\DB$,  an alternative (simpler) version of $\DE$, i.e.,

    \begin{equation}
        \Xijof{f} \setminus \Xijof{g} = \bigcup_{d\in \DB}{\Xijof{d}}.\label{eq:minusDE}
    \end{equation}

where the equivalent set $\DB$ is defined as

   \begin{equation}
                \DB = \left\{ \text{\ features }d \ \middle\vert
                \begin{array}{l}
                S_d = S_f\cup S_g; \forall X_m\in S_{f}, X_m(d)=X_m(f); \\
                \exists X_k\in S_g\setminus S_f, X_k(d) \neq X_k(g)\end{array}\right\}.\label{eq:DB}
            \end{equation}
    that is, one feature $d$ with scope composed of all variables in the scopes of both $f$ and $g$, with matching values with those in $\scope{f}$, and a mistmatch with at least one variable in $\scope{g}$.

Then, we prove that $\DB$ is equivalent to $\DE$:

        \begin{equation*}
            \bigcup_{d\in \DB}{\Xijof{d}} = \bigcup_{d\in \DE}{\Xijof{d}},\label{eq:DBequalsDE}
        \end{equation*}

         The validity of these two steps is proven below in \cref{lemma:L1diff,lemma:L2}, respectively.
 Additionally, we include ~\autoref{lemma:L4}, which shows that the FC contexts of each feature in $\DE$ are mutually exclusive, which guarantees that redundancy is conveniently reduced, and is necessary for Algorithm~\ref{alg:partition} to produce a correct result.

\end{proof}

\begin{lemma}\label{lemma:L1diff}
Let $X_i$ and $X_j$ be two distinct variables in $X_V$,  and let $f$ and $g$ be
two arbitrary features over $X_V$  satisfying neither the first nor second conditions of Eq.~\ref{eq:diff:D}. Then,

  \begin{equation}
        \Xijof{f} \setminus \Xijof{g} = \bigcup_{d\in \DB}{\Xijof{d}}.\tag{\ref{eq:minusDE}}
    \end{equation}
    with $\DB$ defined as in Eq.~\ref{eq:DB}.

                                                  \end{lemma}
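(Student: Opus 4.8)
The plan is to prove the claimed identity $\Xijof{f}\setminus\Xijof{g}=\bigcup_{d\in\DB}\Xijof{d}$ by double inclusion at the level of individual FC contexts $\z\in\Xij$, using throughout the characterization of $\Xijof{h}$ supplied by \autoref{aux:XijSubset}: a context $\z$ lies in $\Xijof{h}$ if and only if $\p{h}\subseteq\z$ and $C_1(h)$ holds (i.e., all assignments of $h$ outside $\{X_i,X_j\}$ agree with $\z$, and $X_i,X_j\in S_h$). As a preliminary I would unpack the hypothesis that neither the first nor the second case of Eq.~\ref{eq:diff:D} applies into the three usable facts $C_1(f)$, $C_1(g)$, and $C_2(f,g)$ (the negation of the first case additionally gives $\p{g}\not\subseteq\p{f}$, which will not be needed).

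For the forward inclusion $\Xijof{f}\setminus\Xijof{g}\subseteq\bigcup_{d\in\DB}\Xijof{d}$, I would fix a context $\z$ in the left-hand side. From $\z\in\Xijof{f}$ and \autoref{aux:XijSubset} we get $X_m(\z)=X_m(f)$ for every $X_m\in S_{\p{f}}$. From $\z\notin\Xijof{g}$ together with $C_1(g)$, the inclusion $\p{g}\subseteq\z$ must fail, so there is a variable $X_k\in S_{\p{g}}$ with $X_k(\z)\neq X_k(g)$. The key small step is that $X_k\notin S_f$: if it were, then $X_k(\z)=X_k(f)$, and $C_2(f,g)$ would force $X_k(f)=X_k(g)$, contradicting $X_k(\z)\neq X_k(g)$; hence $X_k\in S_g\setminus S_f$. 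I then take $d$ to be the feature with scope $S_f\cup S_g$ defined by $X_m(d)=X_m(f)$ for $X_m\in S_f$ and $X_m(d)=X_m(\z)$ for $X_m\in S_g\setminus S_f$ (the values assigned to $X_i$ and $X_j$ are immaterial to $\Xijof{d}$). Checking against Eq.~\ref{eq:DB} shows $d\in\DB$, with $X_k$ as the required mismatch witness; and a direct check gives $\p{d}\subseteq\z$ and $C_1(d)$, so $\z\in\Xijof{d}$.

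For the reverse inclusion, I would fix $\z\in\Xijof{d}$ for some $d\in\DB$ and verify $\z\in\Xijof{f}$ and $\z\notin\Xijof{g}$. For the former, every $X_m\in S_{\p{f}}$ satisfies $X_m(\z)=X_m(d)=X_m(f)$ --- the first equality by \autoref{aux:XijSubset} applied to $d$, the second by the defining property of $\DB$ --- and $C_1(f)$ holds by hypothesis, whence $\z\in\Xijof{f}$. For the latter, let $X_k\in S_g\setminus S_f$ be the mismatch witness of $d$, so $X_k(d)\neq X_k(g)$; since $X_k\in S_{\p{g}}$ and $X_k(\z)=X_k(d)$, the inclusion $\p{g}\subseteq\z$ fails, so $\z\notin\Xijof{g}$ by \autoref{aux:XijSubset}. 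The two inclusions together give the asserted equality.

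I expect the principal obstacle to be organizational rather than conceptual: correctly distilling the two negated case-conditions of Eq.~\ref{eq:diff:D} into $C_1(f)$, $C_1(g)$, and $C_2(f,g)$, and pinpointing where each is used --- in particular, that $C_2(f,g)$ is precisely what forces the mismatching variable into $S_g\setminus S_f$ in the forward direction, which is what makes the constructed $d$ land in $\DB$. A secondary point needing care is the handling of $X_i$ and $X_j$: they lie in the scopes of $f$, $g$, and $d$ but their assignments are irrelevant to every $\Xij(\cdot)$ set, so they must be given some value to keep $d$ a well-formed feature while otherwise being disregarded.
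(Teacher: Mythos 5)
Your proof is correct and follows essentially the same route as the paper's: both argue context-by-context via \autoref{aux:XijSubset}, construct in the forward direction the witness $d$ that copies $f$ on $S_f$ and copies $\z$ on $S_g\setminus S_f$ (with $C_2(f,g)$ forcing the mismatching variable into $S_g\setminus S_f$), and in the reverse direction use the mismatch witness of $d$ to rule out $\p{g}\subseteq\z$. The only cosmetic difference is that the paper phrases the argument as a biconditional with explicit appeal to \autoref{aux:subset}, whereas you do a direct double inclusion.
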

\begin{proof}
    By set equivalence, Eq.~\ref{eq:minusDE} can be reformulated, for an arbitrary $\z \in \Xij$, as

    \begin{equation*}
         \z \in \Xijof{f} \land \z \notin \Xijof{g}
        \iff \exists d \in \DB, \z \in \Xijof{d}.
    \end{equation*}

    Since the first and second cases in Eq.~\ref{eq:diff:D} are not satisfied,
    we have that $\p{g} \not\subseteq \p{f}$, $C_2(f,g)$, $C_1(f)$, and
    $C_1(g)$; this is easily demonstrated by the simple application of logical equivalences over the negation of the first two conditions.

    Then, applying  ~\autoref{aux:XijSubset}  of ~\ref{app:auxiliary} to each of the three inclusions $\z\in\Xijof{f}$, $\z\in\Xijof{g}$, and $\z\in\Xijof{d}$ we obtain

     \begin{equation}
      \left(C_1(f) \land \p{f}\subseteq \z \right) \land
       \left( \lnot C_1(g) \lor \p{g} \not\subseteq \z \right)
        \iff
       \exists d \in \DB,\ \left(C_1(d) \land \p{d}\subseteq \z\right).\label{eq:diff:L1implication}
    \end{equation}

    We begin with the right-to-left implication, and prove each term in the l.h.s. separately, i.e.,

    \begin{enumerate}[(i)]
        \item $\exists d\in\DB, \left(C_1(d) \land \p{d} \subseteq \z \right)
              \implies \left(C_1(f) \land \p{f} \subseteq \z \right)$; and        \item $\exists d\in\DB, \left(C_1(d) \land \p{d} \subseteq\z\right)
              \implies \left(\lnot C_1(g) \lor \p{g} \not\subseteq \z \right)$.     \end{enumerate}

        \begin{enumerate}[(i)]
        \item
    Since $\scope{d}=\scope{f}\cup\scope{g}$ and both $C_1(f)$ and $C_1(g)$ hold, then $C_1(d)$ holds as well. It then suffices to prove
    $\exists d\in\DB, \p{d} \subseteq \z \implies \p{f} \subseteq \z$.

    For that, we use ~\autoref{aux:subset}.
    We consider $a=f^{ij}$ and $b=d^{ij}$, where $d$ is the feature satisfying the l.h.s. of (i), i.e., the feature $d$ for which $d^{ij} \subseteq \z$.
    We thus have that $b\subseteq \z$.
    To conclude that $a=f^{ij} \subseteq \z $ it suffices then to prove that $f^{ij} \subseteq d^{ij}$ and that $\scope{f} \subseteq \scope{d}$.
         By definition, $S_{d}=S_f\cup S_g$, so $S_f \subseteq S_{d}$ holds.
    Also, by definition, for every $k \in \DB$, it is the case that
    $\forall X_m\in S_{f^{ij}}, X_m(k)=X_m(f^{ij})$, which is equivalent to saying that for every $k \in \DB$, $\p{f} \subseteq \p{k}$.
    In particular, this must then hold for $d$.
    \qed

            \item Again, $C_1(d)$ holds. Also, since we already know that $C_1(g)$, the r.h.s. reduces to $\p{g} \not\subseteq \z$. It thus suffices to prove that $\exists d\in\DB, \p{d} \subseteq\z
              \implies \p{g} \not\subseteq \z$.

    To prove this  we first note that

    \begin{equation}
        d^{ij} \subseteq \z \implies \forall X_m\in S_{d^{ij}}, X_m(d^{ij})=X_m(\z).\label{eq:diff:L1:I2proof}
    \end{equation}

    Then, from the definition of every $d \in \DB$, $\exists X_m\in S_g\setminus S_f$ s.t. $X_m(d) \neq X_m(g)$. Also, since both $X_i$ and $X_j$ are in $S_f$, then $S_g\setminus S_f =S_{g^{ij}}\setminus S_{f^{ij}}$, so
    $\exists X_m\in S_{g^{ij}}\setminus S_{f^{ij}}$ s.t. $X_m(d^{ij})\neq X_m(g^{ij})$.
    Combining with Eq.~\ref{eq:diff:L1:I2proof}, this results in
    $\exists X_m\in S_{g^{ij}}\setminus S_{f^{ij}}$ s.t. $X_m(g^{ij})\neq X_m(\z)$,
    from which we can conclude that $g^{ij} \not\subseteq \z$.

    \qed
  \end{enumerate}

    We proceed now to prove the left-to-right implication of
    Eq.~\ref{eq:diff:L1implication}. For that, it suffices to consider the
    l.h.s.~without terms $C_1(f)$ and $\lnot C_1(g)$.
    The former can be omitted because it is a condition of the lemma that
    $C_1(f)$, while the latter can be omitted because again, it is a condition
    of the lemma that $C_1(g)$, i.e., $\lnot C_1(g)$ is false, so for the disjunction to be true it must be that $\p{g} \not\subseteq \z$. Finally, we already showed above that $C_1(d)$.
        It thus suffices to prove that:
    \[
      \p{f}\subseteq \z \land \p{g} \not\subseteq \z \implies
        \exists d \in \DB, \p{d} \subseteq \z,
    \]

    that is, given the condition on the l.h.s. for any arbitrary $\z$, there is some $d\in \DB$ for which the r.h.s. is satisfied, i.e., $\p{d} \subseteq \z$.

    We start by reinterpreting the l.h.s.:

    $(f^{ij}\subseteq\z):$
    \begin{equation}
    (f^{ij}\subseteq\z): \forall X_m\in S_{f^{ij}}, X_m(f^{ij})=X_m(\z).\label{eq:diff:L1l2r1}    \end{equation}

                    $(g^{ij}\not\subseteq\z):$

    Since $S_{\z}=V\setminus \{X_i,X_j\}$ and $S_g^{ij}\subseteq S_{\z}$,
    then  $g^{ij}\not\subseteq \z$ can only occur because some assignment in
    $g^{ij}$ takes a value that is different from $\z$, i.e.,

    \begin{equation}
        \exists X_m\in S_{g^{ij}}\text{ s.t. } X_m(g^{ij})\neq X_m(\z). \label{eq:diff:L1l2r2b}     \end{equation}

    Nevertheless, by Eq.~\ref{eq:diff:L1l2r1} and $C_2(f,g)$, this cannot be the case for
    all $X_m\in S_{g^{ij}}\cap S_{f^{ij}}$ so Eq.~\ref{eq:diff:L1l2r2b} can be
    re-expressed as

    \begin{equation}
        \exists X_m\in S_{g^{ij}}\setminus S_{f^{ij}}\text{ s.t. }  X_m(g^{ij})\neq X_m(\z). \label{eq:diff:L1l2r2c}     \end{equation}

    Finally, since $C_1(f)$ holds, both $X_i$ and $X_j$ are in $S_f$,
    so $S_{g^{ij}}\setminus S_{f^{ij}}= Sg \setminus S_f$, i.e.,
    Eq.~\ref{eq:diff:L1l2r2c} becomes

    \begin{equation}
        \exists X_m\in S_{g}\setminus S_{f}, X_m(g^{ij})\neq X_m(\z). \label{eq:diff:L1l2r2d}     \end{equation}

    Let $M$ denote those $X_m$ that satisfy Eq.~\ref{eq:diff:L1l2r2d}, i.e.,

    \begin{equation}
        \forall X_m\in M, X_m(g^{ij})\neq X_m(\z). \label{eq:diff:L1l2r2e}     \end{equation}

    We proceed by proposing some feature $d$ defined over $S_f\cup S_g$ that
    satisfies $d^{ij}\subseteq \z$, .i.e,

    \begin{equation}
        \forall X_m\in S_d^{ij}=S_f\cup S_g, ~X_m(d)= X_m(\z),\label{eq:diff:L1l2r2f}     \end{equation}

    and prove that $d\in \DB$.
    For that, we prove that $d$ satisfies all three conditions in the
    definition of any $d\in \DB$:
    \begin{enumerate}
        \item The first condition, $S_d=S_f\cup S_g$ is satisfied by the definition of $d$.
        \item From Eqs.~\ref{eq:diff:L1l2r1}~and~\ref{eq:diff:L1l2r2f}, and the fact that $S_{f^{ij}}\subseteq S_d$ (by definition of $d$), we have that
        $\forall X_m \in S_{f^{ij}}, X_m(f^{ij})=X_m(\z)=X_m(d)$,
        satisfying the second condition of $\DB$ for $X_m(f^{ij})$ and
        $X_m(d)$.
        \item From Eqs.~\ref{eq:diff:L1l2r2e}~and~\ref{eq:diff:L1l2r2f},
        and the fact that $M\subseteq S_g\setminus S_f$ (by definition of $M$),
        and $S_g\setminus S_f \subseteq S_d$ (by definition of $d$),
        and consequently $M\subseteq S_d$, we have that

    \begin{equation*}
        \forall X_m\in M, X_m(g^{ij})\neq X_m(\z)=X_m(d),\label{eq:diff:L1l2r2g}     \end{equation*}

        then $\forall X_m\in M, X_m(d)\neq X_m(g^{ij})$, satisfying the third condition of $\DB$.

    \end{enumerate}

\end{proof}

\begin{lemma}\label{lemma:L2}
    \begin{equation}
        \bigcup_{d\in \DB}{\Xijof{d}} = \bigcup_{d\in \DE}{\Xijof{d}}.\label{eq:diff:l2}
    \end{equation}
    for $\DE$ defined by Eq.~\ref{eq:DE} and $\DB$ defined by Eq.~\ref{eq:DB}.
\end{lemma}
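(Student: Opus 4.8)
The plan is to establish the set identity by characterising, for an arbitrary fully-contextualized context $\z \in \Xij$, exactly when $\z$ lies in each of the two unions, and then observing that the two conditions coincide. Throughout I work under the standing hypotheses of the third case of Eq.~\ref{eq:diff:D} — in particular $C_1(f)$ and $C_1(g)$ hold, so $X_i,X_j \in \scope{f}\cap\scope{g}$ and therefore $\scope{g}\setminus\scope{f} = S_{\p{g}}\setminus S_{\p{f}}$, and likewise $S_{\p{g}}^{\leq k}\setminus\scope{f} = S_{\p{g}}^{\leq k}\setminus S_{\p{f}}$ for every $k$. I also identify each feature with the partial assignment it induces on $V\setminus\{i,j\}$, since the values assigned to $X_i,X_j$ are irrelevant to $\Xij(\cdot)$; with this convention ~\autoref{aux:XijSubset} reads $\z \in \Xijof{h} \iff \p{h}\subseteq \z$ whenever $X_i,X_j\in\scope{h}$, where $\subseteq$ means ``$\z$ extends $\p{h}$''.

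First I would unfold the left-hand side. By ~\autoref{al:Funion} and the characterisation above, $\z\in\bigcup_{d\in\DB}\Xijof{d}$ iff some $d\in\DB$ satisfies $\p{d}\subseteq\z$. Every such $d$ has scope $\scope{f}\cup\scope{g}$, agrees with $f$ on $\scope{f}$, and disagrees with $g$ at some $X_k\in\scope{g}\setminus\scope{f}$; since that $X_k$ lies in $S_{\p{d}}$, the relation $\p{d}\subseteq\z$ forces $\z$ to agree with $f$ on $S_{\p{f}}$ and to satisfy $X_k(\z)\neq X_k(g)$. Conversely, if $\z$ agrees with $f$ on $S_{\p{f}}$ and $X_k(\z)\neq X_k(g)$ for some $X_k\in S_{\p{g}}\setminus S_{\p{f}}$, the feature obtained by copying $f$ on $\scope{f}$ and copying $\z$ on $\scope{g}\setminus\scope{f}$ is easily checked to satisfy all three clauses defining $\DB$ and to be extended by $\z$. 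Hence $\z\in\bigcup_{d\in\DB}\Xijof{d}$ iff $\p{f}\subseteq\z$ and $X_k(\z)\neq X_k(g)$ for at least one $X_k\in S_{\p{g}}\setminus S_{\p{f}}$.

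Next I would carry out the analogous unfolding of $\bigcup_{d\in\DE}\Xijof{d}=\bigcup_{k}\bigcup_{d\in{\DE}_{(k)}}\Xijof{d}$. A feature $d\in{\DE}_{(k)}$ has scope $\scope{f}\cup S_{\p{g}}^{\leq k}$, agrees with $f$ on $\scope{f}$, agrees with $g$ at every $X_m\in S_{\p{g}}\setminus S_{\p{f}}$ with $m<k$, and disagrees with $g$ at $X_k$; repeating the argument above, such a $d$ with $\p{d}\subseteq\z$ exists iff $\z$ agrees with $f$ on $S_{\p{f}}$, agrees with $g$ at all $X_m\in S_{\p{g}}\setminus S_{\p{f}}$ with $m<k$, and $X_k(\z)\neq X_k(g)$ — that is, iff $X_k$ is the least-index variable of $S_{\p{g}}\setminus S_{\p{f}}$ on which $\z$ disagrees with $g$. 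So $\z\in\bigcup_{d\in\DE}\Xijof{d}$ iff $\p{f}\subseteq\z$ and there is a \emph{first} disagreement (in index order) between $\z$ and $g$ among the variables of $S_{\p{g}}\setminus S_{\p{f}}$.

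To finish, I would observe that the two resulting conditions are equivalent: the set of indices in $S_{\p{g}}\setminus S_{\p{f}}$ at which $\z$ disagrees with $g$ is a finite subset of $V$, hence it is nonempty iff it admits a least element, which is precisely the witness required on the $\DE$ side. This yields $\bigcup_{d\in\DB}\Xijof{d}=\bigcup_{d\in\DE}\Xijof{d}$. I expect the only real work to be bookkeeping: verifying that the witness features constructed in both inclusions genuinely satisfy \emph{all} the defining clauses of $\DB$ and ${\DE}_{(k)}$ — in particular that the ``$\forall m<k$'' clause of ${\DE}_{(k)}$ can always be met simply by copying $g$'s values on the relevant earlier indices — and keeping straight the scope identities $\scope{g}\setminus\scope{f}=S_{\p{g}}\setminus S_{\p{f}}$ and $S_{\p{g}}^{\leq k}\setminus\scope{f}=S_{\p{g}}^{\leq k}\setminus S_{\p{f}}$, both of which rest on $X_i,X_j$ belonging to both scopes.
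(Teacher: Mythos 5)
Your proposal is correct, and it reaches the identity by a genuinely different route than the paper. The paper's proof stays at the level of the features themselves: after translating $\z\in\Xijof{d}$ into $\p{d}\subseteq\z$ via \autoref{aux:XijSubset}, it matches each feature of one set with a feature of the other related by the extension order (invoking \autoref{aux:subset} to transfer coverage, and ``completing'' a $d'\in\DE$ with values copied from $\z$ to land in $\DB$). You instead compute, for each union, the exact set of contexts it covers, and find that both equal $\left\{\,\z \in \Xij \ \middle\vert\ \p{f}\subseteq\z \ \land\ \exists X_k\in S_{\p{g}}\setminus S_{\p{f}},\ X_k(\z)\neq X_k(g)\,\right\}$, the only difference being that $\DE$ additionally records the \emph{position of the first disagreement}; the equivalence then reduces to the fact that a nonempty finite set of indices has a least element. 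Your bookkeeping is sound: the witness constructions do satisfy all defining clauses of $\DB$ and ${\DE}_{(k)}$ (the ``$\forall m<k$'' clause is met by copying $g$, which is consistent with $\z$ precisely because $X_k$ is the \emph{first} disagreement), and the scope identities you flag do hold under $C_1(f)\land C_1(g)$. What your route buys is transparency about what $\DE$ is for -- it stratifies the difference set by first-disagreement index -- and as a bonus the mutual exclusivity asserted in \autoref{lemma:L4} falls out of the same characterization for free, since two features of $\DE$ either disagree on where the first mismatch occurs or on its value, and no single $\z$ can realize both. The paper's feature-level argument, by contrast, stays stylistically consistent with the syntactic manipulations used elsewhere in the appendix but is harder to follow; your version could serve as a cleaner replacement.
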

\begin{proof}
    For arbitrary $\z$, Eq.~\ref{eq:diff:l2} is equivalent to

    \begin{equation*}
         \exists d\in\DB, ~\z \in \Xijof{d} \iff \exists d'\in\DE, \z \in \Xijof{d'}.  \\
     \end{equation*}

    Since $C_1(f)$,     $S_f \subseteq S_d$ and $S_f \subseteq S_{d'}$,
    we have that both $X_i$ and $X_j$ are in both $\scope{d}$ and $\scope{d'}$, and therefore it holds that $C_1(d)$ and $C_1(d')$. We can then apply  Eq.~\ref{eq:XijSubset} to the above to obtain
    \begin{equation}
         \exists d\in\DB, ~d\subseteq\z \iff d\in\DE, d\subseteq\z . \label{eq:diff:L2}
    \end{equation}

        For the left-to-right implication, by ~\autoref{aux:XijSubset},
    it suffices to prove that

    \begin{equation}
        \exists d\in\DB \land \exists d'\in\DE\text{ s.t. }d'\subseteq d,\label{eq:diff:L2:dsubset}
    \end{equation}

    for $a=d', b=d$.
    Any $d'\in\DE$ and $d\in\DB$ satisfy $S_f\subseteq S_{d'}$ and $S_f\subseteq S_{d}$, respectively, and match the values of $f^{ij}$,
    so to prove $d' \subseteq d$ we can focus on the values for $g$.
        For that, we start noticing that every feature $d' \in \DE$ is defined over some subset of $S_g\setminus S_f$ (dependent on $k$), over which it is guaranteed to have an assignment different from that of $g$ at $k$, i.e.,  $X_k(d')\neq X_k(g)$. Then, any $d$ satisfying these assignments for these variables in
    $S_{d'}\setminus S_f\ = S_{\p{d}}^{\leq k} \setminus S_f$, and any value for the
    remaining assignments in $S_g\setminus S_f$ would satisfy that $\exists X_m \in S_g\setminus S_f, X_m(d)\neq X_m(g)$, and thus is in $\DB$.

        For the right-to-left implication of Eq.~\ref{eq:diff:L2},
    we have that, by Eq.~\ref{eq:diff:L2:dsubset}, for every $d'\in \DE$ there
    exists a $d\in \DB$ such that $d'\subseteq d$.  It suffices then to complete $d'$ with assignments for the remaining
    variables with values matching $\z$, i.e.,

    \begin{equation}
      \forall X_k \in S_{\p{g}}^{>k}\setminus S_f, X_k(d)=X_k(\z).\label{eq:diff:L2:complete}
    \end{equation}

    Then, by Eq.~\ref{eq:diff:L2:complete} and the fact that $d'\subseteq \z$, we conclude that $d\subseteq \z$.

\end{proof}

\begin{lemma}\label{lemma:L4}
    The FC contexts for each feature in $\DE$ are mutually exclusive:
    \begin{equation}
        \forall d,d' \in \DE, \Xijof{d} \cap \Xijof{d'} = \emptyset
    \end{equation}
\end{lemma}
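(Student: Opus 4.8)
The plan is to reduce the disjointness of $\Xijof{d}$ and $\Xijof{d'}$ (for distinct $d,d'\in\DE$) to the existence of a single variable on which the ``trimmed'' features $\p{d}$ and $\p{d'}$ carry incompatible assignments. Recall from \autoref{aux:XijSubset} that $\z\in\Xijof{d}$ forces $\p{d}\subseteq\z$, and likewise $\z\in\Xijof{d'}$ forces $\p{d'}\subseteq\z$; hence if some variable $X_k$ lies in both $S_{\p{d}}$ and $S_{\p{d'}}$ with $X_k(d)\neq X_k(d')$, no FC context $\z$ can satisfy both inclusions, so $\Xijof{d}\cap\Xijof{d'}=\emptyset$. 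The entire proof therefore amounts to exhibiting such a witness variable. Throughout I would keep the standing hypotheses of the third case of \autoref{lemma:singlediff}, in particular $C_1(f)$ and $C_1(g)$, so that $X_i,X_j\in S_f\cap S_g$ and consequently every index $k$ occurring in the union $\DE=\bigcup_{X_k\in S_g\setminus S_f}{\DE}_{(k)}$ is different from $i$ and $j$.

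Write $d\in{\DE}_{(k)}$ and $d'\in{\DE}_{(k')}$ for indices $X_k,X_{k'}\in S_g\setminus S_f$, and split into two cases. In the case $k=k'$, both features share the scope $S_f\cup S_{\p{g}}^{\leq k}$, and reading off the assignment clauses in the definition of ${\DE}_{(k)}$ shows that every variable of this common scope other than $X_k$ has a value that is completely determined: it matches $f$ on $S_f$, and matches $g$ on the variables of $S_{\p{g}}^{\leq k}\setminus S_f$ with index strictly less than $k$, while $S_{\p{g}}^{\leq k}$ contains no index exceeding $k$. Hence $d\neq d'$ is possible only if they disagree at $X_k$; since $X_k\in S_{\p{g}}^{\leq k}$ and $X_k\neq X_i,X_j$, we have $X_k\in S_{\p{d}}\cap S_{\p{d'}}$, so $X_k$ is the required witness.

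In the case $k\neq k'$, assume without loss of generality $k<k'$ in the fixed ordering of $V$ used in the definition. Then $X_k\in S_{\p{g}}^{\leq k}\subseteq S_{\p{g}}^{\leq k'}$ and $X_k\notin S_f$ with $k<k'$, so the clause ``$\forall m<k'$ s.t.\ $X_m\in S_{\p{g}}^{\leq k'}\setminus S_f$, $X_m(d')=X_m(g)$'' in the definition of ${\DE}_{(k')}$ forces $X_k(d')=X_k(g)$; on the other hand, the last clause in the definition of ${\DE}_{(k)}$ gives $X_k(d)\neq X_k(g)$. Hence $X_k(d)\neq X_k(d')$, and since $X_k\in S_{\p{g}}^{\leq k}$ with $X_k\neq X_i,X_j$, it lies in both $S_{\p{d}}$ and $S_{\p{d'}}$, so it again serves as the witness. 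Applying the reduction of the first paragraph to the witness found in each case yields $\Xijof{d}\cap\Xijof{d'}=\emptyset$.

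The step I expect to require the most care is the case $k=k'$: one must argue precisely that inside a single family ${\DE}_{(k)}$ a feature is pinned down by its value at $X_k$ alone, which hinges on the exact reading of the clause ``$\forall m<k$ s.t.\ $X_m\in S_{\p{g}}^{\leq k}\setminus S_f$'' together with the observation that $S_{\p{g}}^{\leq k}$ never contains an index larger than $k$. It is also worth noting along the way that the families ${\DE}_{(k)}$ are themselves pairwise disjoint, since exactly the argument of the case $k\neq k'$ shows a feature cannot belong to both ${\DE}_{(k)}$ and ${\DE}_{(k')}$ when $k\neq k'$; this makes the two-case split exhaustive without ambiguity. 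Everything else is a routine appeal to \autoref{aux:XijSubset}.
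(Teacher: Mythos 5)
Your proposal is correct and follows essentially the same route as the paper's own proof: disjointness within a single ${\DE}_{(k)}$ because distinct features there must differ at $X_k$, and disjointness across $k<k'$ because features in ${\DE}_{(k)}$ have $X_k(d)\neq X_k(g)$ while those in ${\DE}_{(k')}$ are forced to have $X_k(d')=X_k(g)$. You merely make explicit (via \autoref{aux:XijSubset}) the reduction to a witness variable with incompatible assignments, which the paper leaves implicit.
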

\begin{proof}
    Given the definition of $\DE$ in Eq.~\ref{eq:DEk}, all features $d$ in some
    ${\DE}_{(k)}$ have different values at $X_k$ among each other, by which they cannot have FC contexts in common.
    Additionally, for another $k'$ such that $k' > k$, not only are the FC
    contexts of the features mutually exclusive among each other, but they are
    also different from all features for the previous index $k$, since these
    features take values different from $X_k(g)$ at $X_k$, while features at $k'$ have the value $X_k(g)$ at $X_k$.

\end{proof}

\section{Auxiliary lemmas}\label{app:auxiliary}

 \begin{auxlemma}  \label{aux:XijSubset}
   Let $F$ be the log-linear model of some distribution over $X_V$, $f \in F$ be
   some feature in $F$, let $X_i, X_j \in S_f$ be two different variables in the scope of $f$,
   $\z\in \Xij$ be some FC context, and $\p{f}$ denote the feature composed of the same assignments in $f$ except for those of $X_i$ and $X_j$. Then,

  \begin{equation}  \label{eq:XijSubset}
      \z \in \Xij(f) \iff  \p{f} \subseteq \z  \land X_i,X_j \in \scope{f},
  \end{equation}
  where the subset operation $\p{f} \subseteq \z$ runs over assignments, that is, it reads that every variable in feature $f$ other than $X_i$ and $X_j$ is assigned to the same value in both $\p{f}$ and $\z$.
   \end{auxlemma}
%

\begin{proof}
    This auxiliary lemma is a straight rewrite of known facts from the theory of log-linear models. From the definition of $\Xij(f)$ in Eq.~\ref{eq:XijF}, $\z \in \Xij(f)$ whenever $\cd{X_i}{X_j}{\z}_F$. According to [Chapter 4, \cite{koller09}], any distribution structured by some log-linear model $F$, holds a direct interaction (a dependency) among any pair of variables $X_i$ and $X_j$, whenever they appear together in at least one feature $f \in F$.
        Moreover, the interaction still holds when conditioned on some partial assignment of variables, when there is at least one feature $f \in F$ satisfying that all its assignments match the conditioning set.
        In particular, for the case of FC conditioning sets,
    $\assertion{X_i}{X_j}{\z}$ is false (dependency) whenever each variable in
    $\z$ (which is also in the scope  $\scope{f}$) has a matching value in both.
    The auxiliary lemma is proven after noticing that $\z$ contains neither $X_i$ nor $X_j$.

\end{proof}

\begin{auxlemma} \label{al:Funion}
    The FC set $\Xijof{F}$ of a log-linear model $F$ is equivalent to the union of the FC sets $\Xijof{f}$ of each of its features $f \in F$; formally,

    \begin{equation*}
        \Xijof{F} = \bigcup_{f\in F}  \Xijof{f}.
    \end{equation*}
\end{auxlemma}

Before proceeding to the proof, we will illustrate this definition with an example:

  \begin{example}
    Let~$V=\{ 1,\dots,4\}$, $\forall X_k, val(X_k)=\{0,1\}$,
 and let $ f, f', f'' \in F$ where

 \begin{align*}
    f = & <X_1=0, X_3=0, X_4=1>,\\
    f' = & <X_2=1, X_3=0, X_4=0>,\ and\\
    f'' = & <X_1=0, X_2=0>.
  \end{align*}

  Let $i=3$ and $j=4$. Then,

  \begin{align*}
    \mathcal{X}^{34}(f) =&\ \{<X_1=0,X_2=0>,<X_1=0,X_2=1>\}\\
    \mathcal{X}^{34}(f') =&\ \{<X_1=0,X_2=1>,<X_1=1,X_2=1>\}\\
    \mathcal{X}^{34}(f'') =&\ \emptyset.
  \end{align*}

  Then, $\mathcal{X}^{34}(F)$ is the union of the sets of FC contexts $\mathcal{X}^{34}(f)$ and $\mathcal{X}^{34}(f')$, assuming $f$ and $f'$ are the only features in $F$ containing $X_3$ and $X_4$ in their scope, resulting in

  \begin{align*}
    \mathcal{X}^{34}(F) =&  \mathcal{X}^{34}(f) \cup  \mathcal{X}^{34}(f')  \\
    =&\{<X_1=0,X_2=0>,<X_1=0,X_2=1>,<X_1=1,X_2=1>\}.
  \end{align*}

\end{example}\label{ex:XF2union}

\begin{proof}
        From Eq~\ref{eq:XijF}, $ \Xij(F) \equiv \{ \z \in \Xij ~\mid ~ \cd{X_i}{X_j}{\z}_F \}$; also,
        by the same reasoning followed in ~\autoref{aux:XijSubset}, if an assertion $\assertion{X_i}{X_j}{\z}$ is false (dependency) according to some feature $f \in F$, it is false according to the complete log-linear model:
        \begin{equation*}
           \cd{X_i}{X_j}{x_u,X_W}_f  \implies \cd{X_i}{X_j}{x_u,X_W}_F.
        \end{equation*}

        Therefore,
        \begin{eqnarray*}
              \Xij(F) &\equiv& \{ \z \in \Xij ~\mid ~ \cd{X_i}{X_j}{\z}_F \} \\
                      & \equiv& \{ \z \in \Xij ~\mid ~ \vee_{f\in F} \cd{X_i}{X_j}{\z}_f \}  \\
                      & \equiv & \bigcup_{f\in F} \left\{ \z \in \Xij ~\mid ~ \cd{X_i}{X_j}{\z}_f \right\} \\
                      & \equiv & \bigcup_{f \in F} \Xijof{f}
        \end{eqnarray*}
\end{proof}

\begin{auxlemma}\label{aux:subset}
    Let $a$ and $b$ be two features such that $S_a \subseteq S_b$,
    and $a \subseteq b$, then $b \subseteq \z \implies a \subseteq \z$.
\end{auxlemma}

\begin{proof}
\begin{equation}
    a \subseteq b \implies \forall k\in S_a, X_k(a)=X_k(b),\label{eq:alsubset}
\end{equation}

then,

\begin{equation*}
\begin{array}{ll}
    b \subseteq \z&\ \implies \forall k\in S_b X_k(b)=X_k(\z) \\
    &\ \implies \forall k\in S_a X_k(b)=X_k(\z)\text{, by }S_a\subseteq S_b \\
    &\ \implies \forall k\in S_a X_k(a)=X_k(\z)\text{, by Eq.~\ref{eq:alsubset}} \\
    &\ \implies a \subseteq \z.
    \end{array}
\end{equation*}
\end{proof}

\section*{Funding}
This work was supported by CONICET (Argentinean Council for Scientific and Technological Research) [full doctoral scholarship for Jan Strappa]; and Universidad Tecnológica Nacional [grant EIUTIME0004481TC].

\section*{Acknowledgements}
We thank the reviewers for their useful comments. 
We would like to thank Dr. Federico Schl\"uter for his assistance in writing the introduction section and for having participated in the design of the method.

\section*{Data availability}

The source code used for computing the metric for the simulation presented in Section~\ref{app:kl} is available in \emph{Figshare} at \url{https://dx.doi.org/10.6084/m9.figshare.14666163}.
The implementation used for computing the KL-divergence is also available in \emph{Figshare} at \url{https://dx.doi.org/10.6084/m9.figshare.14668473}.

The log-linear models' files are the same from \cite{ederaSchluterBromberg14} and were provided by its corresponding author with permission.
These data may be shared on request to the corresponding author of the cited work.

The \emph{Libra Toolkit} is available at \href{http://libra.cs.uoregon.edu/}{http://libra.cs.uoregon.edu/}.

Other data and source code related to this simulation will be shared on reasonable request to the corresponding author.

\bibliography{escllm}

\end{document}